\theoremstyle{plain}
\theoremstyle{definition}
\newtheorem{definition}{Definition}[section]
\theoremstyle{remark}
\definecolor{darkblue}{rgb}{0.0, 0.0, 0.55}
\definecolor{figblue}{HTML}{79ADD6}
\definecolor{figorange}{HTML}{EE8227}
\definecolor{figgreen}{HTML}{3B7D23}
\newcommand{\methodname}{\texttt{CUDA}\xspace}
\providecommand{\customgenericname}{}
\newcommand{\newcustomtheorem}[2]{%
  \newenvironment{#1}[1]
  {%
   \renewcommand\customgenericname{#2}%
   \renewcommand\theinnercustomgeneric{##1}%
   \innercustomgeneric
  }
  {\endinnercustomgeneric}
}
\def\defref#1{Definition~\ref{#1}}
\def\lemref#1{Lemma~\ref{#1}}
\def\thmref#1{Theorem~\ref{#1}}
\def\figref#1{Fig.~\ref{#1}}
\def\secref#1{Sec.~\ref{#1}}
\def\eqref#1{equation~\ref{#1}}
\def\eqnref#1{Eqn.~\ref{#1}}
\def\1{\bm{1}}
\def\vc{{\bm{c}}}
\def\vv{{\bm{v}}}
\def\vx{{\bm{x}}}
\def\vy{{\bm{y}}}
\DeclareMathAlphabet{\mathsfit}{\encodingdefault}{\sfdefault}{m}{sl}
\SetMathAlphabet{\mathsfit}{bold}{\encodingdefault}{\sfdefault}{bx}{n}
\def\gL{{\mathcal{L}}}
\def\sR{{\mathbb{R}}}
\DeclareMathOperator*{\argmax}{arg\,max}
\DeclareMathOperator*{\argmin}{arg\,min}
\renewcommand{\tilde}{\widetilde}
\renewcommand{\hat}{\widehat}
\renewcommand{\frac}{\tfrac}
\icmltitlerunning{Concept-Based Unsupervised Domain Adaptation}
\begin{document}

\twocolumn[
\icmltitle{Concept-Based Unsupervised Domain Adaptation}

% It is OKAY to include author information, even for blind
% submissions: the style file will automatically remove it for you
% unless you've provided the [accepted] option to the icml2025
% package.

% List of affiliations: The first argument should be a (short)
% identifier you will use later to specify author affiliations
% Academic affiliations should list Department, University, City, Region, Country
% Industry affiliations should list Company, City, Region, Country

% You can specify symbols, otherwise they are numbered in order.
% Ideally, you should not use this facility. Affiliations will be numbered
% in order of appearance and this is the preferred way.
\icmlsetsymbol{equal}{*}

\begin{icmlauthorlist}
\icmlauthor{Xinyue Xu}{equal,hkust}
\icmlauthor{Yueying Hu}{equal,hkust}
\icmlauthor{Hui Tang}{hkust}
\icmlauthor{Yi Qin}{hkust}
\icmlauthor{Lu Mi}{gatech}
\icmlauthor{Hao Wang}{rutgers}
\icmlauthor{Xiaomeng Li}{hkust}
\end{icmlauthorlist}

\icmlaffiliation{hkust}{The Hong Kong University of Science and Technology}
\icmlaffiliation{gatech}{Georgia Institute of Technology}
\icmlaffiliation{rutgers}{Rutgers University}

\icmlcorrespondingauthor{Xiaomeng Li}{eexmli@ust.hk}

% % You may provide any keywords that you
% % find helpful for describing your paper; these are used to populate
% % the "keywords" metadata in the PDF but will not be shown in the document
% \icmlkeywords{Machine Learning, ICML}

\vskip 0.3in
]

% this must go after the closing bracket ] following \twocolumn[ ...

% This command actually creates the footnote in the first column
% listing the affiliations and the copyright notice.
% The command takes one argument, which is text to display at the start of the footnote.
% The \icmlEqualContribution command is standard text for equal contribution.
% Remove it (just {}) if you do not need this facility.

% \printAffiliationsAndNotice{}  % leave blank if no need to mention equal contribution
\printAffiliationsAndNotice{\icmlEqualContribution} % otherwise use the standard text.

\begin{abstract}

Concept Bottleneck Models (CBMs) enhance interpretability by explaining predictions through human-understandable concepts but typically assume that training and test data share the same distribution. This assumption often fails under domain shifts, leading to degraded performance and poor generalization. To address these limitations and improve the robustness of CBMs, we propose the \textbf{Concept-based Unsupervised Domain Adaptation (CUDA)} framework. CUDA is designed to: (1) align concept representations across domains using adversarial training, (2) introduce a relaxation threshold to allow minor domain-specific differences in concept distributions, thereby preventing performance drop due to over-constraints of these distributions, (3) infer concepts directly in the target domain without requiring labeled concept data, enabling CBMs to adapt to diverse domains, and (4) integrate concept learning into conventional domain adaptation (DA) with theoretical guarantees, improving interpretability and establishing new benchmarks for DA. Experiments demonstrate that our approach significantly outperforms the state-of-the-art CBM and DA methods on real-world datasets.

\end{abstract}

\section{Introduction}

Black-box models often lack interpretability, making them difficult to trust in high-stakes scenarios. Concept Bottleneck Models (CBMs)~\citep{cbm, ghorbani2019towards} tackle this interpretability issue by using human-understandable concepts. These models first predict concepts from the input data and then use concepts to predict the final label, thereby improving their interpretability, e.g., predicting concepts ``black eyes'' and ``solid belly'' to classify and interpret the bird species ``Sooty Albatross''.
This also allows experts to understand misclassifications and make necessary interventions when needed~\cite{abid2022meaningfully}. 
However, existing CBMs typically assume that the training and test data share the same distribution, which limits their effectiveness in real-world applications where domain shifts between training and test sets are common. For example, methods such as CBMs \citep{cbm} and Concept Embedding Models~\citep{cem} demonstrate a significant drop in performance when tested under domain shift conditions. These models achieve only around 66\% accuracy under background shifts, a notable drop compared to their 80\% accuracy on test sets that align with the training distribution, as observed on the CUB dataset \citep{wah2011caltech} (\secref{sec:exp}). Despite these findings, the challenge of designing interpretable models capable of handling real-world domain shifts remains largely underexplored.

\begin{figure}[!t]
    \centering
    \includegraphics[width=\linewidth]{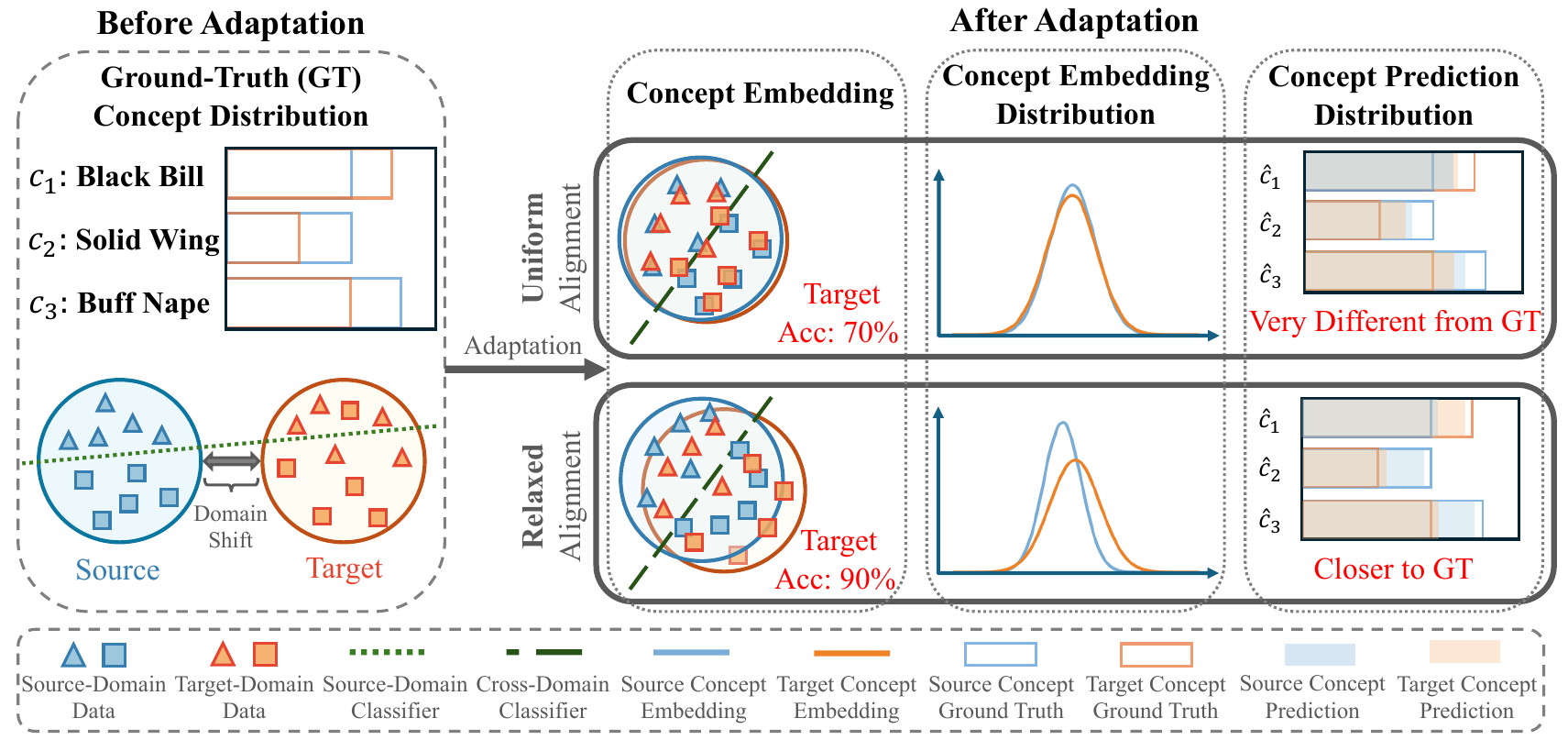} 
    \vskip -0.3cm
    \caption{Illustration of our key idea. 
    % Triangles and squares represent different classes. 
    \textbf{Left:} Ground-truth (GT) concept distributions (for each concept) (\textbf{top}) and data distributions (\textbf{bottom}). \textbf{Right:} Uniform alignment (\textbf{top}) and relaxed alignment (\textbf{bottom}) after adaptation. 
    Our relaxed alignment allows for greater differences between source and target concept distributions; such flexibility leads to predicted concept distributions closer to the ground truth and therefore higher final classification accuracy. 
    % As the flexibility allows for greater differences between source and target distributions, relaxed curves exhibit a more aligned distribution to GT source and target domains compared to the uniform curves.
    } 
    \vskip -0.6cm
    \label{fig:motivation}
\end{figure}

A straightforward approach is to combine CBMs with Domain Adaptation (DA)~\citep{Ben-David_Blitzer_Crammer_Kulesza_Pereira_Vaughan_2010, ganin2016domain}, which tackles domain shifts by utilizing labeled data from source domains alongside unlabeled (or sparsely labeled) data from target domains.
Specifically, a naive combination of CBMs and DA would simply add concept learning into DA models. Unfortunately, this method performs poorly (more results in Appendix~\ref{app:experimental_detials}) for two reasons.
First, it enforces separate class-wise and concept-wise alignment, failing to unify them into a single feature space, limiting both interpretability and generalization. Second, existing DA methods assume uniform (perfect) alignment between source and target concepts, overlooking domain-specific variations that are essential for CBMs to capture meaningful and interpretable concepts. 
As shown in \figref{fig:motivation} (upper-right), while uniform (perfect) alignment can strictly align source and target concepts, it overlooks the inherent differences between concepts across domains. Such over-constraints lead to significant performance drops. 

One of our key ideas is therefore to introduce a degree of relaxation. As shown in \figref{fig:motivation} (bottom-right), our relaxed alignment allows for greater differences between source and target concept distributions, e.g., allowing the proportion of the concept ``Primary Color: Brown'' to be $19\%$ in the source domain and $17\%$ in the targe domain for bird classification; such flexibility leads to predicted concept distributions closer to the ground truth and therefore higher final classification accuracy. 
Specifically, we propose a novel \textbf{Concept-based Unsupervised Domain Adaptation (CUDA)} framework, a simple yet effective approach with strong generalization capabilities. 
To achieve this, we introduce a novel relaxed uniform alignment loss that adapts more flexibly across domains. This approach enables the learning of domain-invariant concept embeddings while effectively preserving domain-specific variations.
% Our contributions are:
We summarize our contributions as follows:
\begin{itemize}[nosep,leftmargin=18pt]
    \item  We provide the first generalization error bound for CBMs, with theoretical analysis on how concept embeddings can be utilized to align source and target distributions in DA.
    \item Inspired by the theoretical analysis, we propose the first general framework for concept-based DA, providing both cross-domain generalization and concept-based interpretability.  %Our methoto align concept embeddings across domains and allow domain-specific variations on concepts through relaxation. 
    \item We enhance the generalization ability of CBMs and eliminate the need for labeled concept data and retraining on the target domain, enabling adaptation to diverse domains.
    \item Experiments on real-world datasets show that our method significantly outperforms state-of-the-art CBM and DA models, establishing new benchmarks for concept-based domain adaptation.
\end{itemize}

\section{Related Work}

\textbf{Concept Bottleneck Models (CBMs)} \citep{cbm} use bottleneck models to map inputs into the concept space and make predictions based on the extracted concepts. Concept Embedding Models (CEMs) \citep{cem} improve performance by using a weighted mixture of positive and negative embeddings for each concept. Energy-based Concept Bottleneck Models (ECBMs) \citep{ecbm} unify prediction, concept correction, and interpretation as conditional probabilities under a joint energy formulation. Post-hoc Concept Bottleneck Models (PCBMs) \citep{pcbm} employ a post-hoc explanation model with residual fitting, storing Concept Activation Vectors (CAVs) \citep{kim2018interpretability} in a concept bank, which eliminates the need for retraining on target domains. DISC \citep{wu2023discover} complements this by building a comprehensive concept bank that covers potential spurious concept candidates.
CONDA \citep{conda} further extends PCBMs by performing test-time adaptation using pseudo-labels generated by foundation models. 
Our approach combines the advantages of these methods: it requires neither retraining nor concept labels in the target domain, while retaining the complete interpretability of the original concepts. Unlike PCBMs and CONDA, our method supports direct evaluation of concept learning performance, ensuring both interpretability and significantly improved performance in the target domain.

% \textbf{CEMs} 
% CEMs represent each concept $c_k$ with a positive embedding $\hat{\boldsymbol{v}}_k^{(+)}$ and a negative embedding $\hat{\boldsymbol{v}}_k^{(-)}$. 
% A pretrained backbone neural network $\psi: \mathcal{X} \rightarrow \mathcal{Z}$ is used to extract the features $\boldsymbol{z} \in \mathcal{Z}$ from the input $\boldsymbol{x} \in \mathcal{X}$. Then CEM feeds $\boldsymbol{z}$ into two different MLPs $\phi_k^{(+)}$ and $\phi_k^{(-)}$ to learn the positive embedding $\boldsymbol{v}_k^{(+)} = \phi_k^{(+)}(\boldsymbol{z})$ and negative embedding $\boldsymbol{v}_k^{(-)} = \phi_k^{(-)}(\boldsymbol{z})$, respectively. By plugging both positive embedding $\boldsymbol{v}_k^{(+)}$ and negative embedding $\boldsymbol{v}_k^{(-)}$ into a learnable and differentiable scoring function $s$, CEM learns the the predicted concept probability $\hat{c}_k \triangleq s([\boldsymbol{v}_k^{(+)}, \boldsymbol{v}_k^{(-)}]^\mathsf{T})$ and align it with the ground-truth concept $c_k$. The final concept embedding $\boldsymbol{v}_k$ is the linear combination of the positive embedding $\boldsymbol{v}_k^{(+)}$ and the negative embedding $\boldsymbol{v}_k^{(-)}$, weighted by the predicted concept probability $\hat{c}_k$, i.e., $\boldsymbol{v}_k = \hat{c}_k \boldsymbol{v}_k^{(+)} + (1 - \hat{c}_k) \boldsymbol{v}_k^{(-)}$. Finally, all $\boldsymbol{v}_k$ are concatenated to form the bottleneck $\boldsymbol{v} = [\boldsymbol{v}_1^{\mathsf{T}}, \dots, \boldsymbol{v}_k^{\mathsf{T}}]^{\mathsf{T}}$, which will be passed to the label predictor to further conduct downstreaming classification task.

\textbf{Domain Adaptation.} In domain adaptation, the task remains the same across source and target domains, while the data distributions differ across domains~\citep{pan2009survey}. 
% Since our goal is to avoid retraining and the use of concept labels in target domains, 
Our work assumes unlabeled data in the target domain, falling under the category of unsupervised domain adaptation (UDA) \citep{beijbom2012domain}. 
% study falls under the category of unsupervised domain adaptation (UDA) \citep{beijbom2012domain}. UDA has been extensively studied over the past decade.  
Existing UDA methods primarily focus on learning domain-invariant features, enabling a classifier trained on source to be applied to target data. These methods can be broadly categorized into three adaptation paradigms: input-level \citep{GTA,DM-ADA,CyCADA}, feature-level \citep{ganin2016domain,mcd,gh++}, and output-level \citep{SymNets,srdc,utep}.  
Input-level adaptation stylizes data (e.g., images) from one domain to match the style of another. This involves generating source-like target data as regularization \citep{GTA,DM-ADA} or target-like source data as training data \citep{CyCADA}, often using GANs \citep{gans}. Feature-level adaptation minimizes feature distribution discrepancies between domains \citep{dan} or employs adversarial training at the domain \citep{ganin2016domain} or class levels \citep{mcd,gh++}. Output-level adaptation focuses on learning target-discriminative features through self-training with pseudo-labels \citep{SymNets,srdc,utep}. 
None of the methods above provide concept-level interpretability. 
In contrast, our approach, for the first time, introduces the concept-level perspective for adaptation. By leveraging concept learning, we bridge domain discrepancies while achieving concept-based interpretable UDA. % for the first time.  

%Input adaptation focuses on cross-domain image generation. 
%Feature adaptation often employs adversarial training. 
%Output adaptation mainly learns target-discriminative features by self-training with pseudo labels. 

\section{Methodology} \label{section: Methodology}
In this section, 
% we formalize the problem of concept-based unsupervised domain adaptation. 
we begin by analyzing the generalization error bound for CBMs 
% to provide theoretical insights into the proposed framework. 
and then discuss our proposed method inspired by the analysis. 
% , which include a relaxed alignment mechanism designed to address discrepancies of concepts between domains while preserving domain-specific variations. 
A detailed theoretical analysis is provided in Sec.~\ref{sec:Theoretical_Analysis}.

\textbf{Problem Setting and Notations.} 
We consider the concept-based UDA setting with $Q$ classes and $K$ concepts. 
The input, label, and concepts are denoted as $\boldsymbol{x} \in \mathcal{X}$, $\boldsymbol{y} \in \mathcal{Y} \subset \{0,1\}^Q$, and $\boldsymbol{c} \in \mathcal{C} = \{0, 1\}^K$, respectively; note that $\mathcal{Y}$ represents the space of $Q$-dimensional one-hot vectors while $\mathcal{C}$ does not. 
We use discrete domain indices~\cite{CIDA} $u=0$ and $u=1$ to denote source and target domains, respectively. 
Given the labeled data $\left\{\left(\boldsymbol{x}_i^s, \boldsymbol{y}_i^s, \boldsymbol{c}_i^s\right)\right\}_{i=1}^n$ from source domain 
($u = 0$), 
and unlabeled data $\left\{\boldsymbol{x}_i^t \right\}_{i=1}^m$ from target domain
($u = 1$), 
the goal is to accurately predict both the classification labels $\left\{\boldsymbol{y}_i^t\right\}_{i=1}^m$ and the unlabeled concepts $\left\{\boldsymbol{c}_i^t\right\}_{i=1}^m$ in the target domain.

\subsection{Generalization Error Bound for CBMs}
\label{sec:geb}
Previous works on CBMs have primarily been evaluated on background shift tasks \citep{cbm}, but they lack theoretical analysis of the generalization error bound. To address this limitation and provide deeper insights into our proposed method, we begin by analyzing the generalization error bound for CBMs. Although our primary focus is on binary classification, our framework can extend to multi-class classification following \citet{zhang2019bridging,zhang2020unsupervised}, which we leave for future work.

\textbf{Generalization Bound without Concept Terms.} 
Building on the framework established in \citet{ben2006analysis,Ben-David_Blitzer_Crammer_Kulesza_Pereira_Vaughan_2010}, we formalize the data generation process for both source domain and target domain using marginal (data) distribution and underlying labeling function pairs, denoted as $\left\langle\mathcal{D}_S, f_S\right\rangle$ for the source domain and $\left\langle\mathcal{D}_T, f_T\right\rangle$ for the target domain. Here, $\mathcal{D}_S$ and $\mathcal{D}_T$ denote the marginal distributions over the input space $\mathcal{X}$, while $f_S: \mathcal{X} \rightarrow [0,1]$ and $f_T: \mathcal{X} \rightarrow [0,1]$ represent the labeling functions that assign the probability of an instance being classified as label $1$ in the source and target domains, respectively.
We adopt a concept embedding encoder $E: \mathcal{X} \rightarrow \mathcal{V} \subset \sR^{J}$, 
a function which maps inputs to concept embeddings. This induces distributions $\tilde{D}_S$ and $\tilde{D}_T$ over the concept embedding space $\mathcal{V}$, as well as corresponding labeling functions:
\begin{equation*}
    \begin{aligned}
        \tilde{f}_S(\boldsymbol{v})  & \triangleq  \mathbb{E}_{\boldsymbol{x} \sim \mathcal{D}_S}[f_S(\boldsymbol{x}) \mid E(\boldsymbol{x})=\boldsymbol{v}],\\
        \tilde{f}_T(\boldsymbol{v})  & \triangleq  \mathbb{E}_{\boldsymbol{x} \sim  \mathcal{D}_T}[f_T(\boldsymbol{x}) \mid E(\boldsymbol{x})=\boldsymbol{v}].
    \end{aligned}
\end{equation*}
We define a hypothesis $h: \mathcal{V} \rightarrow [0,1]$ as a predictor operating over the concept embedding space $\mathcal{V}$. For any embedding $\boldsymbol{v} \in \mathcal{V}$, $h(\boldsymbol{v})$ outputs the predicted probability that the classification label is $1$. 
% The probability according to the distribution $\tilde{D}_S$ that a hypothesis $h$ disagrees with the labeling function $\tilde{f}_S$ (which can also be a hypothesis) is defined as
% \begin{equation*}
%     \begin{aligned}
%         \epsilon_S(h,\tilde{f}_S) = \mathbb{E}_{\boldsymbol{v} \sim \tilde{\mathcal{D}}_S} \left[\left| \tilde{f}_S(\boldsymbol{v}) - h(\boldsymbol{v})\right| \right],
%     \end{aligned}
% \end{equation*}
% where we use the shorthand $\epsilon_S(h) \triangleq \epsilon_S(h,\tilde{f}_S)$ referring to the source error. We use the parallel notation $\epsilon_T(h)$ for the target domain.
The error of $h$ on the source and target domains is then defined as:
\begin{equation*}
    \begin{aligned}
        \epsilon_S(h) & \triangleq \epsilon_S(h,\tilde{f}_S) = \mathbb{E}_{\boldsymbol{v} \sim \tilde{\mathcal{D}}_S} \left[\left| \tilde{f}_S(\boldsymbol{v}) - h(\boldsymbol{v})\right| \right],\\
        \epsilon_T(h) & \triangleq \epsilon_T(h,\tilde{f}_T)=\mathbb{E}_{\boldsymbol{v} \sim \tilde{\mathcal{D}}_T} \left[\left|\tilde{f}_T(\boldsymbol{v}) - h(\boldsymbol{v})\right| \right].
    \end{aligned}
\end{equation*}
For any $h \in \mathcal{H}$ with $\mathcal{H}$ as the hypothesis space, \citet{ben2006analysis,Ben-David_Blitzer_Crammer_Kulesza_Pereira_Vaughan_2010} present a theoretical upper bound on the target error $\epsilon_T(h)$:
\begin{equation} \label{eq: traditional DA target error}
    \epsilon_T(h) \leq \epsilon_S(h) + \frac{1}{2} d_{\mathcal{H}\Delta\mathcal{H}}(\tilde{D}_S, \tilde{D}_T) + \eta,
\end{equation}
where $\eta = \operatorname{min}_{h \in \mathcal{H}}\left(\epsilon_S(h) + \epsilon_T(h) \right)$ denotes the error of a joint ideal hypothesis on both source and target domains, and 
% $d_{\mathcal{H}\Delta\mathcal{H}}(\tilde{\mathcal{D}}_S, \tilde{\mathcal{D}}_T) = 2 \sup _{h \in \mathcal{H} \Delta \mathcal{H}}\left|\mathbb{P}_{\boldsymbol{v} \sim \tilde{D}_S}(h(\boldsymbol{v})=1)-\mathbb{P}_{\boldsymbol{v} \sim \tilde{D}_T}(h(\boldsymbol{v})=1)\right|$
the $\mathcal{H}\Delta\mathcal{H}$ divergence  $d_{\mathcal{H}\Delta\mathcal{H}}(\tilde{\mathcal{D}}_S, \tilde{\mathcal{D}}_T)$ represents the worst-case source-target domain discrepancy over \emph{concept embedding space} (different from~\citet{Ben-David_Blitzer_Crammer_Kulesza_Pereira_Vaughan_2010}, which is in the input space). 

\begin{figure*}[!t]
    \centering
    \includegraphics[width=\textwidth]{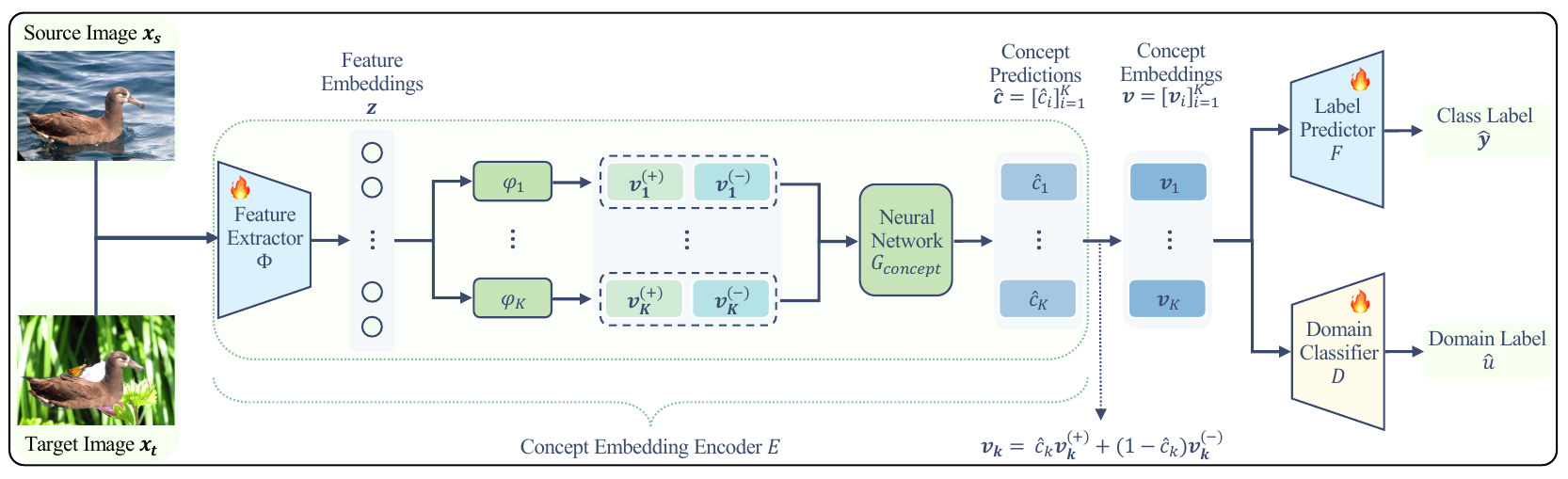} 
    \vskip -0.3cm
    \caption{Overview of our \methodname framework. The framework takes source and target domain images as inputs to first learn feature embeddings. Positive embeddings $\boldsymbol{v}_i^{(+)}$ and negative embeddings $\boldsymbol{v}_i^{(-)}$ are then derived from these feature embeddings. These are passed through the neural network $G_{concept}$ to obtain concept predictions $\hat{\boldsymbol{c}}$, which are subsequently combined to construct the final concept embeddings $\boldsymbol{v}$. During training, adversarial training is employed: the domain classifier (discriminator) is trained first, followed by the concept embedding encoder and label predictor. These two steps are alternated throughout the training process.} 
    \label{fig:main-figure}
    \vskip -0.5cm
\end{figure*}

\textbf{Concept Embeddings $\vv_i$.} 
Given that using scalar representations for concepts can significantly degrade predictive performance in realistic settings \citep{mahinpei2021promises, dominici2024causal}, we choose to use a more robust approach that constructs positive and negative semantic embeddings for each concept \citep{cem, ecbm}. Specifically, the concept embedding $\boldsymbol{v}$ is represented as a concatenation of sub-embeddings for $K$ concepts, i.e. $\boldsymbol{v} = [\boldsymbol{v}_i]_{i = 1}^K \in \sR^{J} $, where each sub-embedding $\boldsymbol{v}_i$ is a combination of its positive  and negative embeddings weighted by the predicted concept probability $\hat{c}_i$
% \begin{equation*}
%     \begin{aligned}
\begin{align}
        \boldsymbol{v}_i \triangleq \hat{c}_i \cdot \boldsymbol{v}_i^{(+)} + ( 1- \hat{c}_i ) \cdot \boldsymbol{v}_i^{(-)}, \label{eq:concept_emb}
%     \end{aligned}
% \end{equation*}
\end{align}
where $\hat{\boldsymbol{c}} = [\hat{c}_i]_{i = 1}^K \in \sR^{K}$.

\textbf{Ideal Concept Embeddings $\boldsymbol{v}_i^{\boldsymbol{c}}$.} 
Note that ground-truth concepts $\boldsymbol{c} = [c_i]_{i=1}^K \in \{0,1 \}^{K} $ are only accessible in the source domain, which allows us to define an idealized scenario for analyzing the source error. In this scenario, we replace the predicted concept probabilities $\hat{\boldsymbol{c}}$  with the ground-truth concepts $\boldsymbol{c}$ to construct the ideal concept embeddings $\boldsymbol{v}^{\boldsymbol{c}} = [\boldsymbol{v}^{\boldsymbol{c}}_i]_{i=1}^K \in
 \sR^{J}$, with each $\boldsymbol{v}_i^{\boldsymbol{c}}$ defined as:
\begin{equation*}
    \begin{aligned}
        \boldsymbol{v}_i^{\boldsymbol{c}}  \triangleq {c}_i \cdot \boldsymbol{v}_i^{(+)} + ( 1- {c}_i ) \cdot \boldsymbol{v}_i^{(-)},
    \end{aligned}
\end{equation*}
where $c_i$ denotes the ground truth of the $i$-th concept. This eliminates the noise introduced by the prediction, providing a minimal-error baseline that isolates the inherent limitations of the model itself. 

\textbf{Source Error with Ideal Concept Embeddings.} 
To quantify performance under this noise-free baseline, we define the source error for $\boldsymbol{v}^{\boldsymbol{c}}$:
\begin{equation*}
    \begin{aligned}
        \epsilon_S^{\boldsymbol{c}}(h) & \triangleq \epsilon_S^{\boldsymbol{c}}(h,\tilde{f}_S^{\boldsymbol{c}}) = \mathbb{E}_{\boldsymbol{v}^{\boldsymbol{c}} \sim \tilde{\mathcal{D}}^{\boldsymbol{c}}_S} \left[\left| \tilde{f}_S^{\boldsymbol{c}}(\boldsymbol{v}^{\boldsymbol{c}}) - h(\boldsymbol{v}^{\boldsymbol{c}})\right| \right],
    \end{aligned}
\end{equation*}
where $\tilde{\mathcal{D}}^{\boldsymbol{c}}_S$ denotes the marginal distribution over $\boldsymbol{v}^{\boldsymbol{c}}$, and $\tilde{f}^{\boldsymbol{c}}_S$ is the corresponding induced labeling function, defined as:
\begin{equation*}
    \tilde{f}^{\boldsymbol{c}}_S(\boldsymbol{v}^{\boldsymbol{c}}) \triangleq  \mathbb{E}_{\boldsymbol{x} \sim \mathcal{D}_S}[f_S(\boldsymbol{x}) \mid E(\boldsymbol{x})=\boldsymbol{v}^{\boldsymbol{c}}].
\end{equation*}
\textbf{Generalization Bound with Concept Terms.} 
With this setup, we are ready to perform a generalization error analysis of concept-based models for the binary classification task. A complete proof can be found in Appendix~\ref{subsec: Proof for Target-Domain Error Bound for Concept-Based Model}.

By comparing the noise-free source error $\epsilon_S^{\boldsymbol{c}}$ (which serves as the theoretical baseline for evaluating performance under ideal conditions) with the actual source error $\epsilon_S$ that incorporates noisy predicted probabilities, we can directly quantify the additional error introduced by prediction noise. This relationship is formalized in the following lemma.
\begin{restatable}[\textbf{Source Error with Predicted Concept Embeddings}]{lemma}{sourceriskerrorbound}
    \label{lem: disagreement between aribitary functions in hypothesis H}
    Let $\mathcal{H}$ be a hypothesis space where all hypotheses $h \in \mathcal{H}$ are $L$-Lipschitz continuous under the Euclidean norm $\| \cdot \|_2$ for some constant $L > 0$. 
    Assume that for all $\boldsymbol{v} \in \mathcal{V}$, $\| \boldsymbol{v} \|_2$ is bounded.
    % Assume for all $ \boldsymbol{v} \in \mathcal{V}$, the norm is bounded\footnote{Lemma \ref{lem: disagreement between aribitary functions in hypothesis H} holds for arbitrary norms.}. 
    Then, for any $h_1, h_2 \in \mathcal{H}$, there exists a finite constant $r>0$ such that 
    \begin{equation*}
        \epsilon_S(h_1, h_2) \leq \epsilon_S^{\boldsymbol{c}}(h_1, h_2) + r \cdot 
        \mathbb{E}_{S} \left[ \left\| \hat{\boldsymbol{c}} - \boldsymbol{c} \right\|_2 \right],
    \end{equation*}
    where $\epsilon_S(h_1, h_2) = \mathbb{E}_{\boldsymbol{v} \sim \tilde{\mathcal{D}}_S} \left[\left| h_1(\boldsymbol{v}) - h_2(\boldsymbol{v})\right| \right]$ and
    $\epsilon_S^{\boldsymbol{c}}(h_1, h_2) = \mathbb{E}_{\boldsymbol{v}^{\boldsymbol{c}} \sim \tilde{\mathcal{D}}^{\boldsymbol{c}}_S} \left[\left|h_1(\boldsymbol{v}^{\boldsymbol{c}}) - h_2(\boldsymbol{v}^{\boldsymbol{c}})\right| \right]$ are the disagreement between hypotheses $h_1$ and $h_2$ w.r.t. distributions $\tilde{D}_S$ and $\tilde{D}_S^{\boldsymbol{c}}$, respectively, and  $\mathbb{E}_S$ denotes the expectation taken over the source distribution.
    % where $\epsilon_S(h_1, h_2) = \mathbb{E}_{\boldsymbol{v} \sim \tilde{\mathcal{D}}_S} \left[\left| h_1(\boldsymbol{v}) - h_2(\boldsymbol{v})\right| \right]$,  
    % $\epsilon_S^{\boldsymbol{c}}(h_1, h_2) = \mathbb{E}_{\boldsymbol{v}^{\boldsymbol{c}} \sim \tilde{\mathcal{D}}^{\boldsymbol{c}}_S} \left[\left|h_1(\boldsymbol{v}^{\boldsymbol{c}}) - h_2(\boldsymbol{v}^{\boldsymbol{c}})\right| \right]$, 
    % and  $\mathbb{E}_S$ denotes the expectation taken over the source distribution.
\end{restatable}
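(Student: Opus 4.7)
The plan is to show the bound via a pointwise Lipschitz argument, then take a single expectation over the common source input distribution $\mathcal{D}_S$ that induces both $\tilde{\mathcal{D}}_S$ and $\tilde{\mathcal{D}}_S^{\boldsymbol{c}}$. The key observation is that both $\boldsymbol{v}$ and $\boldsymbol{v}^{\boldsymbol{c}}$ are deterministic functions of the same $\boldsymbol{x} \sim \mathcal{D}_S$ (through the encoder $E$ and, respectively, through $\hat{\boldsymbol{c}}(\boldsymbol{x})$ or $\boldsymbol{c}(\boldsymbol{x})$), so there is a natural coupling of the two distributions that makes the comparison tractable.

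The first step is to compute $\boldsymbol{v} - \boldsymbol{v}^{\boldsymbol{c}}$ from the definitions in \eqref{eq:concept_emb}: sub-embedding by sub-embedding one gets $\boldsymbol{v}_i - \boldsymbol{v}_i^{\boldsymbol{c}} = (\hat{c}_i - c_i)(\boldsymbol{v}_i^{(+)} - \boldsymbol{v}_i^{(-)})$. Using the assumption that $\|\boldsymbol{v}\|_2$ is bounded on $\mathcal{V}$ (which one can translate into a uniform bound $M$ on $\|\boldsymbol{v}_i^{(+)} - \boldsymbol{v}_i^{(-)}\|_2$ for each $i$, via the triangle inequality and the fact that the two extreme choices of $\hat{c}_i$ give rise to embeddings in $\mathcal{V}$), Cauchy--Schwarz then delivers
\begin{equation*}
\|\boldsymbol{v} - \boldsymbol{v}^{\boldsymbol{c}}\|_2 \;\leq\; M\,\|\hat{\boldsymbol{c}} - \boldsymbol{c}\|_2.
\end{equation*}

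The second step is a pointwise Lipschitz comparison of disagreements. For any $\boldsymbol{x}$, using the $L$-Lipschitz property of $h_1,h_2$ together with the triangle inequality,
\begin{equation*}
|h_1(\boldsymbol{v}) - h_2(\boldsymbol{v})| \;\leq\; |h_1(\boldsymbol{v}^{\boldsymbol{c}}) - h_2(\boldsymbol{v}^{\boldsymbol{c}})| + 2L\,\|\boldsymbol{v} - \boldsymbol{v}^{\boldsymbol{c}}\|_2.
\end{equation*}
Combining this with Step~1 and taking expectation over $\boldsymbol{x} \sim \mathcal{D}_S$ (which equals $\mathbb{E}_{\boldsymbol{v}\sim\tilde{\mathcal{D}}_S}$ on the left and $\mathbb{E}_{\boldsymbol{v}^{\boldsymbol{c}}\sim\tilde{\mathcal{D}}_S^{\boldsymbol{c}}}$ on the first term of the right by construction of the induced distributions) yields
\begin{equation*}
\epsilon_S(h_1,h_2) \;\leq\; \epsilon_S^{\boldsymbol{c}}(h_1,h_2) + 2LM\,\mathbb{E}_S\!\left[\|\hat{\boldsymbol{c}} - \boldsymbol{c}\|_2\right],
\end{equation*}
so setting $r \triangleq 2LM$ (finite by the Lipschitz and boundedness hypotheses) completes the proof.

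The main subtlety I expect is the distributional mismatch between $\tilde{\mathcal{D}}_S$ and $\tilde{\mathcal{D}}_S^{\boldsymbol{c}}$: writing the two disagreement integrals as a \emph{single} expectation over $\mathcal{D}_S$ requires viewing both $\boldsymbol{v}$ and $\boldsymbol{v}^{\boldsymbol{c}}$ as measurable functions of $\boldsymbol{x}$ and invoking the change-of-variables/pushforward identity $\mathbb{E}_{\boldsymbol{v}\sim\tilde{\mathcal{D}}_S}[g(\boldsymbol{v})] = \mathbb{E}_{\boldsymbol{x}\sim\mathcal{D}_S}[g(\boldsymbol{v}(\boldsymbol{x}))]$. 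Once this coupling is made explicit, the rest is just Lipschitz bookkeeping. The only other minor point is justifying the uniform bound $M$ on $\|\boldsymbol{v}_i^{(+)} - \boldsymbol{v}_i^{(-)}\|_2$ from the stated boundedness of $\|\boldsymbol{v}\|_2$; this is straightforward because the endpoint cases $\hat{c}_i \in \{0,1\}$ recover $\boldsymbol{v}_i^{(-)}$ and $\boldsymbol{v}_i^{(+)}$ respectively within an element of $\mathcal{V}$.
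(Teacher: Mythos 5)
Your proof is correct and follows essentially the same route as the paper's: the three-term triangle inequality split at $\boldsymbol{v}^{\boldsymbol{c}}$, the Lipschitz bound $2L\,\|\boldsymbol{v}-\boldsymbol{v}^{\boldsymbol{c}}\|_2$, the identity $\boldsymbol{v}_i - \boldsymbol{v}_i^{\boldsymbol{c}} = (\hat{c}_i - c_i)(\boldsymbol{v}_i^{(+)} - \boldsymbol{v}_i^{(-)})$ with the uniform bound $M$, and the final constant $r = 2LM$. Your explicit coupling of $\tilde{\mathcal{D}}_S$ and $\tilde{\mathcal{D}}_S^{\boldsymbol{c}}$ via the common $\boldsymbol{x}\sim\mathcal{D}_S$ is a worthwhile clarification of a step the paper handles only implicitly through its joint-expectation notation.
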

Lemma~\ref{lem: disagreement between aribitary functions in hypothesis H} quantitatively connects the concept prediction performance to the source error. 
Specifically, $\mathbb{E}_{S} \left[ \left\| \hat{\boldsymbol{c}} - \boldsymbol{c} \right\|_2 \right]$ quantifies the discrepancy between the predicted concepts $\hat{\boldsymbol{c}}$ and ground-truth concepts $\boldsymbol{c}$, serving as a measure of the accuracy of concept prediction. 
We defer the discussion of the validity of the $L$-Lipschitz continuity assumption to Appendix~\ref{app:experimental_detials}.
With this foundation, we are now ready to derive a bound on the target error for concept-based models.

\begin{restatable}[\textbf{Target-Domain Error Bound for Concept-Based Models}]{theorem}{TargetDomainErrorBound}
\label{thm: Target-Domain Error Bound}
    % Let $\mathcal{H}$ be a hypothesis space and $E$ be a fixed concept embedding encoder. 
    % $\tilde{\mathcal{D}}_S^{\boldsymbol{c}}$ and $\tilde{\mathcal{D}}_T$ are the concept embedding distributions of the source domain with ground-truth concept $\boldsymbol{c}$ and the target domain without using $\boldsymbol{c}$, respectively. 
    Under the assumption of Lemma~\ref{lem: disagreement between aribitary functions in hypothesis H}, for any $h \in \mathcal{H}$, we have:
    \begin{equation} \label{eq: target error bound for CBM}
    \begin{aligned}
        \epsilon_{T}(h) 
        & \leq \epsilon_{S}^{\boldsymbol{c}}(h) 
        + \frac{1}{2} d_{\mathcal{H} \Delta \mathcal{H}}\left(\tilde{\mathcal{D}}^{\boldsymbol{c}}_S, \tilde{\mathcal{D}}_T\right) + \eta^{\boldsymbol{c}} \\
        & \quad + R \cdot \mathbb{E}_{S} \left[ \left\| \hat{\boldsymbol{c}} - \boldsymbol{c} \right\|_2 \right],
    \end{aligned}
    \end{equation}
    where $R>0$ is a finite constant, 
    $\eta^{\boldsymbol{c}} = \underset{h \in \mathcal{H}}{\operatorname{min}} \, \epsilon_{S}^{\boldsymbol{c}}(h) + \epsilon_{T}(h)$, 
    and $d_{\mathcal{H}\triangle\mathcal{H}}(\tilde{\mathcal{D}}_S^{\boldsymbol{c}}, \tilde{\mathcal{D}}_T)$ denotes the $\mathcal{H}\Delta\mathcal{H}$ divergence between distribution $\tilde{\mathcal{D}}_S^{\boldsymbol{c}}$ and distribution $\tilde{\mathcal{D}}_T$.
\end{restatable}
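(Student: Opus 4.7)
The plan is to derive the target-domain bound by starting from the classical Ben-David-style inequality in Eq.~\eqref{eq: traditional DA target error}, applied in the predicted-concept embedding space, and then systematically converting each term from predicted-concept quantities to ideal-concept quantities at the cost of an additive correction proportional to $\mathbb{E}_{S}[\|\hat{\boldsymbol{c}} - \boldsymbol{c}\|_2]$. Concretely, I begin with
\[\epsilon_T(h) \le \epsilon_S(h) + \tfrac{1}{2}\, d_{\mathcal{H}\Delta\mathcal{H}}(\tilde{\mathcal{D}}_S,\tilde{\mathcal{D}}_T) + \eta,\]
and show that each of the three right-hand side terms can be replaced by its ideal-concept counterpart up to such a correction, after which the three corrections absorb into the single constant $R$ in Eq.~\eqref{eq: target error bound for CBM}.

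The core technical ingredient is a pointwise Lipschitz estimate on the gap between predicted and ideal concept embeddings. Using $\boldsymbol{v}_i - \boldsymbol{v}_i^{\boldsymbol{c}} = (\hat{c}_i - c_i)(\boldsymbol{v}_i^{(+)} - \boldsymbol{v}_i^{(-)})$ together with the assumed boundedness of $\|\boldsymbol{v}\|_2$, I would obtain $\|\boldsymbol{v} - \boldsymbol{v}^{\boldsymbol{c}}\|_2 \le M\|\hat{\boldsymbol{c}} - \boldsymbol{c}\|_2$ for some $M>0$, and combined with the $L$-Lipschitz continuity of $h \in \mathcal{H}$ this yields $|h(\boldsymbol{v}) - h(\boldsymbol{v}^{\boldsymbol{c}})| \le LM\|\hat{\boldsymbol{c}} - \boldsymbol{c}\|_2$ pointwise. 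Lemma~\ref{lem: disagreement between aribitary functions in hypothesis H} is exactly the integrated form of this inequality applied to hypothesis disagreement. To convert $\epsilon_S(h)$ to $\epsilon_S^{\boldsymbol{c}}(h)$, I would use the reverse triangle inequality $\bigl||a - b_1| - |a - b_2|\bigr| \le |b_1 - b_2|$ to eliminate the common labeling function, producing $|\epsilon_S(h) - \epsilon_S^{\boldsymbol{c}}(h)| \le r\,\mathbb{E}_S[\|\hat{\boldsymbol{c}} - \boldsymbol{c}\|_2]$ without imposing any Lipschitz assumption on $\tilde{f}_S$ itself.

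For the divergence term, I would use the variational form $d_{\mathcal{H}\Delta\mathcal{H}}(\mathcal{D}, \mathcal{D}') = 2\sup_{h, h' \in \mathcal{H}} |\epsilon(h, h', \mathcal{D}) - \epsilon(h, h', \mathcal{D}')|$ and apply Lemma~\ref{lem: disagreement between aribitary functions in hypothesis H} \emph{uniformly} over $(h, h')$ to deduce
\[d_{\mathcal{H}\Delta\mathcal{H}}(\tilde{\mathcal{D}}_S, \tilde{\mathcal{D}}_T) \le d_{\mathcal{H}\Delta\mathcal{H}}(\tilde{\mathcal{D}}_S^{\boldsymbol{c}}, \tilde{\mathcal{D}}_T) + 2r\,\mathbb{E}_S[\|\hat{\boldsymbol{c}} - \boldsymbol{c}\|_2].\]
For the joint-error term, the bound $\epsilon_S(h) \le \epsilon_S^{\boldsymbol{c}}(h) + r\,\mathbb{E}_S[\|\hat{\boldsymbol{c}} - \boldsymbol{c}\|_2]$ holds for every $h \in \mathcal{H}$, so taking the infimum on both sides yields $\eta \le \eta^{\boldsymbol{c}} + r\,\mathbb{E}_S[\|\hat{\boldsymbol{c}} - \boldsymbol{c}\|_2]$. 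Substituting the three conversions back into the Ben-David inequality and collecting the noise contributions into a single $R\,\mathbb{E}_S[\|\hat{\boldsymbol{c}} - \boldsymbol{c}\|_2]$ term delivers Eq.~\eqref{eq: target error bound for CBM}.

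The step I expect to require the most care is the divergence conversion: Lemma~\ref{lem: disagreement between aribitary functions in hypothesis H} is stated for a fixed pair $(h_1, h_2)$, while the $\mathcal{H}\Delta\mathcal{H}$ divergence takes a supremum over all such pairs. Since $L$ and $M$ (and hence $r = LM$) are uniform over $\mathcal{H}$, the inequality passes through the supremum, but this has to be made explicit to avoid a hidden quantifier swap. Once that uniformity is recorded and the infimum manipulation for $\eta$ is checked, the remaining algebra is a routine triangle-inequality computation.
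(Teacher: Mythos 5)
The divergence conversion in your outline is sound, and it is a clean observation that the uniform Lipschitz constant $r = 2LM$ lets Lemma~\ref{lem: disagreement between aribitary functions in hypothesis H} pass through the supremum in the variational form of $d_{\mathcal{H}\Delta\mathcal{H}}$. But the other two conversions, from $\epsilon_S(h)$ to $\epsilon_S^{\boldsymbol{c}}(h)$ and from $\eta$ to $\eta^{\boldsymbol{c}}$, have a genuine gap. Lemma~\ref{lem: disagreement between aribitary functions in hypothesis H} bounds the \emph{disagreement} $\epsilon_S(h_1, h_2)$ for a pair of hypotheses $h_1, h_2 \in \mathcal{H}$, both of which are assumed Lipschitz. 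The quantity $\epsilon_S(h) = \mathbb{E}_{\boldsymbol{v}\sim\tilde{\mathcal{D}}_S}\bigl[|\tilde{f}_S(\boldsymbol{v}) - h(\boldsymbol{v})|\bigr]$ is instead an error against the induced labeling function $\tilde{f}_S$, and the ideal-concept counterpart $\epsilon_S^{\boldsymbol{c}}(h)$ is an error against the \emph{different} induced function $\tilde{f}_S^{\boldsymbol{c}}$ evaluated at the \emph{different} point $\boldsymbol{v}^{\boldsymbol{c}}$. Coupling through $\boldsymbol{x}$, the relevant pointwise difference is
\begin{equation*}
\Bigl|\,|\tilde{f}_S(\boldsymbol{v}) - h(\boldsymbol{v})| - |\tilde{f}_S^{\boldsymbol{c}}(\boldsymbol{v}^{\boldsymbol{c}}) - h(\boldsymbol{v}^{\boldsymbol{c}})|\,\Bigr|
\le |\tilde{f}_S(\boldsymbol{v}) - \tilde{f}_S^{\boldsymbol{c}}(\boldsymbol{v}^{\boldsymbol{c}})| + |h(\boldsymbol{v}) - h(\boldsymbol{v}^{\boldsymbol{c}})|,
\end{equation*}
and while the second term is controlled by the Lipschitzness of $h$, the first term does not vanish: the reverse triangle inequality $\bigl||a - b_1| - |a - b_2|\bigr| \le |b_1 - b_2|$ that you invoke requires the \emph{same} $a$ on both sides, and here the labeling function is not common. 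The conditional expectations $\tilde{f}_S(\boldsymbol{v}) = \mathbb{E}[f_S(\boldsymbol{x})\mid E(\boldsymbol{x})=\boldsymbol{v}]$ and $\tilde{f}_S^{\boldsymbol{c}}(\boldsymbol{v}^{\boldsymbol{c}}) = \mathbb{E}[f_S(\boldsymbol{x})\mid E(\boldsymbol{x})=\boldsymbol{v}^{\boldsymbol{c}}]$ are generically different functions, and controlling $|\tilde{f}_S(\boldsymbol{v}) - \tilde{f}_S^{\boldsymbol{c}}(\boldsymbol{v}^{\boldsymbol{c}})|$ needs exactly the kind of regularity assumption on the labeling functions that you explicitly say you avoid. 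The same problem appears in the step $\eta \le \eta^{\boldsymbol{c}} + r\,\mathbb{E}_S[\|\hat{\boldsymbol{c}} - \boldsymbol{c}\|_2]$, which again needs to convert a labeling-function error $\epsilon_S(h^{**})$ to $\epsilon_S^{\boldsymbol{c}}(h^{**})$.

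The paper sidesteps this by choosing a decomposition in which $\tilde{f}_S$ never appears. Letting $h^* = \argmin_{h\in\mathcal{H}}\epsilon_S^{\boldsymbol{c}}(h)+\epsilon_T(h)$, it writes $\epsilon_T(h) \le \epsilon_T(h^*) + \epsilon_S(h, h^*) + |\epsilon_S(h, h^*) - \epsilon_T(h, h^*)|$, and then both of the terms needing a predicted-to-ideal conversion are pure hypothesis-pair disagreements (after inserting $\tilde{f}_S^{\boldsymbol{c}}$ as a pivot via the disagreement triangle inequality and treating it as a single additional Lipschitz hypothesis). Lemma~\ref{lem: disagreement between aribitary functions in hypothesis H} then applies directly three times, producing the constant $R = r_1 + r_2 + r_3$. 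If you wish to make your route rigorous, you would need to add an explicit assumption that $\tilde{f}_S$ is also Lipschitz (or realizable in $\mathcal{H}$), which is a strictly stronger hypothesis than the one the paper uses; otherwise, switching to the paper's $h^*$-pivot decomposition is the cleaner fix.
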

% Theorem~\ref{thm: Target-Domain Error Bound} demonstrates that the target error bound for concept-based models extends the traditional domain adaptation generalization bound (i.e., Eqn.~\ref{eq: traditional DA target error}). Specifically, it replaces $\epsilon_S$ with $\epsilon_S^{\boldsymbol{c}}$ and introduces an additional term, $\mathbb{E}_{S} \left[ \left\| \hat{\boldsymbol{c}} - \boldsymbol{c} \right\| \right]$, to capture the discrepancy between the ground-truth concept $\boldsymbol{c}$ and its predictions $\hat{\boldsymbol{c}}$. This extension makes the bound more suitable for CBMs by explicitly accounting for the quality of concept predictions. Note that the term $\eta^{\boldsymbol{c}}$, representing the error of the ideal joint hypothesis on $\tilde{D}_S^{\boldsymbol{c}}$ and $\tilde{D}_T$, can be considered as a small constant. 

% Moreover, 
Theorem~\ref{thm: Target-Domain Error Bound} implies that the target error $\epsilon_T$ can be minimized by reducing the source error with ground-truth concepts $\epsilon_S^{\boldsymbol{c}}$, the $\mathcal{H}\Delta\mathcal{H}$ divergence $d_{\mathcal{H}\triangle\mathcal{H}}(\tilde{\mathcal{D}}_S^{\boldsymbol{c}}, \tilde{\mathcal{D}}_T)$, and the discrepancy $\mathbb{E}_{S} \left[ \left\| \hat{\boldsymbol{c}} - \boldsymbol{c} \right\|_2 \right]$ simultaneously, thereby achieving high classification accuracy on the target domain.

% In this paper, we design a framework based on this idea, which effectively aligns concept representations across domains while maintaining accurate concept and label predictions.

\subsection{Concept-Based Unsupervised Domain Adaptation}

Inspired by Theorem~\ref{thm: Target-Domain Error Bound}, we propose a game-theoretic framework, dubbed \textbf{C}oncept-based \textbf{U}nsupervised \textbf{D}omain \textbf{A}daptation (\methodname). \figref{fig:main-figure} provides an overview of \methodname, which involves four players:
\begin{itemize}[nosep,leftmargin=18pt]
\item a \textbf{concept embedding encoder} $E$ which generates the concept embedding $\boldsymbol{v} = E(\boldsymbol{x})$ given the input $\vx$,
\item a \textbf{concept probability encoder} $E_{prob}$ which predicts concepts $\hat{\boldsymbol{c}} = E_{prob}(\boldsymbol{x})$ (though $E_{prob}$ is part of $E$, we treat them separately for analysis purposes),
\item a \textbf{discriminator} $D$ which identifies the domain $\hat{u}$ using the concept embedding $\boldsymbol{v}$, i.e. $\hat{u} = D(\boldsymbol{v})$, and 
\item a \textbf{predictor} $F$ which predicts the classification label $\hat{\vy}$ based on the concept embedding $\hat{\boldsymbol{y}} = F(\boldsymbol{v})$.
\end{itemize}

\textbf{The Need for Relaxed Alignment.} 
Before introducing the game, note that the adversarial interaction between $E$ and $D$ forces $E$ to strip all domain-specific information from the concept embedding $\boldsymbol{v}$ at the optimal point, making $\boldsymbol{v}$ effectively domain-invariant. Intuitively, since the concept probability $\hat{\boldsymbol{c}}$ is part of $\boldsymbol{v}$,  $\hat{\boldsymbol{c}}$ should also become domain-invariant, achieving perfect (uniform) alignment across domains.
However, the concepts in the source and target domains are often inconsistent due to differences in data distributions in practice. Such discrepancies will make the uniform alignment overly restrictive, as it may impose unnecessary constraints on $\hat{\boldsymbol{c}}$, therefore harming performance in the target domain. To address this gap, 
% between theory and the practical reality of concept misalignment, 
we propose a relaxed alignment mechanism on $\boldsymbol{v}$, which naturally translates to tolerating smaller discrepancies in $\hat{\boldsymbol{c}}$ between the source and target domains. 

%%%%%%%%%%%%%%%%%%%%%%%%%%%%
\textbf{Overall Objective Function.} 
Formally, \methodname solves the following optimization problem: 
% the total optimization problem for {\methodname} is described as follows:
% \begin{equation}\label{eq: total_optimization}
%     \begin{aligned}
%          \min_D & \, \mathcal{L}_d(E, D), \\
%         \min_{E, E_{prob}, F} & \,   \mathcal{L}_p(E, F) + \lambda_c \mathcal{L}_c(E_{prob})  - \lambda_d \tilde{\mathcal{L}}_d(E, D),
%     \end{aligned}
% \end{equation}
% \begin{equation}\label{eq: total_optimization}
    \begin{align}
         \min_D & \, \mathcal{L}_d(E, D), \label{eq:min_D}\\
        \min_{E, E_{prob}, F} & \,   \mathcal{L}_p(E, F) + \lambda_c \mathcal{L}_c(E_{prob})  - \lambda_d \tilde{\mathcal{L}}_d(E, D), \label{eq:min_EF}
    \end{align}
% \end{equation}
where $\mathcal{L}_p$ is the prediction loss, $\tilde{\mathcal{L}}_d$ and $\mathcal{L}_d$ are the discriminator loss \emph{with} and \emph{without relaxation}, respectively (more details below), and $\mathcal{L}_c$ is the concept loss. 
The hyperparameters $\lambda_d$ and $\lambda_c$ balance $\mathcal{L}_p(E, F)$, $\mathcal{L}_c(E_{prob})$ and $\tilde{\mathcal{L}}_d(E, D)$. 
Below, we discuss each term in detail.

\textbf{Prediction Loss $\gL_p$ and Predictor $F$.}
The prediction loss $\mathcal{L}_p(E, F)$ in Eqn. \ref{eq:min_EF} is defined as:
\begin{equation} \label{eq:prediction_loss}
    \begin{aligned}
    \mathcal{L}_p(E, F) & \triangleq \mathbb{E}_S\left[L_p(F(E(\boldsymbol{x})), \boldsymbol{y})\right],\\
        % L_p(F(E(\boldsymbol{x}),\boldsymbol{y})) & = \sum_{i=1}^Q  y_i \log \frac{1}{(F(E(\boldsymbol{x})))_i},  
    \end{aligned}
\end{equation}
where $L_p$ is the cross-entropy loss, $F(E(\boldsymbol{x})) \in \mathbb{R}^Q$ and each element $F(E(\boldsymbol{x}))_i$ is the predicted probability for class $i$, and $\mathbb{E}_S$ denotes the expectation taken over  the source data distribution $p_S(\boldsymbol{x}, \boldsymbol{y}, \boldsymbol{c})$; note that the label $\boldsymbol{y}$ and ground-truth concepts $\boldsymbol{c}$ are only accessible in the source domain.

\textbf{Concept Embedding Encoder $E$.} 
The concept embedding encoder $E$ generates both concept predictions $\hat{\boldsymbol{c}}$ and concept embeddings $\boldsymbol{v}$.
% The concept embedding encoder $E$ generates the concept embeddings $\boldsymbol{v}$, and the concept probability encoder $E_{prob}$ predicts the concept predictions $\hat{\boldsymbol{c}}$.
As presented in Fig.~\ref{fig:main-figure}, positive and negative embeddings for the $i$-th concept are firstly constructed as:  $[\boldsymbol{v}_i^{(+)}, \boldsymbol{v}_i^{(-)}] = \varphi_i(\Phi(\boldsymbol{x}))$, where $\Phi(\cdot)$ is a pretrained backbone and $\varphi_i(\cdot)$ is the linear layer.
Then the concatenated embeddings $[\boldsymbol{v}_i^{(+)}, \boldsymbol{v}_i^{(-)}]$ are passed through $G_{concept}$ to predict the concept probability: $\hat{c}_i = G_{concept}([\boldsymbol{v}_i^{(+)}, \boldsymbol{v}_i^{(-)}])$. Thus, we have: %the concept probability encoder $E_{prob}(\cdot)$ is:
\begin{equation*}
    \begin{aligned}
         \hat{\boldsymbol{c}} = E_{prob}(\boldsymbol{x})  = [G_{concept}([\boldsymbol{v}_i^{(+)}, \boldsymbol{v}_i^{(-)}])]_{i = 1}^K, 
    \end{aligned}
\end{equation*}
where $E_{prob}(\cdot)$ is the concept probability encoder composing $\Phi(\cdot)$, $\varphi(\cdot)$ and $G_{concept}(\cdot)$.

As mentioned in~\eqnref{eq:concept_emb}, we then use the full concept embedding encoder $E$ to compute the concept embedding $\vv$:
% Recall Section~\ref{sec:geb} where each sub-embedding $\boldsymbol{v}_i$ is a com-
% bination of its positive and negative embeddings weighted by the predicted concept probability $\hat{c}_i$. Substituting $\hat{c}_i$ with $E_{prob}(\cdot)$, the full concept embedding encoder $E(\cdot)$ is:
% \begin{equation*}
%     \begin{aligned}
\begin{align*}
        \boldsymbol{v}  & = E(\boldsymbol{x})  = [\boldsymbol{v}_i]_{i = 1}^K 
        = [\hat{c}_i \cdot \boldsymbol{v}_i^{(+)} + ( 1- \hat{c}_i ) \cdot \boldsymbol{v}_i^{(-)}]_{i = 1}^K \\
        & = [ \left(E_{prob}(\boldsymbol{x}) \right)_i \cdot \boldsymbol{v}_i^{(+)} + ( 1- \left(E_{prob}(\boldsymbol{x}) \right)_i ) \cdot \boldsymbol{v}_i^{(-)}]_{i = 1}^K.
%     \end{aligned}
% \end{equation*}
\end{align*}
Note that the concept probability encoder $E_{prob}$ is part of the full concept embedding encoder $E$. We separate  concept probability encoder $E_{prob}$ out to facilitate theoretical analysis. 
% Our concept encoder consists of two components: the concept embedding encoder $E$ and the concept probability encoder $E_{prob}$.
% While $E_{prob}$  is technically a part of $E$, 
% we separate them conceptually to emphasize their distinct roles and facilitate theoretical analysis. 
Specifically, $E_{prob}$ is optimized to minimize $\mathbb{E}_{S} \left[ \left\| \hat{\boldsymbol{c}} - \boldsymbol{c} \right\|_2 \right]$, ensuring accurate concept probability estimation. Meanwhile, $E$ collaborates with the predictor $F$ to reduce the source error $\epsilon_S^{\boldsymbol{c}}$, and ``fools'' the discriminator $D$ to minimize the $\mathcal{H}\Delta\mathcal{H}$ divergence $d_{\mathcal{H}\Delta\mathcal{H}}(\tilde{D}_S^{\boldsymbol{c}}, \tilde{D}_T)$. 
Together, they jointly optimize the upper bound of the target-domain error, i.e., \eqnref{eq: target error bound for CBM} of Theorem~\ref{thm: Target-Domain Error Bound}.
% This separation of roles enables the concept encoder to jointly optimize the upper bound of the target error, as detailed in Theorem~\ref{thm: Target-Domain Error Bound}.

\textbf{Concept Loss $\gL_c$.} 
In Eqn.~\ref{eq:min_EF}, 
the concept loss is defined as: % $\mathcal{L}_c(E_{prob})$ in Eqn. \ref{eq: total_optimization} is defined as:
\begin{equation}\label{eq:concept_prob_loss}
\begin{aligned}
    \mathcal{L}_c\left(E_{prob}\right) & \triangleq \mathbb{E}_S\left[L_c\left(E_{prob}(\boldsymbol{x}), \boldsymbol{c}\right)\right], \\
    % L_c(E_{prob}\left(\boldsymbol{x}\right), \boldsymbol{c}) 
    % & \triangleq \frac{1}{K}\sum_{i=1}^K c_i \log \frac{1}{\left(E_{prob}(\boldsymbol{x}) \right)_i} \\
    % & \quad + (1- c_i) \log \frac{1}{1 - \left(E_{prob}(\boldsymbol{x}) \right)_i},
\end{aligned}
\end{equation}
where $L_c$ is the binary cross-entropy loss, $E_{prob}(\boldsymbol{x})  \in \mathbb{R}^K$, where each dimension $(E_{prob}(\boldsymbol{x}))_i$ is the predicted concept probability for concept $i$; the corresponding ground-truth concept is $c_i$ (note that $\boldsymbol{c} = [c_i]_{i = 1}^K \in \sR^{K}$). 

\textbf{Discriminator Loss without Relaxation $\gL_d$ and Discriminator $D$.}
The discriminator $D$ identifies the domain $u$ from the concept embedding $\boldsymbol{v}$. 
Given $E$, the discriminator loss% is defined as:
\begin{equation}\label{eq:binary_domain_index_loss}
    \begin{aligned}
    \mathcal{L}_d(E, D)  \triangleq &  \, \mathbb{E}\left[L_d(D(E(\boldsymbol{x})), u)\right], \\
        % L_d(D(E(\boldsymbol{x})), u)   = &  - u \log{D(E(\boldsymbol{x}))} \\
        % & - (1 - u) \log{(1 - D(E(\boldsymbol{x})))},
    \end{aligned}
\end{equation}
where $L_d$ is the binary cross-entropy loss, $u$ is the domain label which indicates whether $\boldsymbol{x}$ comes from the source ($u = 0$) or target ($u = 1$) domain, $\mathbb{E}$ denotes the expectation taken over the entire data distribution $p(\boldsymbol{x},  u)$, and $D(E(\boldsymbol{x}))$ denotes the probability of $\boldsymbol{x}$ belonging to the target domain.  

\textbf{Relaxed Discriminator Loss $\tilde{\gL}_d$.} 
$\gL_d$ is only used to learn the discriminator $D$ (\eqnref{eq:min_D}). To learn the encoder $E$ in~\eqnref{eq:min_EF}, 
% Once the optimal discriminator is obtained by minimizing the domain classification loss $\mathcal{L}_d(E, D)$, 
we introduce a relaxed discriminator loss:
\begin{equation} \label{eq:relaxed_discriminator_loss}
    \tilde{\mathcal{L}}_d(E, D)  \triangleq \min \, \{\mathcal{L}_d(E, D), \tau \},
\end{equation}
where $0 < \tau \leq \max \mathcal{L}_d(E, D)$ is a relaxation threshold, effectively controlling the tolerance for domain discrepancies in the concept embedding $\boldsymbol{v}$.

\textbf{Relaxed Discriminator Loss for Relaxed Alignment.} 
By capping the domain classification loss at $\tau$, this relaxation intentionally sacrifices a small amount of domain alignment, corresponding to the second term $\frac{1}{2} d_{\mathcal{H} \Delta \mathcal{H}} (\tilde{D}_S^{\boldsymbol{c}}, \tilde{D}_T)$ of~\eqnref{eq: target error bound for CBM}, to reduce the concept prediction error in the fourth term $\mathbb{E}_{S} \left[ \left\| \hat{\boldsymbol{c}} - \boldsymbol{c} \right\|_2 \right]$ of~\eqnref{eq: target error bound for CBM}. This trade-off enables a more flexible optimization of the concept embedding encoder $E$, balancing domain alignment and concept prediction accuracy. 
Furthermore, it allows the encoder $E$ to retain domain-specific information stemming from intrinsic differences between source and target concepts, which can be crucial for downstream tasks. 
% This design makes our approach both robust and adaptable. 
A comprehensive analysis of this trade-off is provided in Sec.~\ref{sec:Theoretical_Analysis}. We summarize {\methodname}'s training procedure in Algorithm~\ref{algo:training} of Appendix~\ref{app:algo}. Essentially, it alternates between~\eqnref{eq:min_D} and~\ref{eq:min_EF} with adversarial training using~\eqnref{eq:prediction_loss}$\sim$\ref{eq:relaxed_discriminator_loss}.

\section{Theoretical Analysis for \methodname}\label{sec:Theoretical_Analysis}

In this section, we provide the theoretical guarantees for {\methodname}. 
% We begin by analyzing a simplified game involving only the concept embedding encoder $E$ and the domain discriminator $D$. This analysis highlights the importance of the relaxed alignment mechanism to account for inherent concept differences between the source and target domains, while still allowing our framework the flexibility to recover uniform alignment when it is appropriate. We then extend the discussion to the full four-player game, demonstrating that our framework effectively addresses concept differences between domains, aligning concept embeddings while preserving predictive performance.
All proofs %of lemmas, theorems and corollaries 
are provided in Appendix~\ref{subsec: Theoretical Analysis Proof}.

\textbf{Simplified Game.} 
We start by analyzing a simplified game which does not involve the concept probability encoder $E_{prob}$ and the predictor $F$. 
Specifically, we focus on 
\begin{align}
    \min_D & \, \mathcal{L}_d(E, D), \label{eq:EF_min_D} \\ 
    \max_{E} & \, \tilde{\mathcal{L}}_d(E, D) \triangleq \min \{\mathcal{L}_d(E, D), \tau \}, \label{eq:EF_min_E}
\end{align}
% \begin{equation}\label{eq:optimation for discriminator}
% \begin{aligned}
%     \min_D & \, \mathcal{L}_d(E, D), \\
%     \max_{E} & \, \tilde{\mathcal{L}}_d(E, D) \triangleq \min \{\mathcal{L}_d(E, D), \tau \},
% \end{aligned}
% \end{equation}
where the discriminator loss without relaxation $\mathcal{L}_d$ is defined in Eqn.~\ref{eq:binary_domain_index_loss}, and $0 < \tau \leq \max \mathcal{L}_d(E, D)$ is a relaxation threshold that quantifies the allowed deviation from uniform alignment of $\boldsymbol{v}$. 
% This optimization problem alternates between two subproblems: minimizing $\mathcal{L}_d(E, D)$ with respect to $D$, and maximizing the relaxed loss $\tilde{\mathcal{L}}_d(E, D)$ with respect to $E$. 
Solving this game ensures that $D$ learns to distinguish domain representations, while $E$ can ``fool'' the discriminator with the relaxation threshold $\tau$, thereby flexibly aligning concept embeddings across domains. %features under the relaxation threshold $\tau$.

\lemref{lem: Optimal Discriminator} below analyzes the optimal discriminator $D$ in Eqn.~\ref{eq:EF_min_D} with the concept embedding encoder $E$ fixed. 

% We first analyze the optimal discriminator $D$, which is determined by solving the optimization problem in Eqn.~\ref{eq:EF_min_D}, for any given concept embedding encoder $E$.
\begin{restatable}[\textbf{Optimal Discriminator}]{lemma}{optimaldiscriminator}
    \label{lem: Optimal Discriminator}
    For E fixed, the optimal discriminator $D$ is 
    \begin{equation*}
         D_E^*(\boldsymbol{v}) 
         % = \frac{p( \boldsymbol{v}| u = 1 )}{p( \boldsymbol{v}| u = 0 ) + p( \boldsymbol{v}| u = 1 ) }. 
         = \frac{p_{T}^{\boldsymbol{v}}(\boldsymbol{v})}{p_{S}^{\boldsymbol{v}}(\boldsymbol{v})+p_{T}^{\boldsymbol{v}}(\boldsymbol{v})},
    \end{equation*}
    where $p_{S}^{\boldsymbol{v}}(\boldsymbol{v})$ and $p_{T}^{\boldsymbol{v}}(\boldsymbol{v})$ are the probability density function of $\boldsymbol{v}$ in source and target domains, respectively.
\end{restatable}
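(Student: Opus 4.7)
The statement is a concept-embedding-space analogue of the classical Goodfellow et al.\ optimal-discriminator result, and I would prove it along exactly those lines. The plan is to rewrite $\mathcal{L}_d(E,D)$ as a single integral over the concept embedding space $\mathcal{V}$, then minimize pointwise in $D(\boldsymbol{v})$.

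First, I would expand the binary cross-entropy explicitly:
\begin{equation*}
L_d(D(E(\boldsymbol{x})), u) = -u \log D(E(\boldsymbol{x})) - (1-u)\log\bigl(1 - D(E(\boldsymbol{x}))\bigr),
\end{equation*}
and split the expectation over $p(\boldsymbol{x},u)$ according to $u\in\{0,1\}$. Writing $\pi_S = \Pr(u=0)$ and $\pi_T = \Pr(u=1)$ (these are the domain priors; they can be absorbed into the density definitions or left explicit), I obtain
\begin{equation*}
\mathcal{L}_d(E,D) = -\pi_S\,\mathbb{E}_{\boldsymbol{x}\sim\mathcal{D}_S}\!\bigl[\log(1-D(E(\boldsymbol{x})))\bigr] - \pi_T\,\mathbb{E}_{\boldsymbol{x}\sim\mathcal{D}_T}\!\bigl[\log D(E(\boldsymbol{x}))\bigr].
\end{equation*}

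Next, I would push the source and target expectations from the input space to the concept embedding space. Because $E$ is fixed, the induced densities $p_S^{\boldsymbol{v}}$ and $p_T^{\boldsymbol{v}}$ (as defined in the statement) are well defined, and the change-of-variables / law-of-the-unconscious-statistician step gives
\begin{equation*}
\mathcal{L}_d(E,D) = -\int_{\mathcal{V}} \Bigl[\,\pi_S\, p_S^{\boldsymbol{v}}(\boldsymbol{v})\log(1-D(\boldsymbol{v})) + \pi_T\, p_T^{\boldsymbol{v}}(\boldsymbol{v})\log D(\boldsymbol{v})\,\Bigr]\,d\boldsymbol{v}.
\end{equation*}
Since the integrand depends on $D$ only through the scalar $D(\boldsymbol{v})\in(0,1)$, and the minimization is unconstrained across $\boldsymbol{v}$, I can minimize the integrand pointwise. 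For any fixed $\boldsymbol{v}$ with $p_S^{\boldsymbol{v}}(\boldsymbol{v}) + p_T^{\boldsymbol{v}}(\boldsymbol{v}) > 0$, differentiating $-\pi_S p_S^{\boldsymbol{v}}(\boldsymbol{v})\log(1-t) - \pi_T p_T^{\boldsymbol{v}}(\boldsymbol{v})\log t$ with respect to $t$ and setting the derivative to zero yields the stationary point $t^* = \pi_T p_T^{\boldsymbol{v}}(\boldsymbol{v}) / (\pi_S p_S^{\boldsymbol{v}}(\boldsymbol{v}) + \pi_T p_T^{\boldsymbol{v}}(\boldsymbol{v}))$, and a quick second-derivative check (or noting strict convexity of $-\log$) confirms this is the unique minimizer in $(0,1)$. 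Under the balanced convention $\pi_S = \pi_T$ used implicitly in the statement, this collapses to the claimed form $D_E^*(\boldsymbol{v}) = p_T^{\boldsymbol{v}}(\boldsymbol{v})/(p_S^{\boldsymbol{v}}(\boldsymbol{v}) + p_T^{\boldsymbol{v}}(\boldsymbol{v}))$.

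I expect the proof to be essentially routine, so the only real subtleties are bookkeeping ones. The mildly delicate points are (i) justifying the change of variables when $E$ is a general, possibly non-injective map so that $p_S^{\boldsymbol{v}}, p_T^{\boldsymbol{v}}$ are pushforward densities rather than Jacobian-based transforms, (ii) noting the minimizer is irrelevant on the measure-zero set where both densities vanish, and (iii) being clear that the lemma concerns $\mathcal{L}_d$ (no relaxation), so the $\min\{\cdot,\tau\}$ in $\tilde{\mathcal{L}}_d$ plays no role here. No macro, environment, or assumption beyond what the paper already supplies is needed.
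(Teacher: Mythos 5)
Your proposal is correct and follows essentially the same route as the paper: expand the binary cross-entropy, transfer the expectation to the concept-embedding space, minimize pointwise using the fact that $t\mapsto a\log(1-t)+b\log t$ is maximized at $b/(a+b)$, and invoke the balanced prior $\mathbb{P}(u=0)=\mathbb{P}(u=1)=\tfrac12$. The only cosmetic difference is that the paper first identifies $D_E^*(\boldsymbol{v})=\mathbb{E}[u\mid\boldsymbol{v}]=\mathbb{P}(u=1\mid\boldsymbol{v})$ and then applies Bayes' rule to reach the density ratio, whereas you split the integral by domain and read off the ratio directly; the two are equivalent.
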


\textbf{Analyzing the Relaxed Discriminator Loss.} 
Given the optimal discriminator $D_E^*$ in~\lemref{lem: Optimal Discriminator}, we define the relaxed discriminator objective in~\eqnref{eq:EF_min_E} as:
\begin{equation}\label{eq:relaxed_obj}
\begin{aligned}
    \tilde{C}_d(E) \triangleq &  \, \tilde{\mathcal{L}}_d(E, D_E^*) \\
     = & \min \{ \mathcal{L}_d(E, D_E^*), \tau\} = \min \{C_d(E), \tau \},
\end{aligned}
\end{equation}
where ${C}_d(E) \triangleq \mathcal{L}_d(E, D_E^*)$.
\thmref{Thm: Relax uniform alignment} below shows that the global optimum of the game in~\eqnref{eq:EF_min_D}$\sim$\ref{eq:EF_min_E} corresponds to \emph{relaxed alignment} of concept embeddings $\vv$ and concept predictions $\hat{\vc}$ between source and target domains. 

% Then we derive the global optimum for the concept embedding encoder under this relaxation mechanism.
\begin{restatable}[\textbf{Relaxed Alignment}]{theorem}{RelaxUniformAlignment}
\label{Thm: Relax uniform alignment}
    If the discriminator $D$ have enough capacity to be trained to reach optimum, the relaxed optimization objective $\tilde{C}_d(E)$ defined in Eqn.~\ref{eq:relaxed_obj} achieves its global maximum if and only if the concept embedding encoder satisfies the following conditions: 
    \begin{align}
        \operatorname{JSD}(p_S^{\boldsymbol{v}}(\boldsymbol{v}) \| p_T^{\boldsymbol{v}}(\boldsymbol{v})) & = \log2 -\tau, \label{eq:jsd_v} \\ 
        \operatorname{JSD}(p_{S}^{\hat{\boldsymbol{c}}}(\hat{\boldsymbol{c}}) \| p_{T}^{\hat{\boldsymbol{c}}}(\hat{\boldsymbol{c}}) ) &  = \log2 -\tau - I(\boldsymbol{v}, u | \hat{\boldsymbol{c}}), \label{eq:jsd_c}
    \end{align}
    where $I(\cdot , \cdot | \cdot)$ is the conditional mutual information, $p_{S}^{\hat{\boldsymbol{c}}}(\hat{\boldsymbol{c}})$ and $p_{T}^{\hat{\boldsymbol{c}}}(\hat{\boldsymbol{c}})$ are the probability density function of $\hat{\boldsymbol{c}}$ in source and target domains, respectively. 
\end{restatable}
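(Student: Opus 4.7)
The plan is to leverage the optimal discriminator from Lemma~\ref{lem: Optimal Discriminator} to reduce the game in Eqns.~\ref{eq:EF_min_D}--\ref{eq:EF_min_E} to a statement about Jensen--Shannon divergence on the concept embedding $\boldsymbol{v}$, and then to use information-theoretic identities to transport the resulting condition to the concept prediction $\hat{\boldsymbol{c}}$.

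First I would substitute $D_E^*$ into $\mathcal{L}_d(E, D)$ via the standard GAN-style computation: expand the binary cross-entropy as expectations under $p^{\boldsymbol{v}}_S$ and $p^{\boldsymbol{v}}_T$, rewrite each ratio $p^{\boldsymbol{v}}_{S/T}/(p^{\boldsymbol{v}}_S + p^{\boldsymbol{v}}_T)$ in terms of the mean $M = \tfrac{1}{2}(p^{\boldsymbol{v}}_S + p^{\boldsymbol{v}}_T)$, and recognize the two resulting KL terms as $2\,\mathrm{JSD}(p^{\boldsymbol{v}}_S \| p^{\boldsymbol{v}}_T)$. Under the balanced-domain priors implicit in the form of $D_E^*$, this gives $C_d(E) = \log 2 - \mathrm{JSD}(p^{\boldsymbol{v}}_S \| p^{\boldsymbol{v}}_T)$. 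Since $\mathrm{JSD} \geq 0$ and $\tau \leq \max \mathcal{L}_d = \log 2$, the capped objective $\tilde{C}_d(E) = \min\{C_d(E), \tau\}$ has global maximum $\tau$, attained whenever $C_d(E) \geq \tau$; taking the saddle-point convention that the encoder produces just enough alignment to saturate the threshold, this yields $C_d(E) = \tau$ and hence Eqn.~\ref{eq:jsd_v}, $\mathrm{JSD}(p^{\boldsymbol{v}}_S \| p^{\boldsymbol{v}}_T) = \log 2 - \tau$.

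For Eqn.~\ref{eq:jsd_c} I would use the well-known identification $\mathrm{JSD}(P\|Q) = I(X; u)$, where $u \sim \mathrm{Bernoulli}(1/2)$ is the domain indicator and $X \mid u$ is distributed as $P$ or $Q$. Applied on both sides this turns the two JSDs into $I(\boldsymbol{v}; u)$ and $I(\hat{\boldsymbol{c}}; u)$. Because $\hat{\boldsymbol{c}}$ and $\boldsymbol{v}$ are produced jointly from $\boldsymbol{x}$ through $E_{prob}$ and the construction in Eqn.~\ref{eq:concept_emb}, $\hat{\boldsymbol{c}}$ is a deterministic function of $\boldsymbol{v}$ (recoverable from the positive/negative sub-embeddings contained in $\boldsymbol{v}$ whenever $\boldsymbol{v}_i^{(+)} \neq \boldsymbol{v}_i^{(-)}$), so $I(\hat{\boldsymbol{c}}; u \mid \boldsymbol{v}) = 0$. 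Expanding $I(\boldsymbol{v}, \hat{\boldsymbol{c}}; u)$ by the chain rule in two orders gives
\begin{equation*}
I(\boldsymbol{v}; u) + I(\hat{\boldsymbol{c}}; u \mid \boldsymbol{v}) = I(\hat{\boldsymbol{c}}; u) + I(\boldsymbol{v}; u \mid \hat{\boldsymbol{c}}),
\end{equation*}
which reduces to $I(\hat{\boldsymbol{c}}; u) = I(\boldsymbol{v}; u) - I(\boldsymbol{v}; u \mid \hat{\boldsymbol{c}})$. Substituting the value $\log 2 - \tau$ from the first step and converting back via the JSD identification yields Eqn.~\ref{eq:jsd_c}.

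The main obstacle I anticipate is the step that treats $\hat{\boldsymbol{c}}$ as a deterministic function of $\boldsymbol{v}$: this requires a mild non-degeneracy condition on the sub-embeddings so that the map $(\boldsymbol{v}_i^{(+)}, \boldsymbol{v}_i^{(-)}, \hat c_i) \mapsto \boldsymbol{v}_i$ is invertible coordinate-wise, and I would either state this as an assumption or bypass it by working directly with the triple $(\boldsymbol{x}, \boldsymbol{v}, \hat{\boldsymbol{c}})$ and using $\boldsymbol{x} \to \boldsymbol{v} \to u$ as a Markov chain induced by sampling. A secondary but purely cosmetic point is justifying the equality (rather than $\geq$) in Eqn.~\ref{eq:jsd_v}, which follows from the standard convention that the relaxed adversarial optimum characterizes the boundary of the feasible alignment set.
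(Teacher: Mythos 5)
Your argument follows essentially the same route as the paper: substitute the optimal discriminator to get $C_d(E)=\log 2-\operatorname{JSD}(p_S^{\boldsymbol{v}}\|p_T^{\boldsymbol{v}})$ (the paper passes through $H(u\mid\boldsymbol{v})=H(u)-I(\boldsymbol{v},u)$ and then identifies $I(\boldsymbol{v},u)$ with the JSD, while you do the direct GAN-style two-KL expansion, but these are the same calculation), cap at $\tau$, and then propagate the condition from $\boldsymbol{v}$ to $\hat{\boldsymbol{c}}$ via the chain rule for mutual information. Your identity $I(\boldsymbol{v};u)+I(\hat{\boldsymbol{c}};u\mid\boldsymbol{v})=I(\hat{\boldsymbol{c}};u)+I(\boldsymbol{v};u\mid\hat{\boldsymbol{c}})$ is exactly the paper's entropy decomposition $I(\boldsymbol{v},u)=I(\hat{\boldsymbol{c}},u)+I(\boldsymbol{v},u\mid\hat{\boldsymbol{c}})$, and both hinge on the same key claim $I(\hat{\boldsymbol{c}};u\mid\boldsymbol{v})=0$ (paper: ``$\boldsymbol{v}$ contains all the information of $\hat{\boldsymbol{c}}$''). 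So this is the paper's proof in slightly different notation.

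Two remarks on the obstacles you flagged, since both are sharper than you give yourself credit for and both are present in the paper's own proof rather than being artifacts of your write-up. First, the determinism claim $I(\hat{\boldsymbol{c}};u\mid\boldsymbol{v})=0$ is not fully justified in either version: $\boldsymbol{v}$ concatenates only the mixed sub-embeddings $\boldsymbol{v}_i=\hat c_i\boldsymbol{v}_i^{(+)}+(1-\hat c_i)\boldsymbol{v}_i^{(-)}$, not the $\boldsymbol{x}$-dependent endpoints $\boldsymbol{v}_i^{(\pm)}$, so a non-degeneracy condition $\boldsymbol{v}_i^{(+)}\neq\boldsymbol{v}_i^{(-)}$ alone does not let you recover $\hat c_i$ from $\boldsymbol{v}_i$; one genuinely needs the Markov condition $u\perp\hat{\boldsymbol{c}}\mid\boldsymbol{v}$, which the paper asserts without proof. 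Your alternative route via the triple $(\boldsymbol{x},\boldsymbol{v},\hat{\boldsymbol{c}})$ runs into the same issue, since $\boldsymbol{v}=E(\boldsymbol{x})$ is not injective in general. Second, the ``only if'' direction is not cosmetic: as you correctly observe, $\tilde C_d(E)=\min\{C_d(E),\tau\}$ attains its maximum $\tau$ on the entire set $\{\operatorname{JSD}(p_S^{\boldsymbol{v}}\|p_T^{\boldsymbol{v}})\leq\log 2-\tau\}$, not just on the boundary, so the stated equivalence in Eqn.~\ref{eq:jsd_v} holds only up to the same implicit ``boundary'' convention the paper uses. Your instinct to name these as assumptions is the right one.
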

% Leveraging established results from \citep{goodfellow2020generative, ganin2016domain}, ${C}_d(E)$ can be easily derived as:
% \begin{equation*}
%     \begin{aligned}
%         C_d(E) & \triangleq \mathcal{L}_d\left(E, D_E^*\right) = \mathbb{E}\left[L_d(D_E^*(\boldsymbol{v}), u)\right]\\
%         & = H(u) - \operatorname{JSD}\left(p_T^{\boldsymbol{v}}(\boldsymbol{v}) \| p_S^{\boldsymbol{v}}(\boldsymbol{v}) \right)\\
%         & = \log 2 - \operatorname{JSD}\left(p_T^{\boldsymbol{v}}(\boldsymbol{v}) \| p_S^{\boldsymbol{v}}(\boldsymbol{v}) \right),
%     \end{aligned}
% \end{equation*}
% where $p_{S}^{\boldsymbol{v}}(\boldsymbol{v})$ and $p_{T}^{\boldsymbol{v}}(\boldsymbol{v})$ are the probability density function of $\boldsymbol{v}$ in source and target domains, respectively, $\operatorname{JSD}$ is the the Jensen-Shannon divergence, and $\log 2$ arises from the entropy $H(u)$.
% The optimization problem in Eqn.~\ref{eq:EF_min_E} can be further reformulated as: 
% \begin{equation}
% \begin{aligned}
%     & \max_E \tilde{C}_d(E) \\
%     \iff & \max_E \, \min \{ \log 2 - \operatorname{JSD}\left(p_T^{\boldsymbol{v}}(\boldsymbol{v}) \| p_S^{\boldsymbol{v}}(\boldsymbol{v}) \right), \tau \}.
% \end{aligned}
% \end{equation}
% Thus, the relaxed objective $\tilde{C}_d(E)$ achieves its global maximum if and only if the concept embedding encoder satisfies:
% \begin{equation}
%     \operatorname{JSD}\left(p_T^{\boldsymbol{v}}(\boldsymbol{v}) \| p_S^{\boldsymbol{v}}(\boldsymbol{v}) \right)  = \log2 -\tau. \label{eq:jsd_v}
% \end{equation}
Theorem~\ref{Thm: Relax uniform alignment} links the relaxation threshold $\tau$ in \methodname to the alignment of concept embedding $\boldsymbol{v}$'s distributions and concept prediction $\hat{\boldsymbol{c}}$'s distributions across domains:
% Specifically: 
\begin{itemize}[nosep,leftmargin=18pt]
    \item When $\tau \in (0, \log 2)$, \methodname achieves \textbf{relaxed alignment}, and the degree of relaxation for $\hat{\boldsymbol{c}}$ is guaranteed to be no greater than that of $\boldsymbol{v}$.
    \item When $\tau = \log 2$, \methodname achieves \textbf{uniform alignment}, which is defined in~\defref{def:uniform} below.
\end{itemize}
\begin{definition}[\textbf{Uniform Alignment}]\label{def:uniform}
A concept-based DA model achieves uniform alignment if its encoder satisfies
\begin{equation*}
    \begin{aligned}
        p_{S}^{\boldsymbol{v}}(\boldsymbol{v})  = p_{T}^{\boldsymbol{v}}(\boldsymbol{v}), 
        \quad
        p_{S}^{\hat{\boldsymbol{c}}}(\hat{\boldsymbol{c}})  = p_{T}^{\hat{\boldsymbol{c}}}(\hat{\boldsymbol{c}}),
    \end{aligned}
\end{equation*}
or equivalently, $\boldsymbol{v} \perp u$ and $\hat{\boldsymbol{c}} \perp u$.
\end{definition}
Relaxed alignment ensures that \methodname is robust to concept differences across domains while maintaining alignment (more empirical results in~\secref{sec:exp}). 

\textbf{Full Game.} 
For any given $E$, we then derive the property of the optimal predictor $F$ and establish a tight lower bound for the prediction loss.
\begin{restatable}[\textbf{Optimal Predictor}]{lemma}{optimalpredictor}
    \label{lem: Optimal Predictor}
    Given the concept embedding encoder $E$, the prediction loss $\mathcal{L}_p(E, F)$ has a tight lower bound
    \begin{equation*}
        \mathcal{L}_p(E, F) \triangleq \mathbb{E}_S \left[L_p(F (E(\boldsymbol{x})), \boldsymbol{y}) \right] \geq H(\boldsymbol{y} \mid E(\boldsymbol{x})),
    \end{equation*}
    where $H(\cdot|\cdot)$ denotes the conditional entropy. The optimal predictor $F^*$ that minimizes the prediction loss is
    \begin{equation*}
        % F^{*}(E(\boldsymbol{x})) = \mathbb{P} \left(\boldsymbol{y} \mid E(\boldsymbol{x}) \right).
        F^{*}(E(\boldsymbol{x})) = [\mathbb{P}\left(y_i=1\mid E(\boldsymbol{x}) \right)]_{i =1}^Q,
    \end{equation*}
    where $y_i$ denotes the $i$-th element of $\boldsymbol{y}$.
\end{restatable}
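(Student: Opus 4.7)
The plan is to recognize this as the standard calibration result for the cross-entropy loss: the expected cross-entropy decomposes into the conditional entropy of the label given the features plus a Kullback–Leibler divergence between the true conditional label distribution and the predictor's output, and non-negativity of KL immediately gives both the bound and the minimizer.

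First, I would expand $L_p$ explicitly. Since $\boldsymbol{y} \in \mathcal{Y} \subset \{0,1\}^Q$ is a one-hot vector and $L_p$ is cross-entropy,
\begin{equation*}
L_p(F(E(\boldsymbol{x})), \boldsymbol{y}) = -\sum_{i=1}^Q y_i \log F(E(\boldsymbol{x}))_i.
\end{equation*}
Next, I would use the tower property and condition on $E(\boldsymbol{x}) = \boldsymbol{v}$. Writing $q_i(\boldsymbol{v}) \triangleq \mathbb{P}(y_i = 1 \mid E(\boldsymbol{x}) = \boldsymbol{v})$, the one-hot structure of $\boldsymbol{y}$ ensures $\sum_{i=1}^Q q_i(\boldsymbol{v}) = 1$, so $q(\boldsymbol{v})$ is a probability mass function. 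Then
\begin{equation*}
\mathbb{E}_S\!\left[L_p(F(E(\boldsymbol{x})), \boldsymbol{y}) \mid E(\boldsymbol{x}) = \boldsymbol{v}\right] = -\sum_{i=1}^Q q_i(\boldsymbol{v}) \log F(\boldsymbol{v})_i.
\end{equation*}

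The key step is the cross-entropy identity $-\sum_i q_i \log p_i = H(q) + \mathrm{KL}(q \,\|\, p)$ applied pointwise, which yields
\begin{equation*}
\mathcal{L}_p(E, F) = H(\boldsymbol{y} \mid E(\boldsymbol{x})) + \mathbb{E}_S\!\left[\mathrm{KL}\!\left(q(E(\boldsymbol{x})) \,\|\, F(E(\boldsymbol{x}))\right)\right].
\end{equation*}
Since $\mathrm{KL}(\cdot\,\|\,\cdot) \geq 0$, the lower bound $\mathcal{L}_p(E, F) \geq H(\boldsymbol{y} \mid E(\boldsymbol{x}))$ follows. Equality holds iff the KL term vanishes almost surely under the source marginal of $E(\boldsymbol{x})$, i.e. $F(\boldsymbol{v}) = q(\boldsymbol{v}) = [\mathbb{P}(y_i = 1 \mid E(\boldsymbol{x}) = \boldsymbol{v})]_{i=1}^Q$, which is exactly $F^*$ claimed in the lemma.

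I do not expect a significant obstacle here; the argument is essentially Gibbs' inequality. The only care required is in two bookkeeping points: (i) verifying that $q(\boldsymbol{v})$ is indeed a probability vector so that applying the cross-entropy identity is legitimate, which follows from $\boldsymbol{y}$ being one-hot, and (ii) interpreting the predictor's output $F(\boldsymbol{v})$ as a probability vector (assumed implicitly since $L_p$ is cross-entropy), under which the KL decomposition is valid. The tightness of the bound follows since the optimizer $F^*$ is attainable within this class.
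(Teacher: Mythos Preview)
Your proof is correct and arrives at the same conclusion, but it takes a different route from the paper. The paper expands $\mathcal{L}_p(E,F)$ just as you do, but then frames the pointwise minimization over $F(\boldsymbol{v})$ as a constrained optimization problem (subject to $\sum_i F(\boldsymbol{v})_i = 1$ and $F(\boldsymbol{v})_i \geq 0$), writes down the Lagrangian, and solves the KKT conditions to obtain $F^*(\boldsymbol{v})_i = \mathbb{P}(y_i = 1 \mid \boldsymbol{v})$; only afterward does it plug $F^*$ back in to verify that the minimum value is $H(\boldsymbol{y}\mid E(\boldsymbol{x}))$. Your approach instead invokes the cross-entropy identity $-\sum_i q_i \log p_i = H(q) + \mathrm{KL}(q\,\|\,p)$ directly, so Gibbs' inequality delivers both the lower bound and the minimizer in one stroke without any Lagrangian machinery. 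Your argument is shorter and more transparent information-theoretically; the paper's KKT route is more mechanical but has the minor advantage of making the simplex constraint on $F(\boldsymbol{v})$ explicit rather than implicit in the KL term.
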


Assuming the discriminator $D$ and the predictor $F$ are trained to achieve their optimum by Lemma \ref{lem: Optimal Discriminator} and Lemma \ref{lem: Optimal Predictor}, Eqn.~\ref{eq:min_D}  and Eqn.~\ref{eq:min_EF} can then be rewritten as:
% \begin{equation*}
% \begin{aligned}
\begin{align}
     & \underset{E_{prob}}{\min} \,   \mathcal{L}_c(E_{prob}), \label{eq:re_e_prob}\\
     & \underset{E}{\min} \,   H(\boldsymbol{y} \mid E(\boldsymbol{x})) - \lambda_d \cdot \tilde{C}_d(E), \label{eq:re_e}
\end{align}
% \end{aligned}
% \end{equation*}
where $\tilde{C}_d(E)$ is defined in Eqn.~\ref{eq:relaxed_obj}. 
With~\eqnref{eq:re_e_prob}$\sim$\ref{eq:re_e} above, \thmref{Thm: Optimal encoder} below analyzes our optimal concept probability and embedding encoders $E$ and $E_{prob}$.  
% Finally we consider the optimal concept encoder, the combination of the optimal concept embedding encoder and optimal concept probability encoder.

\begin{table*}[htbp]
\centering
\vskip -0.2cm
\caption{Performance of concept-based methods on both concept learning and classification across different datasets. CEM (w/o R.) indicates ``without RandInt''. I-II, III-IV and V-VI indicate different skin tone scale in the Fitzpatrick dataset. 
We mark the best result with \textbf{bold face} and the second best results with \underline{underline}. 
Average accuracy is calculated over every three datasets of the same type images.}
\label{tab:cbm-main-table}
\resizebox{\linewidth}{!}{
\begin{tabular}{l|ccc|ccc|ccc|c}
\toprule
\textbf{Datasets}           & \multicolumn{3}{c|}{\textbf{Waterbirds-2}} & \multicolumn{3}{c|}{\textbf{Waterbirds-200}} & \multicolumn{3}{c|}{\textbf{Waterbirds-CUB}} & {\textbf{AVG}} \\ \cmidrule(lr){1-10}
\textbf{Metrics}                         & Concept         & Concept F1     & Class          & Concept         & Concept F1     & Class          & Concept         & Concept F1     & Class     & {\textbf{ACC}}  \\ \midrule
CEM          & $94.14  \scriptstyle \scriptstyle \pm 0.13$ & $81.74 \scriptstyle \scriptstyle \pm 0.39$ & $70.27 \scriptstyle \scriptstyle \pm 1.70$ & $93.68 \scriptstyle \scriptstyle \pm 0.10$ & $81.22 \scriptstyle \scriptstyle \pm 0.64$ & $62.26 \scriptstyle \scriptstyle \pm 1.11$ & $93.64 \scriptstyle \scriptstyle \pm 0.08$ & $80.08 \scriptstyle \scriptstyle \pm 0.34$ & \underline{$66.48 \scriptstyle \scriptstyle \pm 0.81$} & $66.34$ \\ 
CEM (w/o R.)                & \underline{$94.17 \scriptstyle \scriptstyle \pm 0.14$} & $81.96 \scriptstyle \scriptstyle \pm 0.30$ & $69.45 \scriptstyle \scriptstyle \pm 2.15$ & \underline{$93.76 \scriptstyle \scriptstyle \pm 0.20$} & $81.04 \scriptstyle \scriptstyle \pm 0.82$ & $63.56 \scriptstyle \scriptstyle \pm 1.25$ & \underline{$93.66 \scriptstyle \scriptstyle \pm 0.14$} & $79.80 \scriptstyle \scriptstyle \pm 0.36$ & $65.89 \scriptstyle \scriptstyle \pm 0.51$ & $66.30$ \\ 
CBM                       & $93.60 \scriptstyle \scriptstyle \pm 0.20$ & \underline{$83.89 \scriptstyle \scriptstyle \pm 0.49$} & \underline{$74.81 \scriptstyle \scriptstyle \pm 2.16$} & $93.50 \scriptstyle \scriptstyle \pm 0.16$ & \underline{$83.14 \scriptstyle \scriptstyle \pm 0.98$} & \underline{$63.89 \scriptstyle \scriptstyle \pm 1.16$} & $93.40 \scriptstyle \scriptstyle \pm 0.14$ & \underline{$82.10 \scriptstyle \scriptstyle \pm 0.48$} & $63.89 \scriptstyle \scriptstyle \pm 1.00$ & \underline{$67.53$} \\ 
\textbf{\methodname (Ours)}      & $\mathbf{94.63 \scriptstyle \scriptstyle \pm 0.05}$ & $\mathbf{84.97 \scriptstyle \scriptstyle \pm 0.15}$ & $\mathbf{92.90 \scriptstyle \scriptstyle \pm 0.31}$ & $\mathbf{95.15 \scriptstyle \scriptstyle \pm 0.05}$ & $\mathbf{85.06 \scriptstyle \scriptstyle \pm 0.19}$ & $\mathbf{75.87 \scriptstyle \scriptstyle \pm 0.31}$ & $\mathbf{94.58 \scriptstyle \scriptstyle \pm 0.07}$ & $\mathbf{82.81 \scriptstyle \scriptstyle \pm 0.19}$ & $\mathbf{74.66 \scriptstyle \scriptstyle \pm 0.19}$ & $\mathbf{81.15}$ \\ \bottomrule
\toprule
\textbf{Datasets}           & \multicolumn{3}{c|}{\textbf{MNIST $\to$ MNIST-M}} & \multicolumn{3}{c|}{\textbf{SVHN $\to$ MNIST}} & \multicolumn{3}{c|}{\textbf{MNIST $\to$ USPS}} & \textbf{AVG} \\ \cmidrule(lr){1-10}
\textbf{Metrics}                           & Concept         & Concept F1     & Class          & Concept         & Concept F1     & Class          & Concept         & Concept F1     & Class     & {\textbf{ACC}}  \\ \midrule
CEM                       & \underline{$86.55 \scriptstyle \scriptstyle \pm 1.01$}   &  \underline{ $72.97 \scriptstyle \scriptstyle \pm 1.46$} &  \underline{ $50.81 \scriptstyle \scriptstyle \pm 1.46$ }     & $89.20 \scriptstyle \scriptstyle \pm 1.01$     & $78.99 \scriptstyle \scriptstyle \pm 2.19$ & $67.58 \scriptstyle \scriptstyle \pm 2.91$     & \underline{ $93.08 \scriptstyle \scriptstyle \pm 0.60$  }   & \underline{$85.27 \scriptstyle \scriptstyle \pm 0.69$} & \underline{$73.71 \scriptstyle \scriptstyle \pm 3.35$}     & \underline{$64.03$} \\ 
CEM (w/o R.)                & $86.40 \scriptstyle \scriptstyle \pm 1.01$     & $72.58 \scriptstyle \scriptstyle \pm 1.01$ & $49.36 \scriptstyle \scriptstyle \pm 2.39$     & \underline{$89.89 \scriptstyle \scriptstyle \pm 2.20$}     & \underline{$80.22 \scriptstyle \scriptstyle \pm 4.31$} & \underline{$69.76 \scriptstyle \scriptstyle \pm 5.30$}     & $92.65 \scriptstyle \scriptstyle \pm 1.98$     & $83.75 \scriptstyle \scriptstyle \pm 3.83$ & $72.92 \scriptstyle \scriptstyle \pm 8.65$     & $64.01$ \\ 
CBM                       & $86.28 \scriptstyle \scriptstyle \pm 0.22$     & $72.86 \scriptstyle \scriptstyle \pm 0.22$ & $49.66 \scriptstyle \scriptstyle \pm 2.18$     & $89.63 \scriptstyle \scriptstyle \pm 0.93$     & $79.51 \scriptstyle \scriptstyle \pm 1.70$ & $65.03 \scriptstyle \scriptstyle \pm 2.94$     & $90.67 \scriptstyle \scriptstyle \pm 2.78$     & $79.34 \scriptstyle \scriptstyle \pm 6.35$ & $61.79 \scriptstyle \scriptstyle \pm 14.24$    & $58.82$ \\ 
\textbf{\methodname (Ours)}      & $\mathbf{98.51 \scriptstyle \scriptstyle \pm 0.02}$ & $\mathbf{97.20 \scriptstyle \scriptstyle \pm 0.02}$ & $\mathbf{95.24 \scriptstyle \scriptstyle \pm 0.13}$ & $\mathbf{95.22 \scriptstyle \scriptstyle \pm 0.24}$ & $\mathbf{90.95 \scriptstyle \scriptstyle \pm 0.24}$ & $\mathbf{82.49 \scriptstyle \scriptstyle \pm 0.27}$ & $\mathbf{98.78 \scriptstyle \scriptstyle \pm 0.03}$ & $\mathbf{97.46 \scriptstyle \scriptstyle \pm 0.09}$ & $\mathbf{96.01 \scriptstyle \scriptstyle \pm 0.13}$ & $\mathbf{91.25}$ \\ \bottomrule
\toprule
\textbf{Datasets}           & \multicolumn{3}{c|}{\textbf{I-II $\to$ III-IV}} & \multicolumn{3}{c|}{\textbf{III-IV $\to$ V-VI}} & \multicolumn{3}{c|}{\textbf{III-IV $\to$ I-II}} & \textbf{AVG} \\ \cmidrule(lr){1-10}
\textbf{Metrics}                          & Concept         & Concept F1     & Class          & Concept         & Concept F1     & Class          & Concept         & Concept F1     & Class     & {\textbf{ACC}} \\ \midrule
CEM                       & $93.81 \scriptstyle \scriptstyle \pm 0.16$         & $52.04 \scriptstyle \scriptstyle \pm 0.26$    & \underline{$73.41 \scriptstyle \scriptstyle \pm 0.93$}     & \underline{$93.05 \scriptstyle \scriptstyle \pm 0.02$}         & $56.46 \scriptstyle \scriptstyle \pm 0.19$    & $76.27 \scriptstyle \scriptstyle \pm 0.17$     & $93.85 \scriptstyle \scriptstyle \pm 0.16$         & \underline{$54.32 \scriptstyle \scriptstyle \pm 0.22$}    & $71.31 \scriptstyle \scriptstyle \pm 0.50$     & $73.67$ \\ 
CEM (w/o R.)                & $93.78 \scriptstyle \scriptstyle \pm 0.17$         & $51.98 \scriptstyle \scriptstyle \pm 0.27$    & $73.13 \scriptstyle \scriptstyle \pm 0.63$     & \underline{$93.05 \scriptstyle \scriptstyle \pm 0.02$}         & 
\underline{$56.47 \scriptstyle \scriptstyle \pm 0.15$}    & $76.86 \scriptstyle \scriptstyle \pm 1.19$     & $93.80 \scriptstyle \scriptstyle \pm 0.13$         & $54.26 \scriptstyle \scriptstyle \pm 0.18$    & \underline{$71.72 \scriptstyle \scriptstyle \pm 0.38$}     & \underline{$73.91$} \\ 
CBM                       & \underline{$94.11 \scriptstyle \scriptstyle \pm 0.43$}         & \underline{$52.17 \scriptstyle \scriptstyle \pm 0.68$}    & $72.37 \scriptstyle \scriptstyle \pm 0.00$              & $92.27 \scriptstyle \scriptstyle \pm 0.57$         & $56.21 \scriptstyle \scriptstyle \pm 0.57$    & \underline{$78.82 \scriptstyle \scriptstyle \pm 0.00$}              & \underline{$94.16 \scriptstyle \scriptstyle \pm 0.34$}         & $54.27 \scriptstyle \scriptstyle \pm 0.20$    & $70.49 \scriptstyle \scriptstyle \pm 0.00$              & $73.89$ \\ 
\textbf{\methodname (Ours)}      & $\mathbf{95.37 \scriptstyle \scriptstyle \pm 0.07}$ & $\mathbf{79.91 \scriptstyle \scriptstyle \pm 0.16}$ & $\mathbf{78.85 \scriptstyle \scriptstyle \pm 0.31}$  & $\mathbf{94.62 \scriptstyle \scriptstyle \pm 0.01}$ & $\mathbf{79.57 \scriptstyle \scriptstyle \pm 0.25}$ & $\mathbf{80.58 \scriptstyle \scriptstyle \pm 0.72}$  & $\mathbf{95.45 \scriptstyle \scriptstyle \pm 0.06}$ & $\mathbf{80.17 \scriptstyle \scriptstyle \pm 0.22}$ & $\mathbf{76.53 \scriptstyle \scriptstyle \pm 0.49}$  & $\mathbf{78.65}$ \\ 
\bottomrule
\end{tabular}}
\vskip -0.3cm
\end{table*}

\begin{restatable}[\textbf{Optimal Concept Embedding Encoder}]{theorem}{optimalencoder}
\label{Thm: Optimal encoder}
    Assuming $u \perp \boldsymbol{y}$, if the concept embedding encoder $E$, concept probability encoder $E_{prob}$,  the predictor $F$ and the discriminator $D$ have enough capacity and are trained to reach optimum, any global optimal concept embedding encoder $E^{*}$ and its corresponding global optimal concept probability encoder $E_{prob}^*$ have the following properties:
    \begin{align}
        E^*_{prob}(\boldsymbol{x}) & = \left[\mathbb{P}(c_i = 1 | \boldsymbol{x}) \right]_{i = 1}^K, \label{eq: optimal E, entropy_condition from concept}\\
        H\left(\boldsymbol{y} \mid E^*(\boldsymbol{x})\right) & = H(\boldsymbol{y} \mid \boldsymbol{x}), \label{eq: optimal E, entropy_condition from Predictor} \\
        \tilde{C}_d\left(E^*\right) & = \max \nolimits_{E^{\prime}} \, \tilde{C}_d\left(E^{\prime}\right). \label{eq: optimal E, maximization_condition from Discriminator}
    \end{align}
    % where $H(\cdot|\cdot)$ denotes the conditional entropy. 
\end{restatable}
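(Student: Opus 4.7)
The plan is to establish each of the three conditions separately, exploiting the fact that the overall objective in~\eqnref{eq:re_e_prob}--\ref{eq:re_e} decouples into a $E_{prob}$ subproblem and an $E$ subproblem, and then argue that under the assumption $u \perp \boldsymbol{y}$ the two sub-optimizations governing $E$ are simultaneously achievable.

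First, for~\eqnref{eq: optimal E, entropy_condition from concept}, I would appeal directly to the classical fact that the binary cross-entropy loss $L_c$ is strictly proper: for each fixed $\boldsymbol{x}$, the population minimizer of $\mathbb{E}[L_c(p, c_i) \mid \boldsymbol{x}]$ over $p \in [0,1]$ is exactly $\mathbb{P}(c_i = 1 \mid \boldsymbol{x})$. Since $\mathcal{L}_c(E_{prob})$ is a sum of such pointwise terms over all $K$ concepts and all $\boldsymbol{x}$ drawn from the source distribution, and since $E_{prob}$ has enough capacity to represent any measurable function of $\boldsymbol{x}$, the unique minimizer is the vector of posteriors, proving~\eqnref{eq: optimal E, entropy_condition from concept}.

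Next, for~\eqnref{eq: optimal E, entropy_condition from Predictor}, I would invoke \lemref{lem: Optimal Predictor}, which yields $\mathcal{L}_p(E,F^*) = H(\boldsymbol{y} \mid E(\boldsymbol{x}))$. By the data processing inequality (since $E(\boldsymbol{x})$ is a deterministic function of $\boldsymbol{x}$), $H(\boldsymbol{y}\mid E(\boldsymbol{x})) \geq H(\boldsymbol{y}\mid \boldsymbol{x})$, with equality iff $E(\boldsymbol{x})$ is a sufficient statistic of $\boldsymbol{x}$ for $\boldsymbol{y}$. Because the concept embedding architecture admits learnable positive/negative sub-embeddings of arbitrary dimension $J$, $E$ has enough capacity to retain all label-relevant information; hence the minimizer of $H(\boldsymbol{y}\mid E(\boldsymbol{x}))$ attains the lower bound $H(\boldsymbol{y}\mid \boldsymbol{x})$.

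For~\eqnref{eq: optimal E, maximization_condition from Discriminator}, note that the $E$-subproblem in~\eqnref{eq:re_e} adds the term $-\lambda_d \tilde{C}_d(E)$, so any global optimum must also maximize $\tilde{C}_d(E)$ \emph{subject to} the constraint that $H(\boldsymbol{y}\mid E(\boldsymbol{x})) = H(\boldsymbol{y}\mid \boldsymbol{x})$. The main obstacle is showing these two requirements are compatible. This is precisely where $u \perp \boldsymbol{y}$ enters: the encoder characterized by \thmref{Thm: Relax uniform alignment} makes $\boldsymbol{v}$ maximally domain-invariant in the relaxed sense (achieving $\operatorname{JSD}(p_S^{\boldsymbol{v}} \Vert p_T^{\boldsymbol{v}}) = \log 2 - \tau$), and under $u \perp \boldsymbol{y}$ removing domain information from $\boldsymbol{v}$ does not remove any label information. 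Concretely, I would exhibit an encoder that augments a sufficient statistic of $\boldsymbol{x}$ for $\boldsymbol{y}$ with a domain-stochastic component calibrated so that the induced JSD equals $\log 2 - \tau$; such an encoder simultaneously satisfies~\eqnref{eq: optimal E, entropy_condition from Predictor} and~\eqnref{eq: optimal E, maximization_condition from Discriminator}. Combining the three arguments gives the theorem.
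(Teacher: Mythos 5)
Your treatment of~\eqnref{eq: optimal E, entropy_condition from concept} matches the paper's argument in substance: the paper carries out the same minimization pointwise via the elementary fact that $y \mapsto a\log(1-y)+b\log y$ peaks at $b/(a+b)$, which is the "strictly proper" property of BCE you invoke. Your use of \lemref{lem: Optimal Predictor} and the data-processing inequality for~\eqnref{eq: optimal E, entropy_condition from Predictor} also mirrors the paper.

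Where you diverge is the existence step for~\eqnref{eq: optimal E, maximization_condition from Discriminator}. You propose to build a witness encoder that augments a sufficient statistic with a ``domain-stochastic component'' carefully calibrated so that $\operatorname{JSD}(p_S^{\boldsymbol{v}}\Vert p_T^{\boldsymbol{v}}) = \log 2 - \tau$ exactly. This takes the iff condition in \thmref{Thm: Relax uniform alignment} literally, but it is an overcomplication and leaves a nontrivial construction unverified (the encoder only sees $\boldsymbol{x}$, not $u$, so it is not obvious how to dial the induced JSD to a prescribed positive value while keeping $H(\boldsymbol{y}\mid E(\boldsymbol{x}))$ at its minimum). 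The paper's proof instead observes that $\tilde{C}_d(E) = \min\{\log 2 - \operatorname{JSD}(p_S^{\boldsymbol{v}}\Vert p_T^{\boldsymbol{v}}),\,\tau\}$ is capped at $\tau$, so $\max_{E'}\tilde{C}_d(E')=\tau$ is already attained at $\operatorname{JSD}=0$; no domain-stochastic augmentation is needed at all. Concretely it takes $E_0(\boldsymbol{x}) = P_{\boldsymbol{y}}(\cdot\mid\boldsymbol{x})$, which satisfies $H(\boldsymbol{y}\mid E_0(\boldsymbol{x})) = H(\boldsymbol{y}\mid\boldsymbol{x})$ and, under $u\perp\boldsymbol{y}$, $E_0(\boldsymbol{x})\perp u$, hence $\tilde{C}_d(E_0)=\tau$. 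So while your route would in principle reach the same place, it requires an additional and harder construction that the cap on $\tilde{C}_d$ renders unnecessary; I would either adopt the paper's $E_0$ or, if you keep your construction, explicitly justify that the calibrated component can be realized as a measurable function of $\boldsymbol{x}$ alone and that its addition preserves the entropy equality.
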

Theorem \ref{Thm: Optimal encoder} shows that, at equilibrium, (1) the optimal concept probability encoder $E_{prob}^*$ recovers the conditional distribution of the ground-truth concepts, and (2) the optimal concept embedding encoder $E^*$ preserves all the information about label $\boldsymbol{y}$ contained in the data $\boldsymbol{x}$. 

% \textbf{Relaxed Alignment's Role in the Target Error Bound.} \eqnref{eq: target error bound for CBM}

% Furthermore, relaxing the discriminator loss, i.e. allowing the uniform alignment of concept embeddings $\boldsymbol{v}$ across domains, to some extent, is equivalent to relax the uniform alignment of concept probability predictions $\hat{\boldsymbol{c}}$ across domains, but with a smaller degree of relaxation.
% This relaxed mechanism strikes a balance between domain alignment and preserving domain-specific information in the concept space, enabling the model to maintain interpretability and predictive performance while accommodating domain shifts.

\begin{table*}[!t]
\centering
\vskip -0.15cm
\caption{Classification accuracy across different datasets. Zero-shot predictor is one of the baselines and components of CONDA. 
We mark the best result with \textbf{bold face} and the second best results with \underline{underline}. 
% Bold values represent the best performance, while underlined values indicate the second-best performance. 
Average accuracy is calculated over every three datasets of the same type images. Note that these baselines do not have concept accuracy and F1 because they cannot predict concepts directly.}
\label{tab:nocon-main-table}
\resizebox{\textwidth}{!}{%
\begin{tabular}{c|ccc|c|ccc|c|ccc|c}
\toprule
\diagbox{Model}{Dataset} & \textbf{WB-2} & \textbf{WB-200} & \textbf{WB-CUB} & \textbf{AVG} & \textbf{M $\to$ M-M} & \textbf{S $\to$ M} & \textbf{M $\to$ U} & \textbf{AVG} & \textbf{I-II $\to$ III-IV} & \textbf{III-IV $\to$ V-VI} & \textbf{III-IV $\to$ I-II} & \textbf{AVG} \\
\midrule
Zero-shot & $59.27 \scriptstyle \pm 0.00$ & $1.93 \scriptstyle \pm 0.00$ & $2.11 \scriptstyle \pm 0.00$ & $21.10$ & $11.60 \scriptstyle \pm 0.00$ & $13.16 \scriptstyle \pm 0.00$ & $13.15 \scriptstyle \pm 0.00$ & $12.64$ & $69.84 \scriptstyle \pm 0.00$ & $72.50 \scriptstyle \pm 0.00$ & $72.50 \scriptstyle \pm 0.00$ & $71.61$ \\
PCBM      & $53.08 \scriptstyle \pm 1.89$ & $28.99 \scriptstyle \pm 0.53$ & $34.60 \scriptstyle \pm 0.45$ & $38.89$ & $29.66 \scriptstyle \pm 1.02$ & $21.32 \scriptstyle \pm 2.12$ & $15.55 \scriptstyle \pm 0.12$ & $22.18$ & $72.13 \scriptstyle \pm 0.33$ & $72.64 \scriptstyle \pm 0.14$ & $72.64 \scriptstyle \pm 0.14$ & $72.47$ \\
CONDA     & \underline{$70.23 \scriptstyle \pm 0.17$} & $0.79 \scriptstyle \pm 0.05$ & $0.43 \scriptstyle \pm 0.02$ & $23.82$ & $9.75 \scriptstyle \pm 0.00$ & $9.80 \scriptstyle \pm 0.00$ & $17.89 \scriptstyle \pm 0.00$ & $12.48$ & $13.12\scriptstyle \pm 0.00$ & $14.58 \scriptstyle \pm 0.00$ & $14.58 \scriptstyle \pm 0.00$ & $14.09$ \\ \midrule
DANN      & $48.08 \scriptstyle \pm 0.89$ & $67.19 \scriptstyle \pm 0.80$ & $64.52 \scriptstyle \pm 0.23$ & $59.93$ & $37.57 \scriptstyle \pm 1.13$ & $78.05 \scriptstyle \pm 2.89$ & $73.96 \scriptstyle \pm 2.66$ & $63.19$ & $75.76 \scriptstyle \pm 0.34$ & $79.16 \scriptstyle \pm 0.11$ & $73.29 \scriptstyle \pm 0.29$ & $76.07$ \\
MCD       & $55.96 \scriptstyle \pm 2.63$ & $64.87 \scriptstyle \pm 0.37$ & $64.31 \scriptstyle \pm 0.18$ & $61.71$ & $51.08 \scriptstyle \pm 2.53$ & \underline{$80.20 \scriptstyle \pm 2.08$} & $93.90 \scriptstyle \pm 0.25$ & $75.06$ & $75.12 \scriptstyle \pm 0.24$ & $78.14 \scriptstyle \pm 0.11$ & $72.34 \scriptstyle \pm 0.16$ & $75.20$ \\
SRDC      & $48.49 \scriptstyle \pm 0.54$ & $73.29 \scriptstyle \pm 0.73$ & $69.42 \scriptstyle \pm 0.77$ & $63.73$ & $30.35 \scriptstyle \pm 0.88$ & $78.99 \scriptstyle \pm 0.72$ & $93.71 \scriptstyle \pm 0.54$ & $67.68$ & $73.70 \scriptstyle \pm 0.29$ & $78.69 \scriptstyle \pm 0.40$ & $72.91 \scriptstyle \pm 0.16$ & $75.10$ \\
UTEP      & $43.50 \scriptstyle \pm 0.33$ & $69.09 \scriptstyle \pm 0.42$ & $35.28 \scriptstyle \pm 0.25$ & $49.29$ & \underline{$65.98 \scriptstyle \pm 2.26$} & $66.35 \scriptstyle \pm 0.91$ & \underline{$95.04 \scriptstyle \pm 0.63$} & $75.79$ & \underline{$76.34 \scriptstyle \pm 0.34$} & \underline{$80.34 \scriptstyle \pm 0.29$} & $74.66 \scriptstyle \pm 0.27$ & \underline{$77.11$} \\
GH++      & $45.65 \scriptstyle \pm 1.13$ & $\mathbf{79.87 \scriptstyle \pm 0.35}$ & $\mathbf{79.46 \scriptstyle \pm 0.43}$ & \underline{$68.33$} & $59.40 \scriptstyle \pm 0.86$ & $79.12 \scriptstyle \pm 0.86$ & $93.35 \scriptstyle \pm 0.59$ & \underline{$77.29$} & $75.98 \scriptstyle \pm 0.57$ & $78.76 \scriptstyle \pm 0.69$ & \underline{$75.04 \scriptstyle \pm 0.68$} & $76.59$ \\
\textbf{\methodname (Ours)}     & $\mathbf{92.90 \scriptstyle \pm 0.31}$ & $\underline{75.87 \scriptstyle \pm 0.31}$ & $\underline{74.66 \scriptstyle \pm 0.19}$ & $\mathbf{81.15}$ 
& $\mathbf{95.24 \scriptstyle \pm 0.13}$ & $\mathbf{82.49 \scriptstyle \pm 0.27}$ & $\mathbf{96.01 \scriptstyle \pm 0.13}$ & $\mathbf{91.25}$ 
& $\mathbf{78.85 \scriptstyle \pm 0.31}$  & $\mathbf{80.58 \scriptstyle \pm 0.72}$  & $\mathbf{76.53 \scriptstyle \pm 0.49}$  & $\mathbf{78.65}$ \\
\bottomrule
\end{tabular}%
}
\vskip -0.3cm
\end{table*}

\section{Experiments}\label{sec:exp}
We evaluate \methodname across eight real-world datasets.
\subsection{Evaluation Setup}
\textbf{Datasets.} 
The original \emph{Waterbirds} dataset~\citep{sagawa2019distributionally} is split into a source domain and a target domain (Waterbirds-shift), by selecting images with opposite label and background; it only includes binary labels and does not have any concept information. 
To evaluate concept-based DA, we augment the \emph{Waterbirds} dataset by incorporating concepts from the \emph{CUB} dataset~\citep{wah2011caltech}, 
% The original Waterbirds dataset
% We also expand \emph{Waterbirds} using concepts from \emph{CUB}, 
% Additionally, we derive from the CUB dataset, allowing the target labels to have up to 200 classes. Based on this, 
leading to three datasets: 
\begin{itemize}[nosep,leftmargin=18pt]
\item \textbf{Waterbirds-2} is similar to the original \emph{Waterbirds} with binary classification, i.e., landbirds/waterbirds, 
\item \textbf{Waterbirds-200} is the augmented version of \emph{Waterbirds} with 200-class labels from CUB, and 
\item \textbf{Waterbirds-CUB} contains CUB training data as the source domain and Waterbirds-shift as the target. 
\end{itemize}

We also use digit image datasets, including \textbf{MNIST} \citep{mnist}, \textbf{MNIST-M} \citep{ganin2016domain}, \textbf{SVHN} \citep{svhn}, and \textbf{USPS} \citep{usps}, as different source and target domains. Since the target labels represent the digits $0$-$9$, we design $11$ topology concepts based on these datasets.
Besides, we use \textbf{SkinCON} \citep{skincon} to evaluate our approach in the medical domain. SkinCON includes 48 concepts selected by two dermatologists, annotated on the Fitzpatrick 17k dataset \citep{groh2021evaluating}. For our experiments, we use one skin tone as the source domain and another as the target domain. Additional details are provided in Appendix~\ref{app:dataset}. 

\textbf{Baselines and Implementation Details.} For concept-based baselines, we include \textbf{CBMs} \citep{cbm}, \textbf{CEMs} \citep{cem}, and \textbf{PCBMs} \citep{pcbm}. Additionally, we use state-of-the-art unsupervised domain adaption methods as baselines, including \textbf{DANN} \citep{ganin2016domain}, \textbf{MCD} \citep{mcd}, \textbf{SRDC} \citep{srdc}, \textbf{UTEP} \citep{utep}, and \textbf{GH++} \citep{gh++}. We also include \textbf{CONDA} \citep{conda}, which performs test-time adaptation on PCBMs. Collectively, these methods define a comprehensive benchmark for domain adaptation in the context of concept learning.
We summarize the implementation details in Appendix~\ref{app:experimental_detials}.

\textbf{Evaluation Metrics.} We calculate concept accuracy and the related concept F1 score to assess the concept learning process. 
Note that only concept-based methods, i.e., CEM, CBM, and {\methodname}, have concept accuracy and concept F1. We also use class accuracy to evaluate the model's prediction accuracy. All metrics are computed on the \emph{target domain}.

\subsection{Results}

\begin{figure}[!t]
    \centering
    \includegraphics[width=\linewidth]{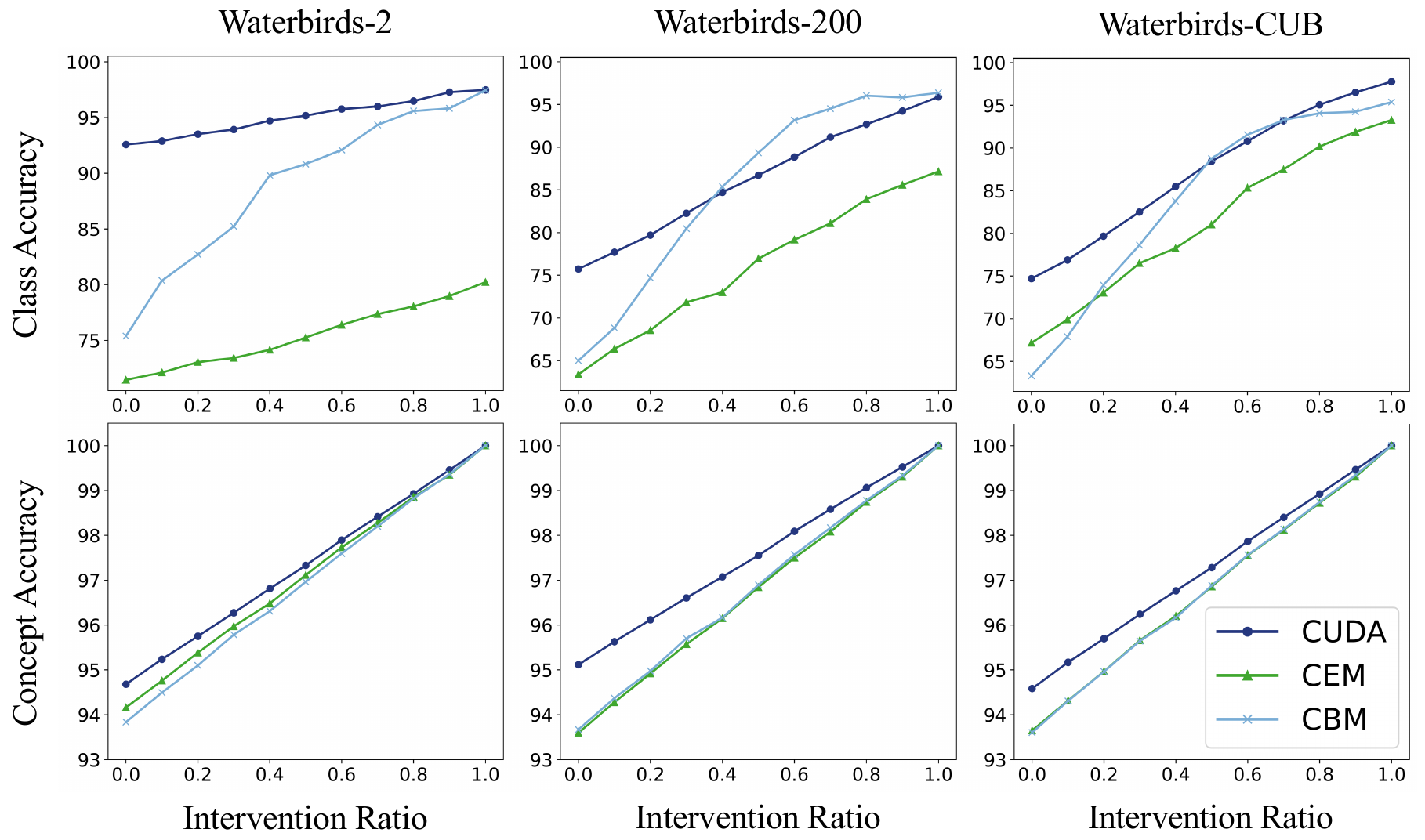} 
    \vskip -0.3cm
    \caption{Concept intervention performance with different ratios of intervened concepts on Watebirds datasets. The intervention ratio denotes the proportion of provided correct concepts.} 
    \vskip -0.6cm
    \label{fig:intervention}
\end{figure}

\textbf{Prediction.} 
% We evaluate all baselines and our approach on $9$ datasets. 
Tables~\ref{tab:cbm-main-table} and~\ref{tab:nocon-main-table} summarize the results. Table~\ref{tab:cbm-main-table} shows that our \methodname performs exceptionally well within the CBM category, achieving state-of-the-art performance across all metrics. Notably, it outperforms other CBMs by a significant margin on the Waterbirds and MNIST datasets, while demonstrating consistent improvements on SkinCON. These results highlight the effectiveness of our method in learning concepts and adapting to domain shifts.

The upper section of Table~\ref{tab:nocon-main-table} shows results for PCBM methods. Although PCBMs utilize concept banks to improve the efficiency of concept learning, their applicability to real-world domain adaptation tasks is limited, with performance falling short of standard CBMs. While CONDA incorporates test-time adaptation, its effectiveness is inconsistent, and its robustness is inferior to that of vanilla PCBMs. This underscores the importance of learning meaningful concept embeddings -- merely compressing concepts does not work well for domain adaptation tasks.

The lower section of Table~\ref{tab:nocon-main-table} shows results for DA methods and our concept-based \methodname. 
While DA models outperform some concept-based baselines, \methodname remains competitive, achieving the highest average accuracy across each type of the datasets.
Note that existing DA methods cannot learn interpretable concepts, making them challenging to apply in high-risk scenarios. Our \methodname addresses this limitation, 
ensuring interpretability without compromising performance. Limitations and future works are discussed in Appendix~\ref{app:limitations}.

\textbf{Concept Intervention.} 
Concept intervention is a key task to evaluate concept-based interpretability, where % is a critical application that highlights the interpretability of CBM methods. 
users intervene on (modify) specific predicted concepts to correct model predictions. Our \methodname is also capable of concept intervention while traditional DA is not. Similar to CBMs and CEMs~\cite{cbm, cem}, we use ground-truth concepts with varying proportions at test-time to conduct interventions. 
\figref{fig:intervention} shows the performance of different methods after intervening on (correcting) varying proportions of concepts, referred to as intervention ratios. Our {\methodname} significantly outperforms the baselines across all intervention ratios in terms of both concept accuracy and classification accuracy. 

\begin{figure}[!t]
    \centering
    \includegraphics[width=\linewidth]{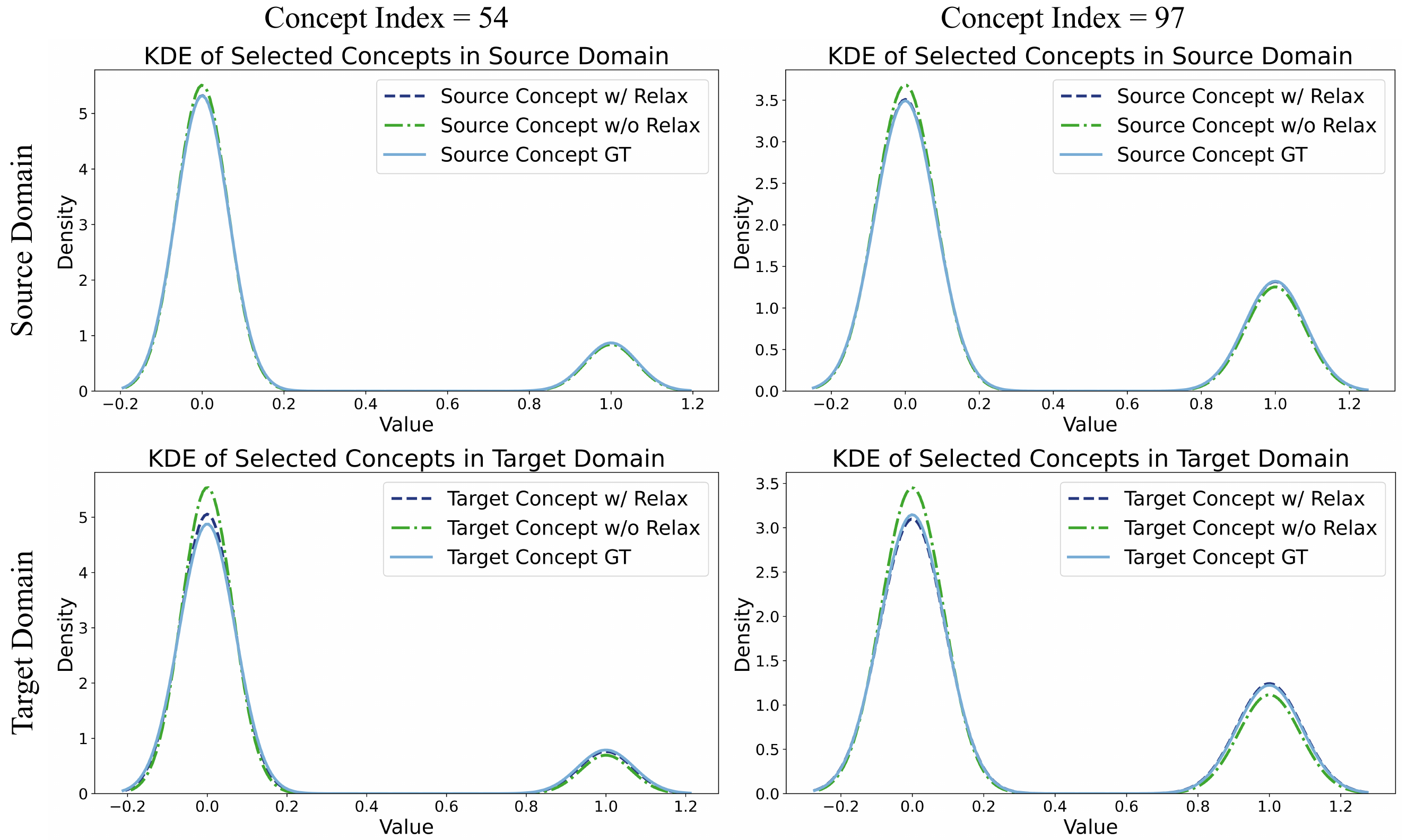} 
    \vskip -0.3cm
    \caption{The kernel density estimation (KDE) plots compare the distributions of two selected concept indices under three different scenarios: Ground-truth (GT), without relaxation (w/o Relax), and with relaxation (w/ Relax).} 
    \vskip -0.6cm
    \label{fig:relax_fig}
\end{figure}
\textbf{Alignment Relaxation.} 
In Theorem~\ref{Thm: Relax uniform alignment}, we discussed the relaxation on the discriminator loss to account for concept differences. \figref{fig:relax_fig} illustrates the distributions of two selected concept indices under three scenarios: ground-truth (GT), without relaxation (w/o Relax), and with relaxation (w/ Relax). The GT distribution serves as a reference to evaluate the impact of relaxation on concept representations.
The curves demonstrate how the relaxation process influences the density distribution of the concepts. 
Specifically, our relaxed alignment allows for greater differences between source and target concept distributions; such flexibility leads to predicted concept distributions closer to the ground truth and therefore higher final classification accuracy.

\section{Conclusion}
In this work, we proposed the \textbf{Concept-based Unsupervised Domain Adaptation (\methodname)} framework to address the challenges of generalization problem in Concept Bottleneck Models (CBMs). By aligning concept embeddings across domains through adversarial training and relaxing strict uniform alignment assumptions, \methodname enables CBMs to generalize effectively without requiring labeled concept data in the target domain. Our approach establishes new benchmarks for concept-based domain adaptation, significantly outperforming state-of-the-art CBM and DA methods while enhancing both interpretability and robustness.

% In the unusual situation where you want a paper to appear in the
% references without citing it in the main text, use \nocite
% \nocite{langley00}

% \section*{Acknowledgements}
% This work was supported by a research grant from the Research Grants Council (RGC) under the Hong Kong Joint Research Scheme (JRS), Project No. N\_HKUST654/24, as well as a research grant from the Hong Kong Innovation and Technology Fund, Project No. PRP/041/22FX.

\section*{Impact Statement}
This paper presents work whose goal is to advance the field of Machine Learning. There are many potential societal consequences of our work, none which we feel must be specifically highlighted here.

\bibliography{reference}
\bibliographystyle{icml2025}

%%%%%%%%%%%%%%%%%%%%%%%%%%%%%%%%%%%%%%%%%%%%%%%%%%%%%%%%%%%%%%%%%%%%%%%%%%%%%%%
%%%%%%%%%%%%%%%%%%%%%%%%%%%%%%%%%%%%%%%%%%%%%%%%%%%%%%%%%%%%%%%%%%%%%%%%%%%%%%%
% APPENDIX
%%%%%%%%%%%%%%%%%%%%%%%%%%%%%%%%%%%%%%%%%%%%%%%%%%%%%%%%%%%%%%%%%%%%%%%%%%%%%%%
%%%%%%%%%%%%%%%%%%%%%%%%%%%%%%%%%%%%%%%%%%%%%%%%%%%%%%%%%%%%%%%%%%%%%%%%%%%%%%%
\newpage
\appendix
\onecolumn
% \section{You \emph{can} have an appendix here.}

% You can have as much text here as you want. The main body must be at most $8$ pages long.
% For the final version, one more page can be added.
% If you want, you can use an appendix like this one.  

% The $\mathtt{\backslash onecolumn}$ command above can be kept in place if you prefer a one-column appendix, or can be removed if you prefer a two-column appendix.  Apart from this possible change, the style (font size, spacing, margins, page numbering, etc.) should be kept the same as the main body.

\section{Notation Table}
\begin{table}[htb]
\caption{Main notations used in the method section. Click  \hyperref[section: Methodology]{here} to return to the main paper.}
    \centering
    \resizebox{0.6\linewidth}{!}{ 
    \begin{tabular}{@{}ll@{}}
        \toprule
        \textbf{Notation} & \textbf{Meaning} \\ 
        \midrule
        %%%%%%%%% concept
        $\mathcal{X}$ & Input space \\
        $\mathcal{Y}$ & Label space \\
        $\mathcal{C}$ & Concept space \\
        $\mathcal{V}$ & Concept embedding space \\
        $\mathcal{H}$ & Hypothesis space \\
        $n$ & Number of source domain data\\
        $m$ & Number of target domain data \\
        $K$ & Number of concepts\\
        $Q$ & Number of classes \\
        $J$ & Dimension of concept embedding \\
        % $\boldsymbol{x}$ & Input data\\
        % $\boldsymbol{y}$ & Class label \\
        % $\hat{\boldsymbol{y}}$ & Label prediction \\
        $\boldsymbol{c}$ & Ground-truth concepts  \\
        $\hat{\boldsymbol{c}}$ & Concept predictions  \\
        $\boldsymbol{v}_i^{(+)}$/ $\boldsymbol{v}_i^{(-)}$ & The positive/ negative concept embedding of the $i$-th concept $c_i$\\
        $\boldsymbol{v}$ & Concept embedding with predicted concepts \\
        $\boldsymbol{v}^{\boldsymbol{c}}$ & Concept embedding with ground-truth concepts \\
         \midrule
        %%%%%%%% four players game
        $E$ & Concept embedding encoder \\
        $E_{prob}$ & Concept probability encoder \\
        $F$ & Label predictor \\
        $D$ & Domain discriminator \\
        %%%%%%%%% DA
        \midrule
        $\mathcal{D}_{S}$/ $ \mathcal{D}_{T}$ & Source/Target domain distribution over $\mathcal{X}$ \\
        $f_S$/ $f_T$ & Source/Target domain labeling function over $\mathcal{X}$\\
        $\tilde{\mathcal{D}}_{S}$/ $\tilde{\mathcal{D}}_{T}$ & Source/Target domain distribution over $\mathcal{V}$ \\
        $\tilde{f}_S$/ $\tilde{f}_T$ & Source/Target domain labeling function over $\mathcal{V}$ \\
        $\tilde{\mathcal{D}}_{S}^{\boldsymbol{c}}$ & Source domain distribution over $\mathcal{V}$ with ground-truth concepts\\
        $\tilde{f}_S^{\boldsymbol{c}}$ &  Source domain labeling function over $\mathcal{V}$  with ground-truth concepts\\
        $h$ & Hypothesis function\\
        $\epsilon_S$ & Source error\\
        $\epsilon_T$ & Target error\\
        $\epsilon_S^{\boldsymbol{c}}$ & Source error with ground-truth concepts \\
        \bottomrule
    \end{tabular}}
    \label{table:notations}
\end{table}

\section{Proof}
\subsection{Proof of Generalization Error Bound for CBMs}\label{subsec: Proof for Target-Domain Error Bound for Concept-Based Model}

\sourceriskerrorbound*
\begin{proof}
    Note that the concept embedding with the ground-truth concepts   $\boldsymbol{v}^{\boldsymbol{c}}$ and the concept embedding with the predicted concepts $\boldsymbol{v}$ are defined as follows:
\begin{equation*}
    \begin{aligned}
        \boldsymbol{v}^{\boldsymbol{c}} 
        % &= \left(\boldsymbol{c} \otimes \boldsymbol{1}_Q\right)\odot \boldsymbol{v}^{(+)} +  \left(\left( \boldsymbol{1}_Q-\boldsymbol{c} \right) \otimes \boldsymbol{1}_Q\right)\odot \boldsymbol{v}^{(-)}\\
        & = \left[\left(c_1 \boldsymbol{v}_1^{(+)} + (1- c_1) \boldsymbol{v}_1^{(-)} \right)^{\mathsf{T}}, \dots, \left(c_K \boldsymbol{v}_K^{(+)} + (1- c_K) \boldsymbol{v}_K^{(-)} \right)^{\mathsf{T}} \right]^{\mathsf{T}},\\
        \boldsymbol{v} 
        % &= \left(\boldsymbol{p} \otimes \boldsymbol{1}_Q\right)\odot \boldsymbol{v}^{(+)} +  \left(\left( \boldsymbol{1}_Q-\boldsymbol{p} \right) \otimes \boldsymbol{1}_Q\right)\odot \boldsymbol{v}^{(-)}\\
        & = \left[\left(\hat{c}_1 \boldsymbol{v}_1^{(+)} + (1- \hat{c}_1) \boldsymbol{v}_1^{(-)} \right)^{\mathsf{T}}, \dots, \left(\hat{c}_K \boldsymbol{v}_K^{(+)} + (1- \hat{c}_K) \boldsymbol{v}_K^{(-)} \right)^{\mathsf{T}} \right]^{\mathsf{T}},
    \end{aligned}
\end{equation*}
where $\boldsymbol{v}$ and $\boldsymbol{v}^{\boldsymbol{c}}$ share the same $\boldsymbol{v}^{(+)} = \left[{\boldsymbol{v}_1^{(+)}}^{\mathsf{T}}, \dots,  {\boldsymbol{v}_K^{(+)}}^{\mathsf{T}}\right]^{\mathsf{T}} $ and $\boldsymbol{v}^{(-)} = \left[{\boldsymbol{v}_1^{(-)}}^{\mathsf{T}}, \dots,  {\boldsymbol{v}_K^{(-)}}^{\mathsf{T}}\right]^{\mathsf{T}}$. 
% Assume $h_1$ and $h_2$ are Lipschitz continuous with the finite Lipschitz constants $L_1, L_2$, respectively. 
% Then there exist a sufficient large constant $L$ such that $\operatorname{max}\{L_1, L_2 \} \leq L$. 
Then, $\epsilon_S(h_1, h_2) $ with respect to arbitrary concept embedding $\boldsymbol{v}$ can be upper bounded by
\begin{equation}\label{eq: discrepancy 1}
    \begin{aligned}
        \epsilon_S(h_1, h_2) 
        & = \mathbb{E}_{\boldsymbol{v} \sim \tilde{\mathcal{D}}_S} \left[ \left| h_1(\boldsymbol{v}) - h_2(\boldsymbol{v}) \right|\right]\\
        & = \mathbb{E}_{\boldsymbol{v} \sim \tilde{\mathcal{D}}_S, \boldsymbol{v}^{\boldsymbol{c}} \sim \tilde{\mathcal
        D}_S^{\boldsymbol{c} }} \left[\left|h_1(\boldsymbol{v})-h_1(\boldsymbol{v}^{\boldsymbol{c}}) + h_1(\boldsymbol{v}^{\boldsymbol{c}}) - h_2(\boldsymbol{v}^{\boldsymbol{c}}) + h_2(\boldsymbol{v}^{\boldsymbol{c}}) - h_2(\boldsymbol{v}) \right| \right] \\
        & \leq \mathbb{E}_{\boldsymbol{v} \sim \tilde{\mathcal{D}}_S, \boldsymbol{v}^{\boldsymbol{c}} \sim \tilde{\mathcal
        D}_S^{\boldsymbol{c} }} \left[
        \left|h_1(\boldsymbol{v})-h_1(\boldsymbol{v}^{\boldsymbol{c}}) \right| 
        + \left|h_1(\boldsymbol{v}^{\boldsymbol{c}}) - h_2(\boldsymbol{v}^{\boldsymbol{c}}) \right| 
        + \left|h_2(\boldsymbol{v}^{\boldsymbol{c}}) - h_2(\boldsymbol{v}) \right| \right] \\
        & \overset{(i)}{\leq} 2 L \cdot \mathbb{E}_{\boldsymbol{v} \sim \tilde{\mathcal{D}}_S, \boldsymbol{v}^{\boldsymbol{c}} \sim \tilde{\mathcal{D}}_S^{\boldsymbol{c} }} \left[ \| \boldsymbol{v}^{\boldsymbol{c}} - \boldsymbol{v} \|_2 \right] + \mathbb{E}_{\boldsymbol{v}^{\boldsymbol{c}} \sim \tilde{\mathcal
        D}_S^{\boldsymbol{c} }} \left[\left|h_1(\boldsymbol{v}^{\boldsymbol{c}}) - h_2(\boldsymbol{v}^{\boldsymbol{c}}) \right| \right]\\
        & = 2 L \cdot  \mathbb{E}_{\boldsymbol{v} \sim \tilde{\mathcal{D}}_S, \boldsymbol{v}^{\boldsymbol{c}} \sim \tilde{\mathcal{D}}_S^{\boldsymbol{c} }} \left[ \| \boldsymbol{v}^{\boldsymbol{c}} - \boldsymbol{v} \|_2 \right] + \epsilon_S^{\boldsymbol{c}}(h_1, h_2),
    \end{aligned}
\end{equation}
where $(i)$ is due to the Lipschitz continuity of $h_1, h_2 \in \mathcal{H}$ with a constant $L > 0$, and $\epsilon_S^{\boldsymbol{c}}(h_1, h_2) \triangleq \mathbb{E}_{\boldsymbol{v}^{\boldsymbol{c}} \sim \tilde{\mathcal
        D}_S^{\boldsymbol{c} }} \left[\left|h_1(\boldsymbol{v}^{\boldsymbol{c}}) - h_2(\boldsymbol{v}^{\boldsymbol{c}}) \right| \right]$.
        % , and we do not restrict the form of $\| \cdot \|$ here. 
Note that for the $i$-th concept, $\boldsymbol{v}_i = \hat{c}_i \boldsymbol{v}_i^{(+)} + (1- \hat{c}_i) \boldsymbol{v}_i^{(-)}$ and $\boldsymbol{v}_{i}^{\boldsymbol{c}} = c_i \boldsymbol{v}_i^{(+)} + (1- c_i) \boldsymbol{v}_i^{(-)}$. Thus, $\boldsymbol{v}_i - \boldsymbol{v}_{i}^{\boldsymbol{c}} = \left(\hat{c}_i - c_i\right)\left(\boldsymbol{v}_i^{(+)} -  \boldsymbol{v}_i^{(-)}\right)$. Because we assume for all $\boldsymbol{v} = [\boldsymbol{v}_i]_{i=1}^K \in \mathcal{V}$, $\| \boldsymbol{v} \|_2$ is bounded. There exists a sufficiently large $M$, such that $\underset{i}{\operatorname{max}} \left\|\boldsymbol{v}_i^{(+)}-\boldsymbol{v}_i^{(-)}\right\|_2 \leq M$. Then, the difference between the  concept embedding with the ground-truth concepts and that with the predicted concepts under the Euclidean norm has the following upper bound:
\begin{equation}\label{eq: norm bound of concept embedding}
    \begin{aligned}
        \left\| \boldsymbol{v} - \boldsymbol{v}^{\boldsymbol{c}} \right\|_2
        & = \left\| \left[ \left(\boldsymbol{v}_1 - \boldsymbol{v}_{1}^{\boldsymbol{c}}\right)^{\mathsf{T}}, \dots, \left(\boldsymbol{v}_K - \boldsymbol{v}_{K}^{\boldsymbol{c}} \right)^{\mathsf{T}}\right]^{\mathsf{T}} \right\|_2 \\
        & = \left\| \left[\left(\hat{c}_1 - c_1\right) \cdot \left(\boldsymbol{v}_1^{(+)} -  \boldsymbol{v}_1^{(-)}\right)^{\mathsf{T}}, \dots, \left(\hat{c}_K - c_K\right) \cdot \left(\boldsymbol{v}_K^{(+)} -  \boldsymbol{v}_K^{(-)}\right)^{\mathsf{T}} \right]^{\mathsf{T}}\right\|_2 \\
        & \leq M \cdot \left\| \hat{\boldsymbol{c}} - \boldsymbol{c} \right\|_2.
    \end{aligned}
\end{equation}
Plugging Eqn.~\ref{eq: norm bound of concept embedding} into Eqn.~\ref{eq: discrepancy 1} and then we can get 
\begin{equation*}
    \begin{aligned}
        \epsilon_S(h_1, h_2)  
        & \leq  2 L \cdot  \mathbb{E}_{\boldsymbol{v} \sim \tilde{\mathcal{D}}_S, \boldsymbol{v}^{\boldsymbol{c}} \sim \tilde{\mathcal{D}}_S^{\boldsymbol{c} }} \left[ \| \boldsymbol{v}^{\boldsymbol{c}} - \boldsymbol{v} \|_2 \right] + \epsilon_S^{\boldsymbol{c}}(h_1, h_2)\\
        & \leq 2 L M  \cdot \mathbb{E}_{S} \left[ \left\| \hat{\boldsymbol{c}} - \boldsymbol{c} \right\|_2 \right] + \epsilon_S^{\boldsymbol{c}}(h_1, h_2),
    \end{aligned}
\end{equation*}
where $\boldsymbol{c}$ is only available in the source domain.
Letting $r = 2LM$, we complete the proof.
\end{proof}

\TargetDomainErrorBound*
\begin{proof}
    Let $h^* = \underset{h \in \mathcal{H}}{\operatorname{\argmin}}  \,  \epsilon_S^{\boldsymbol{c}}(h) + \epsilon_T(h)$ and $\eta^{\boldsymbol{c}} = \underset{h \in \mathcal{H}}{\operatorname{min}} \, \epsilon_{S}^{\boldsymbol{c}}(h) + \epsilon_{T}(h) = \epsilon_{S}^{\boldsymbol{c}}(h^*) + \epsilon_{T}(h^*)$.
    By the triangle inequality for classification error, i.e. $\epsilon\left(h_1, h_2\right) \leq \epsilon\left(h_1, h_3\right)+\epsilon\left(h_2, h_3\right)$, we have 
    \begin{equation} \label{eq: target error initial}
        \begin{aligned}
            \epsilon_T(h) 
            & \leq \epsilon_T\left(h^*\right)+\epsilon_T\left(h, h^*\right) \\
            & \leq \epsilon_T\left(h^*\right)+\epsilon_S\left(h, h^*\right)+\left|\epsilon_S\left(h, h^*\right) - \epsilon_T\left(h, h^*\right) \right|. \\
        \end{aligned}
    \end{equation}
We define the source error for concept embedding constructed using ground-truth concepts as:
\begin{equation*}
    \begin{aligned}
        \epsilon_S^{\boldsymbol{c}}(h) & \triangleq \epsilon_S^{\boldsymbol{c}}(h,\tilde{f}_S^{\boldsymbol{c}}) = \mathbb{E}_{\boldsymbol{v}^{\boldsymbol{c}} \sim \tilde{\mathcal{D}}^{\boldsymbol{c}}_S} \left[\left| \tilde{f}_S^{\boldsymbol{c}}(\boldsymbol{v}^{\boldsymbol{c}}) - h(\boldsymbol{v}^{\boldsymbol{c}})\right| \right],
    \end{aligned}
\end{equation*}
where $\tilde{\mathcal{D}}^{\boldsymbol{c}}_S$ is the marginal distribution over $\boldsymbol{v}^{\boldsymbol{c}}$ and $\tilde{f}^{\boldsymbol{c}}_S(\boldsymbol{v}^{\boldsymbol{c}}) \triangleq  \mathbb{E}_{\boldsymbol{x} \sim \mathcal{D}_S}[f_S(\boldsymbol{x}) \mid E(\boldsymbol{x})=\boldsymbol{v}^{\boldsymbol{c}}]$
is the corresponding induced labeling function. Note that $\tilde{f}_S^{\boldsymbol{c}}$ can also be a hypothesis.
    Then for the second term $\epsilon_S\left(h, h^*\right)$, we can bound it by the source error with ground-truth concepts:
    \begin{equation}\label{eq: target error bound for term 2}
        \begin{aligned}
            \epsilon_S\left(h, h^*\right) 
            & \leq \epsilon_S\left(h, \tilde{f}_S^{\boldsymbol{c}}\right) + \epsilon_S\left(h^*, \tilde{f}_S^{\boldsymbol{c}}\right) \\
            & \leq \left( \left| \epsilon_S\left(h, \tilde{f}_S^{\boldsymbol{c}}\right) - \epsilon_S^{\boldsymbol{c}}\left(h, \tilde{f}_S^{\boldsymbol{c}}\right) \right| + \epsilon_S^{\boldsymbol{c}}\left(h, \tilde{f}_S^{\boldsymbol{c}}\right) \right)
            + \left( \left| \epsilon_S\left(h^*, \tilde{f}_S^{\boldsymbol{c}}\right) - \epsilon_S^{\boldsymbol{c}}\left(h^*, \tilde{f}_S^{\boldsymbol{c}}\right) \right| + \epsilon_S^{\boldsymbol{c}}\left(h^*, \tilde{f}_S^{\boldsymbol{c}}\right)  \right)\\
            & \overset{(i)}{\leq} \left(r_1 \cdot \mathbb{E}_{S} \left[ \left\| \hat{\boldsymbol{c}} - \boldsymbol{c} \right\|_2 \right] + \epsilon_S^{\boldsymbol{c}}\left(h\right) \right) + \left( r_2 \cdot \mathbb{E}_{S} \left[ \left\| \hat{\boldsymbol{c}} - \boldsymbol{c} \right\|_2 \right] + \epsilon_S^{\boldsymbol{c}}\left(h^*\right) \right),
        \end{aligned}
    \end{equation}
    where $\epsilon_S^{\boldsymbol{c}}\left(h, \tilde{f}_S^{\boldsymbol{c}}\right) = \epsilon_S^{\boldsymbol{c}}\left(h \right)$ and $\epsilon_S^{\boldsymbol{c}}\left(h^*, \tilde{f}_S^{\boldsymbol{c}}\right) = \epsilon_S^{\boldsymbol{c}}\left(h^* \right)$, and (i) is due to Lemma \ref{lem: disagreement between aribitary functions in hypothesis H}: there exists finite constant $r_1, r_2$ such that $\left| \epsilon_S\left(h, \tilde{f}_S^{\boldsymbol{c}}\right) - \epsilon_S^{\boldsymbol{c}}\left(h, \tilde{f}_S^{\boldsymbol{c}}\right) \right| \leq r_1 \cdot \mathbb{E}_{S} \left[ \left\| \hat{\boldsymbol{c}} - \boldsymbol{c} \right\|_2 \right]$ and $\left| \epsilon_S\left(h^*, \tilde{f}_S^{\boldsymbol{c}}\right) - \epsilon_S^{\boldsymbol{c}}\left(h^*, \tilde{f}_S^{\boldsymbol{c}}\right) \right| \leq r_2 \cdot \mathbb{E}_{S} \left[ \left\| \hat{\boldsymbol{c}} - \boldsymbol{c} \right\|_2 \right]$. By the definition of $\mathcal{H} \Delta \mathcal{H}$ divergence \citep{Ben-David_Blitzer_Crammer_Kulesza_Pereira_Vaughan_2010}:
    \begin{equation*}
        d_{\mathcal{H} \Delta \mathcal{H}}\left(\tilde{\mathcal{D}}_S^{\boldsymbol{c}}, \tilde{\mathcal{D}}_T\right) 
        \triangleq 2 \sup _{h, h^{\prime} \in \mathcal{H}}\left|
        \mathbb{P}_{\boldsymbol{v} \sim \tilde{\mathcal{D}}_T}\left[h(\boldsymbol{v}) \neq h^{\prime}(\boldsymbol{v}) \right] 
        -
        \mathbb{P}_{\boldsymbol{v}^{\boldsymbol{c}} \sim \tilde{\mathcal{D}}^{\boldsymbol{c}}_S}\left[h(\boldsymbol{v}^{\boldsymbol{c}}) \neq h^{\prime}(\boldsymbol{v}^{\boldsymbol{c}})\right]\right|,
    \end{equation*}
    the last term of Eqn.~\ref{eq: target error initial} is bounded by
    \begin{equation}\label{eq: target error bound for term 3}
        \begin{aligned}
            \left|\epsilon_S\left(h, h^*\right)-\epsilon_T\left(h, h^*\right)\right|
            & \leq \left|\epsilon_S\left(h, h^*\right)-\epsilon_S^{\boldsymbol{c}}\left(h, h^*\right)\right|
            + \left|\epsilon_T\left(h, h^*\right)-\epsilon_S^{\boldsymbol{c}}\left(h, h^*\right)\right|\\
            & \overset{(ii)}{\leq} r_3 \cdot \mathbb{E}_{S} \left[ \left\| \hat{\boldsymbol{c}} - \boldsymbol{c} \right\|_2 \right] 
            + \left|\epsilon_T\left(h, h^*\right)-\epsilon_S^{\boldsymbol{c}}\left(h, h^*\right)\right|\\
            & \leq r_3 \cdot \mathbb{E}_{S} \left[ \left\| \hat{\boldsymbol{c}} - \boldsymbol{c} \right\|_2 \right]  
            + \sup _{h, h^{\prime} \in \mathcal{H}}\left|\epsilon_T\left(h, h^{\prime}\right)-\epsilon_S^{\boldsymbol{c}}\left(h, h^{\prime}\right)\right|\\
            & \leq r_3 \cdot \mathbb{E}_{S} \left[ \left\| \hat{\boldsymbol{c}} - \boldsymbol{c} \right\|_2 \right]  
            + \sup _{h, h^{\prime} \in \mathcal{H}}\left|
            \mathbb{P}_{\boldsymbol{v} \sim \tilde{\mathcal{D}}_T}\left[h(\boldsymbol{v}) \neq h^{\prime}(\boldsymbol{v}) \right] 
            -
            \mathbb{P}_{\boldsymbol{v}^{\boldsymbol{c}} \sim \tilde{\mathcal{D}}^{\boldsymbol{c}}_S}\left[h(\boldsymbol{v}^{\boldsymbol{c}}) \neq h^{\prime}(\boldsymbol{v}^{\boldsymbol{c}})\right]\right| \\
            & = r_3 \cdot \mathbb{E}_{S} \left[ \left\| \hat{\boldsymbol{c}} - \boldsymbol{c} \right\|_2 \right]   + \frac{1}{2} d_{\mathcal{H} \Delta \mathcal{H}}\left(\tilde{\mathcal{D}}_S^{\boldsymbol{c}}, \tilde{\mathcal{D}}_T\right). 
        \end{aligned}
    \end{equation}
    where (ii) is also due to Lemma \ref{lem: disagreement between aribitary functions in hypothesis H} with the constant $r =r_3$.
    Plugging Eqn.~\ref{eq: target error bound for term 2} and Eqn.~\ref{eq: target error bound for term 3} into Eqn.~\ref{eq: target error initial}, then we can obtain the final upper bound of target error for CBMs:
    \begin{equation*}
        \begin{aligned}
             \epsilon_T(h) 
            & \leq \epsilon_T\left(h^*\right)+\epsilon_S\left(h, h^*\right)+\left|\epsilon_S\left(h, h^*\right) - \epsilon_T\left(h, h^*\right) \right|\\
            & \leq \epsilon_T\left(h^*\right)
            + \left(r_1 \cdot \mathbb{E}_{S} \left[ \left\| \hat{\boldsymbol{c}} - \boldsymbol{c} \right\|_2 \right] + \epsilon_S^{\boldsymbol{c}}\left(h\right) \right) + \left( r_2 \cdot \mathbb{E}_{S} \left[ \left\| \hat{\boldsymbol{c}} - \boldsymbol{c} \right\|_2 \right] + \epsilon_S^{\boldsymbol{c}}\left(h^*\right) \right)
            + r_3 \cdot \mathbb{E}_{S} \left[ \left\| \hat{\boldsymbol{c}} - \boldsymbol{c} \right\|_2 \right]   + \frac{1}{2} d_{\mathcal{H} \Delta \mathcal{H}}\left(\tilde{\mathcal{D}}_S^{\boldsymbol{c}}, \tilde{\mathcal{D}}_T\right)\\
            & = \epsilon_S^{\boldsymbol{c}}\left(h\right) + \epsilon_S^{\boldsymbol{c}}\left(h^*\right) + \epsilon_T\left(h^*\right) + R \cdot \mathbb{E}_{S} \left[ \left\| \hat{\boldsymbol{c}} - \boldsymbol{c} \right\|_2 \right] + \frac{1}{2} d_{\mathcal{H} \Delta \mathcal{H}}\left(\tilde{\mathcal{D}}_S^{\boldsymbol{c}}, \tilde{\mathcal{D}}_T\right)\\
            & = \epsilon_S^{\boldsymbol{c}}\left(h\right) + \eta^{\boldsymbol{c}} + R \cdot \mathbb{E}_{S} \left[ \left\| \hat{\boldsymbol{c}} - \boldsymbol{c} \right\|_2 \right]+ \frac{1}{2} d_{\mathcal{H} \Delta \mathcal{H}}\left(\tilde{\mathcal{D}}_S^{\boldsymbol{c}}, \tilde{\mathcal{D}}_T\right),\\
        \end{aligned}
    \end{equation*}
    where $R = r_1 + r_2 + r_3$ and $\eta^{\boldsymbol{c}} = \epsilon_S^{\boldsymbol{c}}\left(h^*\right) + \epsilon_T\left(h^*\right)$, completing the proof.
\end{proof}

\subsection{Proof of Theoretical Analysis for \methodname}\label{subsec:  Theoretical Analysis Proof}
\optimaldiscriminator*
\begin{proof}
With $E$ fixed, the optimal $D$ should be
    \begin{equation*}
        \begin{aligned}
        D^*_E 
            & = \underset{D}{\operatorname{\argmin}} \, \mathbb{E}_{(\boldsymbol{x}, u) \sim p(\boldsymbol{x},u)}\left[L_d(D(E(\boldsymbol{x})), u)\right]\\
            & = \underset{D}{\operatorname{\argmin}} \, \mathbb{E}_{(\boldsymbol{x}, u) \sim p(\boldsymbol{x},u)}\left[u \log{\frac{1}{D(E(\boldsymbol{x}))}} + (1-u) \log{\frac{1}{1 - D(E(\boldsymbol{x}))}} \right] \\
            & = \underset{D}{\operatorname{\argmin}} \, \mathbb{E}_{\boldsymbol{v} \sim p(\boldsymbol{v})} \left[ \mathbb{E}_{u \sim p(u | \boldsymbol{v})}\left[u \log{\frac{1}{D(\boldsymbol{v})}} + (1-u) \log{\frac{1}{1 - D(\boldsymbol{v}))}}\right] \right]\\
            & = \underset{D}{\operatorname{\argmin}} \, \mathbb{E}_{\boldsymbol{v} \sim p(\boldsymbol{v})} \left[ \mathbb{E}\left[u | \boldsymbol{v} \right] \cdot \log{\frac{1}{D(\boldsymbol{v})}} + \left(1-\mathbb{E}\left[u|\boldsymbol{v}\right] \right) \cdot \log{\frac{1}{1 - D(\boldsymbol{v}))}} \right] \\
            & = \underset{D}{\operatorname{\argmax}} \, \mathbb{E}_{\boldsymbol{v} \sim p(\boldsymbol{v})} \left[ \mathbb{E}\left[u | \boldsymbol{v} \right] \cdot \log{{D(\boldsymbol{v})}} + \left(1-\mathbb{E}\left[u|\boldsymbol{v}\right] \right) \cdot \log{(1 - D(\boldsymbol{v}))} \right], \\
            % & = \underset{D}{\operatorname{\argmin}} \, \mathbb{E}_S[\log{\frac{1}{1 - D(E(\boldsymbol{x}))}}] + \mathbb{E}_T[\log{\frac{1}{D(E(\boldsymbol{x}))}}] \\
            % & = \underset{D}{\operatorname{\argmin}} \int_{\boldsymbol{x}} \log{\frac{1}{1 - D(E(\boldsymbol{x}))}} p_S^{\boldsymbol{x}}(\boldsymbol{x}) + \log{\frac{1}{D(E(\boldsymbol{x}))}} p_T^{\boldsymbol{x}}(\boldsymbol{x}) \,d\boldsymbol{x} \\
            % & = \underset{D}{\operatorname{\argmin}} \int_{\boldsymbol{v}} \log{\frac{1}{1 - D(\boldsymbol{v})}} p_S^{\boldsymbol{v}}(\boldsymbol{v}) + \log{\frac{1}{D(\boldsymbol{v})}} p_T^{\boldsymbol{v}}(\boldsymbol{v}) \,d\boldsymbol{v} \\
            % & = \underset{D}{\operatorname{\argmax}} \int_{\boldsymbol{v}} \log{\left({1 - D(\boldsymbol{v})}\right)} p_S^{\boldsymbol{v}}(\boldsymbol{v}) + \log{\left({D(\boldsymbol{v})}\right)} p_T^{\boldsymbol{v}}(\boldsymbol{v}) \,d\boldsymbol{v}, 
        \end{aligned}
    \end{equation*}
    where $\boldsymbol{v} = E(\boldsymbol{x})$.
    Note that for any $(a, b) \in \mathbb{R}^2 \backslash(0,0)$, the function $y \rightarrow a \log (1-y) + b \log (y)$ achieves its maximum in $[0, 1]$ at $\frac{b}{a+b}$. Note that $\mathbb{P}(u = 0) = \mathbb{P}(u = 1) = \frac{1}{2}$, thus we have 
    \begin{equation*}
        \begin{aligned}
            D_E^*(\boldsymbol{v}) 
            & = \mathbb{E}\left[ u | \boldsymbol{v}\right] = \mathbb{P}\left( u = 1 | \boldsymbol{v}\right)\\
            & \overset{(i)}{=} \frac{p\left( \boldsymbol{v} | u = 1\right) \mathbb{P}(u = 1)}{p\left( \boldsymbol{v} | u = 1\right) \mathbb{P}(u = 1) 
            + p\left( \boldsymbol{v} | u = 0\right) \mathbb{P}(u = 0)}\\
            & = \frac{p( \boldsymbol{v}| u = 1 )}{p( \boldsymbol{v}| u = 1 ) + p( \boldsymbol{v}| u = 0 ) }\\
            & = \frac{p_{T}^{\boldsymbol{v}}(\boldsymbol{v})}{p_{S}^{\boldsymbol{v}}(\boldsymbol{v})+p_{T}^{\boldsymbol{v}}(\boldsymbol{v})},
        \end{aligned}
    \end{equation*}
    where $(i)$ is due to the Bayes rule,
    and the discriminator does not need to be defined outside of $Supp(p_S^{\boldsymbol{v}}(\boldsymbol{v})) \cup Supp(p_T^{\boldsymbol{v}}(\boldsymbol{v}))$.
\end{proof}

\RelaxUniformAlignment*
\begin{proof}
If $D$ always achieves its optimum w.r.t $E$ during the training, we have
    \begin{equation}\label{eq:C_d(E)_expression}
        \begin{aligned}
            C_d(E) & \triangleq \min _D \mathcal{L}_d(E, D)=\mathcal{L}_d\left(E, D_E^*\right)\\
            & = \mathbb{E}\left[L_d(D_E^*(E(\boldsymbol{x})), u)\right]\\
            & = \mathbb{E}_{(\boldsymbol{v}, u) \sim p(\boldsymbol{v},u)}\left[u \log{\frac{1}{D_E^*(\boldsymbol{v})}} + (1-u) \log{\frac{1}{1 - D_E^*(\boldsymbol{v})}} \right] \\
            & =  \mathbb{E}_{\boldsymbol{v} \sim p(\boldsymbol{v})} \left[ \mathbb{E}_{u \sim p(u | \boldsymbol{v})}\left[u \right] \cdot \log{\frac{1}{\mathbb{E}_{u \sim p(u | \boldsymbol{v})}\left[u \right]}} + \left(1-\mathbb{E}_{u \sim p(u | \boldsymbol{v})}\left[u \right] \right) \cdot \log{\frac{1}{1 - \mathbb{E}_{u \sim p(u | \boldsymbol{v})}\left[u \right]}} \right] \\
            & =  \mathbb{E}_{\boldsymbol{v} \sim p(\boldsymbol{v})} \left[ \mathbb{P}\left(u = 1| \boldsymbol{v} \right) \cdot \log{\frac{1}{\mathbb{P}\left(u = 1| \boldsymbol{v} \right)}} + \mathbb{P}\left(u = 0| \boldsymbol{v} \right) \cdot \log{\frac{1}{\mathbb{P}\left(u = 0 | \boldsymbol{v} \right)}}  \right] \\
            % & = \mathbb{E}_{\boldsymbol{v} \sim p(\boldsymbol{v})} \left[ H\left( u | \boldsymbol{v} \right) \right]  \\
            & = H(u | \boldsymbol{v})
            % & = \mathbb{E}_{\boldsymbol{v} \sim p(\boldsymbol{v})} \left[ H\left( \mathbb{E}\left[u | \boldsymbol{v}\right] \right) \right] \\
            % & = \mathbb{E}_{\boldsymbol{v} \sim p(\boldsymbol{v})} \left[ H\left( \frac{p_T^{\boldsymbol{v}}(\boldsymbol{v})}{p_S^{\boldsymbol{v}}(\boldsymbol{v}) + p_T^{\boldsymbol{v}}(\boldsymbol{v})} \right) \right] \\
            % & \overset{(i)}{\leq} H\left( \mathbb{E}_{\boldsymbol{v} \sim p(\boldsymbol{v})} \left[ \mathbb{E}_{u \sim p(u | \boldsymbol{v})}\left[u \right] \right]\right) = H\left( \mathbb{E}[u] \right)\\
             = H(u) - I(\boldsymbol{v}, u).
        \end{aligned}
\end{equation}

Note that $\mathbb{P}(u = 1) = \mathbb{P}(u = 0) = \frac12$, then we have 
\begin{equation*}
    H(u) = \mathbb{P}(u = 1) \cdot \log\frac{1}{\mathbb{P}(u = 1)} + \mathbb{P}(u = 0) \cdot \log\frac{1}{\mathbb{P}(u = 0)} = \log 2,
\end{equation*}
and
\begin{equation*}
    \begin{aligned}
        I(\boldsymbol{v}, u) 
        & \triangleq \mathbb{E}_{(\boldsymbol{v}, u)}\left[\log \frac{p(u, \boldsymbol{v})}{p(u) \cdot p(\boldsymbol{v})}\right]\\
        & =\mathbb{E}_{u \sim p(u)}\left[\mathbb{E}_{\boldsymbol{v} \sim p(\boldsymbol{v} \mid u)}\left[\log \frac{p(\boldsymbol{v} \mid u)}{p(\boldsymbol{v})}\right]\right]\\
        & =\mathbb{E}_{u \sim p(u)}\left[\operatorname{KL}(p(\boldsymbol{v} \mid u) \| p(\boldsymbol{v})) \right]\\
        & = \operatorname{KL}(p(\boldsymbol{v} \mid u = 1) \| p(\boldsymbol{v})) \cdot \mathbb{P}(u = 1) + \operatorname{KL}(p(\boldsymbol{v} \mid u = 0) \| p(\boldsymbol{v})) \cdot \mathbb{P}(u = 0) \\
        & =  \frac{1}{2} \left( \operatorname{KL}\left(p(\boldsymbol{v} \mid u = 1) \| \frac{p(\boldsymbol{v} \mid u = 1) + p(\boldsymbol{v} \mid u = 0)}{2}\right) + \operatorname{KL}\left(p(\boldsymbol{v} \mid u = 0) \| \frac{p(\boldsymbol{v} \mid u = 1) + p(\boldsymbol{v} \mid u = 0)}{2}\right) \right) \\
        & = \operatorname{JSD}\left(p(\boldsymbol{v}| u = 1) \| p(\boldsymbol{v}| u = 0) \right) \\
        & = \operatorname{JSD}\left(p_T^{\boldsymbol{v}}(\boldsymbol{v}) \| p_S^{\boldsymbol{v}}(\boldsymbol{v}) \right),
    \end{aligned}
\end{equation*}
where $p(\boldsymbol{v})
         = p(\boldsymbol{v} | u = 1) \cdot \mathbb{P}(u = 1) + p(\boldsymbol{v} | u = 0) \cdot \mathbb{P}(u = 0) = \frac{p(\boldsymbol{v} | u = 1) + p(\boldsymbol{v} | u = 0)}{2}$, and JSD is short for Jensen–Shannon divergence, which is both non-negative and zero if and only if the two distributions are equal.
Then Eqn.~\ref{eq:C_d(E)_expression} can be rewritten as 
\begin{equation*}
    C_d(E) = \log 2 - \operatorname{JSD}\left(p_T^{\boldsymbol{v}}(\boldsymbol{v}) \| p_S^{\boldsymbol{v}}(\boldsymbol{v}) \right).
\end{equation*}
To obtain the maximum of $C_d(E)$, $E$ should satisfy
\begin{equation*}
    p_{S}^{\boldsymbol{v}}(\boldsymbol{v}) 
    = p_{T}^{\boldsymbol{v}}(\boldsymbol{v}),
\end{equation*}
and the corresponding maximum value equals $\log2$.
Thus, the relaxed objective $\tilde{C}_d(E)$ defined in Eqn.~\ref{eq:relaxed_obj}: 
\begin{equation*}
\begin{aligned}
    \tilde{C}_d(E) \triangleq &  \, \tilde{\mathcal{L}}_d(E, D_E^*) = \min \{ \mathcal{L}_d(E, D_E^*), \tau\} = \min \{C_d(E), \tau \}\\
     = & \min \{\log2 - \operatorname{JSD}\left(p_T^{\boldsymbol{v}}(\boldsymbol{v}) \| p_S^{\boldsymbol{v}}(\boldsymbol{v}) \right) , \tau\}
\end{aligned}
\end{equation*}
achieves its global maximum if and only if the concept embedding encoder satisfies:
\begin{equation}\label{eq:jsd_v_proof}
    \operatorname{JSD}\left(p_T^{\boldsymbol{v}}(\boldsymbol{v}) \| p_S^{\boldsymbol{v}}(\boldsymbol{v}) \right)  = \log2 -\tau.
\end{equation}

Similarly, we can also obtain $I(\hat{\boldsymbol{c}}, u) = \operatorname{JSD}\left(p_T^{\hat{\boldsymbol{c}}}(\hat{\boldsymbol{c}}) \| p_S^{\hat{\boldsymbol{c}}}(\hat{\boldsymbol{c}})\right)$.
For the $i$-th concept, $\boldsymbol{v}_i^{(+)}$ and $\boldsymbol{v}_i^{(-)}$ are first mapped to $\hat{c}_i$, which is then used to combine them into $\boldsymbol{v}_i$ as follows:
\begin{equation*}
    \boldsymbol{v}_i = \hat{\boldsymbol{c}}_i \boldsymbol{v}_i^{(+)} + ( 1 -\hat{\boldsymbol{c}}_i) \boldsymbol{v}_i^{(-)}.
\end{equation*}
% This produces $\boldsymbol{v} = [\boldsymbol{v}_i]_{i=1}^K$ and $\hat{\boldsymbol{c}} = [\hat{c}_i]_{i=1}^K$.
This indicates that $\boldsymbol{v}$ contains all the information of $\hat{\boldsymbol{c}}$, and $H(u | \boldsymbol{v}) = H(u | \boldsymbol{v}, \hat{\boldsymbol{c}})$. Thus, we have
\begin{equation*}
    \begin{aligned}
        I(\boldsymbol{v}, u )
        & = H(u) - H(u | \boldsymbol{v})\\
        & = H(u) - H(u | \boldsymbol{v}, \hat{\boldsymbol{c}}) \\
        & = H(u) - H(u | \hat{\boldsymbol{c}}) +  H(u | \hat{\boldsymbol{c}}) - H(u | \boldsymbol{v}, \hat{\boldsymbol{c}}) \\
        & = I(\hat{\boldsymbol{c}}, u) + I(\boldsymbol{v}, u | \hat{\boldsymbol{c}}),
    \end{aligned}
\end{equation*}
which is equivalent to 
\begin{equation}\label{eq:jsd_relationship_between_v_and_c}
    \operatorname{JSD}\left(p_T^{\boldsymbol{v}}(\boldsymbol{v}) \| p_S^{\boldsymbol{v}}(\boldsymbol{v}) \right) = \operatorname{JSD}(p_T^{\hat{\boldsymbol{c}}}(\hat{\boldsymbol{c}}) \| p_S^{\hat{\boldsymbol{c}}}(\hat{\boldsymbol{c}})) + I (\boldsymbol{v}, u | \hat{\boldsymbol{c}}).
\end{equation}
Plugging Eqn.~\ref{eq:jsd_v_proof} into Eqn.~\ref{eq:jsd_relationship_between_v_and_c}, we finally obtain
\begin{equation*}
    \operatorname{JSD}(p_T^{\hat{\boldsymbol{c}}}(\hat{\boldsymbol{c}}) \| p_S^{\hat{\boldsymbol{c}}}(\hat{\boldsymbol{c}}) ) = \log2 -\tau - I (\boldsymbol{v}, u | \hat{\boldsymbol{c}}),
\end{equation*}
completing the proof.
\end{proof}

As for the special case for the theorem above, $\tau = \log 2$, it follows that 
\begin{equation*}
    \operatorname{JSD}\left(p_T^{\boldsymbol{v}}(\boldsymbol{v}) \| p_S^{\boldsymbol{v}}(\boldsymbol{v}) \right) = 0,
\end{equation*}
which implies $\boldsymbol{v} \perp u$. Thus, in Eqn.~\ref{eq:jsd_relationship_between_v_and_c} the last term  $I (\boldsymbol{v}, u | \hat{\boldsymbol{c}}) = 0$, and 
    \begin{equation*}
        \operatorname{JSD}\left(p_T^{\hat{\boldsymbol{c}}}(\hat{\boldsymbol{c}}) \| p_S^{\hat{\boldsymbol{c}}}(\hat{\boldsymbol{c}}) \right) = \log2 -\tau - I (\boldsymbol{v}, u | \hat{\boldsymbol{c}}) = 0 - 0 = 0,
    \end{equation*}
    which is equivalent to $\hat{\boldsymbol{c}} \perp u$.

\optimalpredictor*
\begin{proof}
With $E$ fixed, the prediction loss $\mathcal{L}_p(E,F)$ can be rewritten as 
\begin{equation*}
    \begin{aligned}
        \mathcal{L}_p(E,F) 
        & = \mathbb{E}\left[L_p(F(E(\boldsymbol{x})), \boldsymbol{y})\right]\\
        & = \mathbb{E}_{(\boldsymbol{x}, \boldsymbol{y}) \sim p(\boldsymbol{x},\boldsymbol{y})}\left[\sum_{i=1}^Q y_i \log{\frac{1}{(F(E(\boldsymbol{x})))_i}} \right]\\
        & =   \mathbb{E}_{(\boldsymbol{v}, \boldsymbol{y}) \sim p(\boldsymbol{v},\boldsymbol{y})}\left[ \sum_{i=1}^Q y_i \log{\frac{1}{(F(\boldsymbol{v}))_i}} \right]\\
        & =   \mathbb{E}_{\boldsymbol{v} \sim p(\boldsymbol{v})} \left[ \sum_{i=1}^Q\mathbb{E}_{y_i \sim \mathbb{P}(y_i | \boldsymbol{v})}\left[ y_i \log{\frac{1}{(F(\boldsymbol{v}))_i}} \right]\right]\\
        & =   \mathbb{E}_{\boldsymbol{v} \sim p(\boldsymbol{v})} \left[ \sum_{i=1}^Q \mathbb{E}\left[ y_i | \boldsymbol{v}\right] \log{\frac{1}{(F(\boldsymbol{v}))_i}} \right],\\
    \end{aligned}
\end{equation*}
where  $F(E(\boldsymbol{x})) = F(\boldsymbol{v}) \in \mathbb{R}^Q$, and we denote the $i$-th component of $F(\boldsymbol{v})$ as $(F(\boldsymbol{v}))_i$.
Note that $F(\boldsymbol{v})$ must satisfy the following constraints: (1) $(F(\boldsymbol{v}))_i \geq 0$ for all $i \in \{1, \dots, Q\}$, (2) $\sum_{i=1}^Q (F(\boldsymbol{v}))_i=1$. Thus, minimizing the prediction loss $\mathcal{L}_p(E,F)$ w.r.t. $F$ is equivalent to solve the following constrained optimization problem:
\begin{equation*}
    \begin{aligned} 
    &\underset{F(\boldsymbol{v})}{\operatorname{\max}} \quad \sum_{i=1}^Q \mathbb{E}\left[ y_i | \boldsymbol{v}\right] \log (F(\boldsymbol{v}))_i \\
        &\begin{array}{r@{\quad}r@{}l@{\quad}l}
        s.t. &\sum_{i=1}^Q (F(\boldsymbol{v}))_i & = 1\\
             & (F(\boldsymbol{v}))_i & \geq 0, \, i \in \{1, \dots, Q \}. \\
    \end{array}
    \end{aligned}
\end{equation*}
To solve this constrained problem, we first define the Lagrangian function:
\begin{equation*}
    l(F(\boldsymbol{v}), \lambda, \boldsymbol{\mu}) 
    =  \sum_{i=1}^Q \mathbb{E}\left[ y_i | \boldsymbol{v}\right] \cdot  \log (F(\boldsymbol{v}))_i + \lambda\left(1 - \sum_{j=1}^Q (F(\boldsymbol{v}))_j \right) + \sum_{k = 1}^Q \mu_k \cdot (F(\boldsymbol{v}))_k,
\end{equation*}
where $\lambda \geq 0$ and $\mu_i \geq 0$ for $i \in \{1, \dots, Q \}$.
By the first-order Karush-Kuhn-Tucker (KKT) conditions:
\begin{equation*}
    \begin{aligned}
        \frac{\partial l}{\partial \lambda} & = 1 - \sum_{i=1}^Q (F(\boldsymbol{v}))_i = 0,\\
        \frac{\partial l}{\partial F_i} & = \frac{\mathbb{E}\left[ y_i | \boldsymbol{v}\right]}{(F(v))_i} - \lambda + \mu_i = 0,  \, i \in \{ 1, \dots, Q \},\\
        \frac{\partial l}{\partial \mu_i} & = (F(\boldsymbol{v}))_i \geq 0,  \, i \in \{ 1, \dots, Q \}, \\
        \mu_i  & \geq 0,  \, i \in \{ 1, \dots, Q \},\\
        \mu_i \cdot (F(\boldsymbol{v}))_i & = 0,  \, i \in \{ 1, \dots, Q \},\\
    \end{aligned}
\end{equation*}
we can derive the optimal $(F(\boldsymbol{v}))_i$ for $i \in \{1, \dots, Q \}$ as:
\begin{equation*}
    \begin{aligned}
        (F^*(\boldsymbol{v}))_i 
        & = \mathbb{E}\left[ y_i | \boldsymbol{v}\right] = \mathbb{P}\left( y_i = 1 | \boldsymbol{v}\right),\\
        % & \overset{(i)}{=} \frac{p\left( \boldsymbol{v} | y_i = 1\right) \mathbb{P}(y_i = 1)}{\sum_{j =1}^Q p\left( \boldsymbol{v} | y_j = 1\right) \mathbb{P}(y_j = 1)}\\
        % & \overset{(ii)}{=} \frac{p( \boldsymbol{v}| y_i = 1 )}{ \sum_{j=1}^Q p(\boldsymbol{v}| y_j = 1)}.
    \end{aligned}
\end{equation*}
and 
\begin{equation*}
    % F^*(\boldsymbol{v}) = F^*(E(\boldsymbol{x})) = [\mathbb{P}(y_i = 1 | \boldsymbol{v})]_{i = 1}^Q = \mathbb{P}(\boldsymbol{y} | \boldsymbol{v}) = \mathbb{P}(\boldsymbol{y} | E(\boldsymbol{x})) \in \mathbb{R}^Q.
    F^*(\boldsymbol{v}) = F^*(E(\boldsymbol{x})) = [\mathbb{P}(y_i = 1 | \boldsymbol{v})]_{i = 1}^Q  \in \mathbb{R}^Q.
\end{equation*}
At that point, $\mathcal{L}_p(E, F)$ achieves its minimum value:
\begin{equation*}
    \begin{aligned}
        \mathcal{L}_p(E,F^*) 
        & =\mathbb{E}\left[L_p(F^*(E(\boldsymbol{x})), \boldsymbol{y})\right]\\
        &  = \mathbb{E}_{\boldsymbol{v} \sim p(\boldsymbol{v})} \left[ \sum_{i=1}^Q \mathbb{E}\left[ y_i | \boldsymbol{v}\right] \log{\frac{1}{(F^*(\boldsymbol{v}))_i}} \right]\\
        & =  \mathbb{E}_{\boldsymbol{v} \sim p(\boldsymbol{v})} \left[ \sum_{i=1}^Q \mathbb{P}(y_i = 1 | \boldsymbol{v}) \log{\frac{1}{\mathbb{P}(y_i = 1 | \boldsymbol{v})}} \right]\\
        & =   H(\boldsymbol{y} | \boldsymbol{v}) = H(\boldsymbol{y} | E(\boldsymbol{x})),
    \end{aligned}
\end{equation*}
completing the proof.
\end{proof}

\optimalencoder*
\begin{proof}
    We first prove the optimal concept probability encoder in Eqn.~\ref{eq: optimal E, entropy_condition from concept}.
    Because  $E_{prob} (\boldsymbol{x}) = [(E_{prob} (\boldsymbol{x}))_i]_{i = 1}^K \in \mathbb{R}^K$, and $L_c$ is the average binary cross entropy:
    \begin{equation*}
        L_c(E_{prob}(\boldsymbol{x}), \boldsymbol{c}) = \frac{1}{K}\sum_{i=1}^K c_i \log \frac{1}{(E_{prob}(\boldsymbol{x}))_i} + (1- c_i) \log \frac{1}{1 - (E_{prob}(\boldsymbol{x}))_i},
    \end{equation*}
    then we have 
    \begin{equation*}
        \begin{aligned}
            \mathbb{E}_S \left[ L_c(E_{prob}(\boldsymbol{x}), \boldsymbol{c}) \right]
            & = \frac{1}{K}\sum_{i=1}^K \mathbb{E}_{S} \left[  c_i \log \frac{1}{\left(E_{prob}(\boldsymbol{x})\right)_i} + (1- c_i) \log \frac{1}{1 - \left(E_{prob}(\boldsymbol{x})\right)_i}  \right] \\
            & = \frac{1}{K}\sum_{i=1}^K \mathbb{E}_{(\boldsymbol{x}, \boldsymbol{c}) \sim p(\boldsymbol{x}, \boldsymbol{c})} \left[  c_i \log \frac{1}{\left(E_{prob}(\boldsymbol{x})\right)_i} + (1- c_i) \log \frac{1}{1 - \left(E_{prob}(\boldsymbol{x})\right)_i}  \right] \\
            & = \frac{1}{K}\sum_{i=1}^K \mathbb{E}_{\boldsymbol{x}\sim \mathcal{D}_S} \left[ \mathbb{E}_{c_i \sim p(c_i | \boldsymbol{x})} \left[  c_i \log \frac{1}{\left(E_{prob}(\boldsymbol{x})\right)_i} + (1- c_i) \log \frac{1}{1 - \left(E_{prob}(\boldsymbol{x})\right)_i} \right] \right] \\
            & = \frac{1}{K}\sum_{i=1}^K \mathbb{E}_{\boldsymbol{x}\sim \mathcal{D}_S}  \left[ \mathbb{E}( c_i | \boldsymbol{x}) \log \frac{1}{\left(E_{prob}(\boldsymbol{x})\right)_i} + \left(1- \mathbb{E}( c_i | \boldsymbol{x}) \right) \log \frac{1}{1 - \left(E_{prob}(\boldsymbol{x})\right)_i}  \right]. \\
        \end{aligned}
    \end{equation*}
    Thus, the optimal concept probability encoder $\left(E_{prob}\right)_i$  for $i \in \{1, \dots, K \}$ should be
    \begin{equation*}
        \begin{aligned}
            (E_{prob}^{*})_i & = \underset{(E_{prob})_i}{\operatorname{\argmin}} \, \mathbb{E}_S \left[ L_c(E_{prob}(\boldsymbol{x}), \boldsymbol{c}) \right] \\
            & = \underset{(E_{prob})_i}{\operatorname{\argmin}} \, \frac{1}{K}\sum_{j=1}^K \mathbb{E}_{\boldsymbol{x}\sim \mathcal{D}_S}  \left[ \mathbb{E}( c_j | \boldsymbol{x}) \log \frac{1}{(E_{prob}(\boldsymbol{x}))_j} + \left(1- \mathbb{E}( c_j | \boldsymbol{x}) \right) \log \frac{1}{1 - (E_{prob}(\boldsymbol{x}))_j}  \right] \\
            & = \underset{(E_{prob})_i}{\operatorname{\argmin}} \, \mathbb{E}_{\boldsymbol{x}\sim \mathcal{D}_S}  \left[ \mathbb{E}( c_i | \boldsymbol{x}) \log \frac{1}{\left(E_{prob}(\boldsymbol{x}) \right)_i} + \left(1- \mathbb{E}( c_i | \boldsymbol{x}) \right) \log \frac{1}{1 - \left(E_{prob}(\boldsymbol{x}) \right)_i}  \right]. \\
            % & = \underset{(E_{prob})_i}{\operatorname{\argmax}} \, \mathbb{E}_{\boldsymbol{x}\sim \mathcal{D}_S}  \left[ \mathbb{E}( c_i | \boldsymbol{x}) \log {E^{i}_{prob}(\boldsymbol{x})} + \left(1- \mathbb{E}( c_i | \boldsymbol{x}) \right) \log \left({1 - E^{i}_{prob}(\boldsymbol{x})}\right)  \right] \\
        \end{aligned}
    \end{equation*}
    For any $(a, b) \in \mathbb{R}^2 \backslash(0,0)$, the function $y \rightarrow a \log (1-y) + b \log (y)$ achieves its maximum in $[0, 1]$ at $\frac{b}{a+b}$. 
    Applying this result, we derive the optimal value of $(E_{prob}(\boldsymbol{x}))_i$ for $i \in \{1, \dots, K \}$ as:
    \begin{equation*}
    (E^*_{prob}(\boldsymbol{x}))_i = \mathbb{E}\left(c_i | \boldsymbol{x}\right) = \mathbb{P}(c_i = 1 | \boldsymbol{x}),
    \end{equation*}
    and the optimal $E_{prob}(\boldsymbol{x})$ is given by
    \begin{equation*}
    \begin{aligned}
        E^*_{prob}(\boldsymbol{x}) & = [(E^{*}_{prob}(\boldsymbol{x}))_1, \dots, (E^{*}_{prob}(\boldsymbol{x}))_K]^{\mathsf{T}} \\
        & = [\mathbb{P}(c_1 = 1 | \boldsymbol{x}), \dots, \mathbb{P}(c_K = 1 | \boldsymbol{x})]^{\mathsf{T}}, \\
    \end{aligned}
    \end{equation*}
    % At this point, the global minimum of $\mathcal{L}_c(E_{prob})$ equals
    % \begin{equation*}
    %     \begin{aligned}
    %         \mathcal{L}_c(E^*_{prob}) 
    %         & = \mathbb{E}_S \left[ L_c(E^*_{prob}(\boldsymbol{x}), \boldsymbol{c}) \right] \\
    %         & = \frac{1}{K}\sum_{j=1}^K \mathbb{E}_{\boldsymbol{x}\sim \mathcal{D}_S}  \left[ \mathbb{E}( c_j | \boldsymbol{x}) \log \frac{1}{(E^{*}_{prob}(\boldsymbol{x}))_j} + \left(1- \mathbb{E}( c_j | \boldsymbol{x}) \right) \log \frac{1}{1 - (E^{*}_{prob}(\boldsymbol{x}))_j}  \right]\\
    %         & = \frac{1}{K}\sum_{j=1}^K \mathbb{E}_{\boldsymbol{x}\sim \mathcal{D}_S}  \left[ \mathbb{E}( c_j | \boldsymbol{x}) \log \frac{1}{\mathbb{E}( c_j | \boldsymbol{x})} + \left(1- \mathbb{E}( c_j | \boldsymbol{x}) \right) \log \frac{1}{1 - \mathbb{E}( c_j | \boldsymbol{x})}  \right] \\
    %         & = \frac{1}{K} \sum_{j=1}^K \mathbb{E}_{x \sim \mathcal{D}_s}\left[ H\left(\mathbb{P}( c_j | \boldsymbol{x})\right)\right],
    %     \end{aligned}
    % \end{equation*}
    completing the proof for Eqn.~\ref{eq: optimal E, entropy_condition from concept}.

    Since $E(\boldsymbol{x})$ is a function of $\boldsymbol{x}$, by the data processing inequality, we have 
    \begin{equation*}
        H(\boldsymbol{y}|E(\boldsymbol{x})) \geq H(\boldsymbol{y}|\boldsymbol{x}).
    \end{equation*}
    The objective function mentioned in Eqn.~\ref{eq:re_e} has the following lower bound:
    \begin{equation*}
        \begin{aligned}
            C(E) & \triangleq H(\boldsymbol{y} \mid E(\boldsymbol{x})) - \lambda_d \tilde{C}_d(E) \\
            & \geq H(\boldsymbol{y} \mid \boldsymbol{x}) - \lambda_d \max _{E^{\prime}} \tilde{C}_d\left(E^{\prime}\right).\\
        \end{aligned}
    \end{equation*}
    This equality holds if and only if $H\left(\boldsymbol{y} \mid E(\boldsymbol{x})\right) = H\left(\boldsymbol{y} \mid \boldsymbol{x}\right)$ and $\tilde{C}_d(E) = \max _{E^{\prime}} \tilde{C}_d\left(E^{\prime}\right)$.    
    Therefore, we only need to prove that the optimal value of $C(E)$ is equal to $H(\boldsymbol{y} \mid \boldsymbol{x}) - \lambda_d \max _{E^{\prime}} \tilde{C}_d\left(E^{\prime}\right)$ in order to prove that any global encoder $E^*$ satisfies  Eqn.~\ref{eq: optimal E, entropy_condition from Predictor}, and Eqn.~\ref{eq: optimal E, maximization_condition from Discriminator}. 

    We show that $C(E)$ can achieve its lower bound by considering the following encoder $E_0$: $E_0(\boldsymbol{x}) = P_{\boldsymbol{y}}(\cdot | \boldsymbol{x})$~\citep{zhao2017learning, CIDA}.
    % \begin{equation*}
    %     E_0(\boldsymbol{x}) = [\mathbb{P}(y_i = 1 \mid \boldsymbol{x})]_{i=1}^Q \in \mathbb{R}^Q.
    % \end{equation*}
    It can be checked that $H\left(\boldsymbol{y} \mid E_0(\boldsymbol{x})\right) = H\left(\boldsymbol{y} \mid \boldsymbol{x}\right)$ and $E_0(\boldsymbol{x}) \perp u$ which leads to $\tilde{C}_d(E_0) = \max _{E^{\prime}} \tilde{C}_d\left(E^{\prime}\right)$, completing the proof.
\end{proof}

\section{Experiments}

\subsection{Dataset Details}
\label{app:dataset}
\textbf{Waterbirds Datasets~\citep{sagawa2019distributionally}.} First, we incorporate the concepts from the CUB~\citep{wah2011caltech} dataset into the original Waterbirds dataset to make it compatible with concept-based models. Since the original Waterbirds dataset is a binary classification task (landbirds are always associated with land and waterbirds with water as source domain), we construct the target domain, Waterbirds-shift (backgroud shift data, the same construct method as CONDA~\citep{conda} Waterbirds dataset), by selecting images with opposite attributes (e.g., landbirds in water and waterbirds on land). 
This results in Waterbirds-2, a binary classification domain adaptation dataset.
Additionally, because the CUB dataset is inherently a multi-class classification task, we construct Waterbirds-200 by replacing the labels in the Waterbirds-2 dataset with the multi-class labels from CUB without modifying the data itself.
Finally, as the CUB dataset represents a natural domain shift relative to Waterbirds-200, we use the CUB training data as the source domain and retain the Waterbirds-shift images as the target domain to construct Waterbirds-CUB.

\textbf{MNIST Concepts.}
We selected 11 topology concepts \textit{[Ring, Line, Arc, Corner, Top-Curve, Semicircles, Triangle, Bottom-Curve, Top-Line, Wedge, Bottom-Line]} (initially generated by GPT-4~\citep{achiam2023gpt} and refined through manual screening) for the MNIST~\citep{mnist}, MNIST-M\citep{ganin2016domain}, SVHN~\citep{svhn}, and USPS~\citep{usps} digit datasets to evaluate the performance of our method. In addition, PCBMs~\citep{pcbm} can utilize the CLIP model (we tested with CLIP:RN50)~\citep{radford2021learning} to automatically generate concepts. To evaluate its effectiveness, we compare the concepts generated by CLIP with our predefined set of concepts. However, since the PCBM-generated concepts are stored in a concept bank and lack explicit relationships between classes and concepts, they cannot be directly used to evaluate our model.
% A summary of the pre-defined concepts used in our method, the $11$ concepts (recurse=1) from PCBM, and the $13$ concepts  (recurse=2) from PCBM is provided in the table below:
% \begin{table}[h]
% \centering
% \caption{Comparison of pre-defined and PCBM-generated concepts.}
% \label{tab:pcbmconcepts}
% \renewcommand{\arraystretch}{1.5}
% \resizebox{\linewidth}{!}{
% \begin{tabular}{|c|c|}
% \hline
% \textbf{Method}          & \textbf{Concepts}                                                                                              \\ \hline
% \textbf{Ours (11 Topology Concepts)} & Ring, Line, Arc, Corner, Top-Curve, Semicircles, Triangle, Bottom-Curve, Top-Line, Wedge, Bottom-Line \\ \hline
% \textbf{PCBM (11 Concepts)}  & one, number, pronounced one, nine, two, spelled two, seven, spelled three, abstraction, pronounced zero, five \\ \hline
% \textbf{PCBM (13 Concepts)}  & number, definite quantity, item, amount, company, self-employed, zero, merchandise, grammatical category, unit, positive identification, digit, size \\ \hline
% \end{tabular}}
% \end{table}

\begin{table}[H]
\centering
\renewcommand{\arraystretch}{1.5} 
\setlength{\tabcolsep}{5pt} 
\caption{Performance comparison across MNIST datasets using different concepts. The numbers $11$ and $13$ represent the concepts generated by PCBMs through once and twice recursive exploration of the ConceptNet~\citep{speer2017conceptnet} graph.}
\label{tab:results}
\resizebox{\linewidth}{!}{ 
\begin{tabular}{c|ccc|ccc|ccc}
\toprule
\textbf{Dataset}  & \multicolumn{3}{c|}{\textbf{MNIST $\to$ MNIST-M}} & \multicolumn{3}{c|}{\textbf{SVHN $\to$ MNIST}}      & \multicolumn{3}{c}{\textbf{MNIST $\to$ USPS}} \\ \midrule
\textbf{Concepts} & \textbf{11}    & \textbf{13}          & \textbf{Ours}          & \textbf{11}    & \textbf{13}          & \textbf{Ours}          & \textbf{11}    & \textbf{13}    & \textbf{Ours}          \\ \midrule
PCBM     & $13.795 \scriptstyle \pm 0.549$ & $11.906 \scriptstyle \pm 0.286$ & $29.660 \scriptstyle \pm 1.020$ & $11.350 \scriptstyle \pm 0.000$ & $11.350 \scriptstyle \pm 0.000$ & $21.323 \scriptstyle \pm 2.116$ & $13.337 \scriptstyle \pm 0.183$ & $14.117 \scriptstyle \pm 0.963$ & $15.54 \scriptstyle \pm 0.115$ \\ 
CONDA    & $9.754 \scriptstyle \pm 0.000$ & $9.754 \scriptstyle \pm 0.000$ & $9.754 \scriptstyle \pm 0.000$ & $9.800 \scriptstyle \pm 0.000$ & $9.800 \scriptstyle \pm 0.000$ & $9.800 \scriptstyle \pm 0.000$ & $17.887 \scriptstyle \pm 0.000$ & $17.887 \scriptstyle \pm 0.000$ & $17.887 \scriptstyle \pm 0.000$ \\ 
\textbf{\methodname (Ours)} & - & - & $\mathbf{95.24 \scriptstyle \pm 0.13}$ & - & - & $\mathbf{82.49 \scriptstyle \pm 0.27}$ & - & - & $\mathbf{96.01 \scriptstyle \pm 0.13}$ 
\\ \bottomrule
\end{tabular}}
\end{table}

\textbf{SkinCON Datasets~\citep{skincon}.} The SkinCON dataset is constructed using two existing datasets: Fitzpatrick 17k \citep{groh2021evaluating} and Diverse Dermatology Images (DDI) \citep{daneshjou2022disparities}. Both datasets are publicly available for scientific, non-commercial use. Fitzpatrick 17k, which was scraped from online atlases, contains a higher level of noise compared to DDI, making domain adaptation on Fitzpatrick 17k more challenging. However, due to the small size of the DDI dataset, we exclusively use Fitzpatrick 17k while excluding non-skin images (those with unknown skin tone types or labels not consider by SkinCON).

\subsection{Experimental Details}
\label{app:experimental_detials}

\textbf{Model and Optimization Details.} We use ResNet-50 \citep{He_Zhang_Ren_Sun_2016} for the Waterbirds dataset and ResNet-18 for the MNIST and SkinCON datasets. The hyperparameters are summarized in Table~\ref{tab:cuda-hyper}. All DA baselines, CBMs~\citep{cbm}, and CEMs~\citep{cem} share the same backbone as our approach for fair comparison. Zero-shot serves as the naive baseline for CONDA~\citep{conda}, where it uses the prompt ``an image of [class]'' to generate predictions. CONDA improves upon this by combining the zero-shot predictor with a linear-probing predictor to obtain pseudo-labels for the test batch, followed by test-time adaptation. Both PCBM and CONDA require pretrained models to construct the concept bank. Therefore, we utilize CLIP:ViT-L-14~\citep{radford2021learning} for the Waterbirds dataset consistent with CONDA, and CLIP:RN50 for the MNIST and SkinCON datasets. Our code will be available at \href{https://github.com/xmed-lab/CUDA}{https://github.com/xmed-lab/CUDA}.

\begin{table}[htbp]
\centering
\caption{Hyper-parameters of CUDA during training.}
\label{tab:cuda-hyper}
\begin{tabular}{lccccc}
\toprule
            & Leaning Rate & Weight Decay & $\lambda_c$ & $\lambda_d$ & Relax Threshold \\
\midrule
Waterbirds-2 &    $1e$-$3$          &     $4e$-$5$       &      $5$        &    $0.3$    &       $0.5$        \\
Waterbirds-200/CUB  &    $1e$-$3$          &     $4e$-$5$       &     $5$         &     $0.3$     &    $0.7$        \\
MNIST $\to$ MNIST-M/USPS        &    $1e$-$3$          &     $1e$-$5$       &      $5$        &      $0.1$      &     $0.6$      \\
SVHN $\to$ MNIST        &    $1e$-$3$          &     $1e$-$5$       &      $5$        &      $0.1$      &     $0.7$      \\
I-II $\to$ III-IV        &       $1e$-$3$       &    $4e$-$5$        &     $10$         &    $0.1$      &     $0.3$        \\
III-IV $\to$ V-VI/I-II        &     $1e$-$3$        &     $4e$-$5$       &     $10$         &    $0.1$      &     $0.7$        \\
\bottomrule
\end{tabular}
\label{tab:hyperparams}
\end{table}

\textbf{Naive Baseline.} DA methods and the concept-driven paradigm of CBMs cannot be naively combined. In our naive baseline, we extend the DA model by adding a linear layer to its feature output layer to predict concepts, incorporating the concept loss into the original DA loss. However, as shown in Table~\ref{tab:naive-baseline}, this approach fails to effectively capture concept information and performs worse than the original CBMs method. This highlights that the standard DA structure is not inherently suited for learning concepts and fails to leverage the benefits of concepts to improve domain alignment.

\begin{table}[htbp]
\centering
\caption{Performance of concept-based methods on both concept learning and classification across different datasets. CEM (w/o R.) indicates without RandInt. Naive refers our naive combination baseline.
We mark the best result with \textbf{bold face} and the second best results with \underline{underline}. 
Average accuracy is calculated over every three datasets of the same type images.}
\label{tab:naive-baseline}
\resizebox{\linewidth}{!}{
\begin{tabular}{l|ccc|ccc|ccc|c}
\toprule
\textbf{Datasets}           & \multicolumn{3}{c|}{\textbf{Waterbirds-2}} & \multicolumn{3}{c|}{\textbf{Waterbirds-200}} & \multicolumn{3}{c|}{\textbf{Waterbirds-CUB}} & {\textbf{AVG}} \\ \cmidrule(lr){1-10}
\textbf{Metrics}                         & Concept         & Concept F1     & Class          & Concept         & Concept F1     & Class          & Concept         & Concept F1     & Class     & {\textbf{ACC}}  \\ \midrule
CEM                       & $94.14 \scriptstyle \pm 0.13$ & $81.74 \scriptstyle \pm 0.39$ & $70.27 \scriptstyle \pm 1.70$ & $93.68 \scriptstyle \pm 0.10$ & $81.22 \scriptstyle \pm 0.64$ & $62.26 \scriptstyle \pm 1.11$ & $93.64 \scriptstyle \pm 0.08$ & $80.08 \scriptstyle \pm 0.34$ & \underline{$66.48 \scriptstyle \pm 0.81$} & $66.34$ \\ 
CEM (w/o R.)                & \underline{$94.17 \scriptstyle \pm 0.14$} & $81.96 \scriptstyle \pm 0.30$ & $69.45 \scriptstyle \pm 2.15$ & \underline{$93.76 \scriptstyle \pm 0.20$} & $81.04 \scriptstyle \pm 0.82$ & $63.56 \scriptstyle \pm 1.25$ & \underline{$93.66 \scriptstyle \pm 0.14$} & $79.80 \scriptstyle \pm 0.36$ & $65.89 \scriptstyle \pm 0.51$ & $66.30$ \\ 
CBM                       & $93.60 \scriptstyle \pm 0.20$ & \underline{$83.89 \scriptstyle \pm 0.49$} & \underline{$74.81 \scriptstyle \pm 2.16$} & $93.50 \scriptstyle \pm 0.16$ & \underline{$83.14 \scriptstyle \pm 0.98$} & \underline{$63.89 \scriptstyle \pm 1.16$} & $93.40 \scriptstyle \pm 0.14$ & \underline{$82.10 \scriptstyle \pm 0.48$} & $63.89 \scriptstyle \pm 1.00$ & \underline{$67.53$} \\ 
Naive & $85.41 \scriptstyle \pm 0.17$ & $71.86 \scriptstyle \pm 0.19$ & $66.83 \scriptstyle \pm 2.96$ & $88.20 \scriptstyle \pm 0.04$ & $73.96 \scriptstyle \pm 0.16$ & $63.51 \scriptstyle \pm 0.32$ & $88.11 \scriptstyle \pm 0.05$ & $73.56 \scriptstyle \pm 0.09$ & $ 60.72 \scriptstyle \pm 0.27$ & $63.69$ \\
\textbf{\methodname (Ours)}      & $\mathbf{94.63 \scriptstyle \pm 0.05}$ & $\mathbf{84.97 \scriptstyle \pm 0.15}$ & $\mathbf{92.90 \scriptstyle \pm 0.31}$ & $\mathbf{95.15 \scriptstyle \pm 0.05}$ & $\mathbf{85.06 \scriptstyle \pm 0.19}$ & $\mathbf{75.87 \scriptstyle \pm 0.31}$ & $\mathbf{94.58 \scriptstyle \pm 0.07}$ & $\mathbf{82.81 \scriptstyle \pm 0.19}$ & $\mathbf{74.66 \scriptstyle \pm 0.19}$ & $\mathbf{81.15}$ \\ \bottomrule
\end{tabular}}
\end{table}

\textbf{Lipschitz Continuity.} 
Lemma~\ref{lem: disagreement between aribitary functions in hypothesis H} assumes that all hypotheses are $L$-Lipschitz continuous for some constant $L> 0$. While this assumption might seem restrictive at first glance, it is actually quite reasonable. In practice, hypotheses are often implemented using neural networks (e.g., our label predictor), where the fundamental components -- such as linear layers and activation functions are naturally Lipschitz continuous. Therefore, this assumption is not overly strong and is typically satisfied~\citep{shen2018wasserstein}.

\subsection{Framework Details and Training Algorithm}
\label{app:algo}
Our adversarial training process consists of two main steps. First, we optimize the domain discriminator using the original discriminator loss (Eqn.~\ref{eq:binary_domain_index_loss}) and then calculate the relaxed discriminator loss (Eqn.~\ref{eq:relaxed_discriminator_loss}). Second, we optimize the concept embedding encoder and label predictor. The overall framework is illustrated in \figref{fig:main-figurev2}, and the detailed training process is outlined in Algorithm~\ref{algo:training}. 
The objective is to learn both the labels and concepts in the target domain, given source and target domain images as input.
The training procedure alternates between~\eqnref{eq:min_D} and~\ref{eq:min_EF} with adversarial training using~\eqnref{eq:prediction_loss}$\sim$\ref{eq:relaxed_discriminator_loss}. 
During inference, we predict the target domain class label $\hat{\boldsymbol{y}} = F(E(\boldsymbol{x}))$ and concepts $\hat{\boldsymbol{c}} = E_{prob}(\boldsymbol{x})$. The code will be released upon the acceptance of this work.

\begin{figure*}[htbp]
    \centering
    \includegraphics[width=\textwidth]{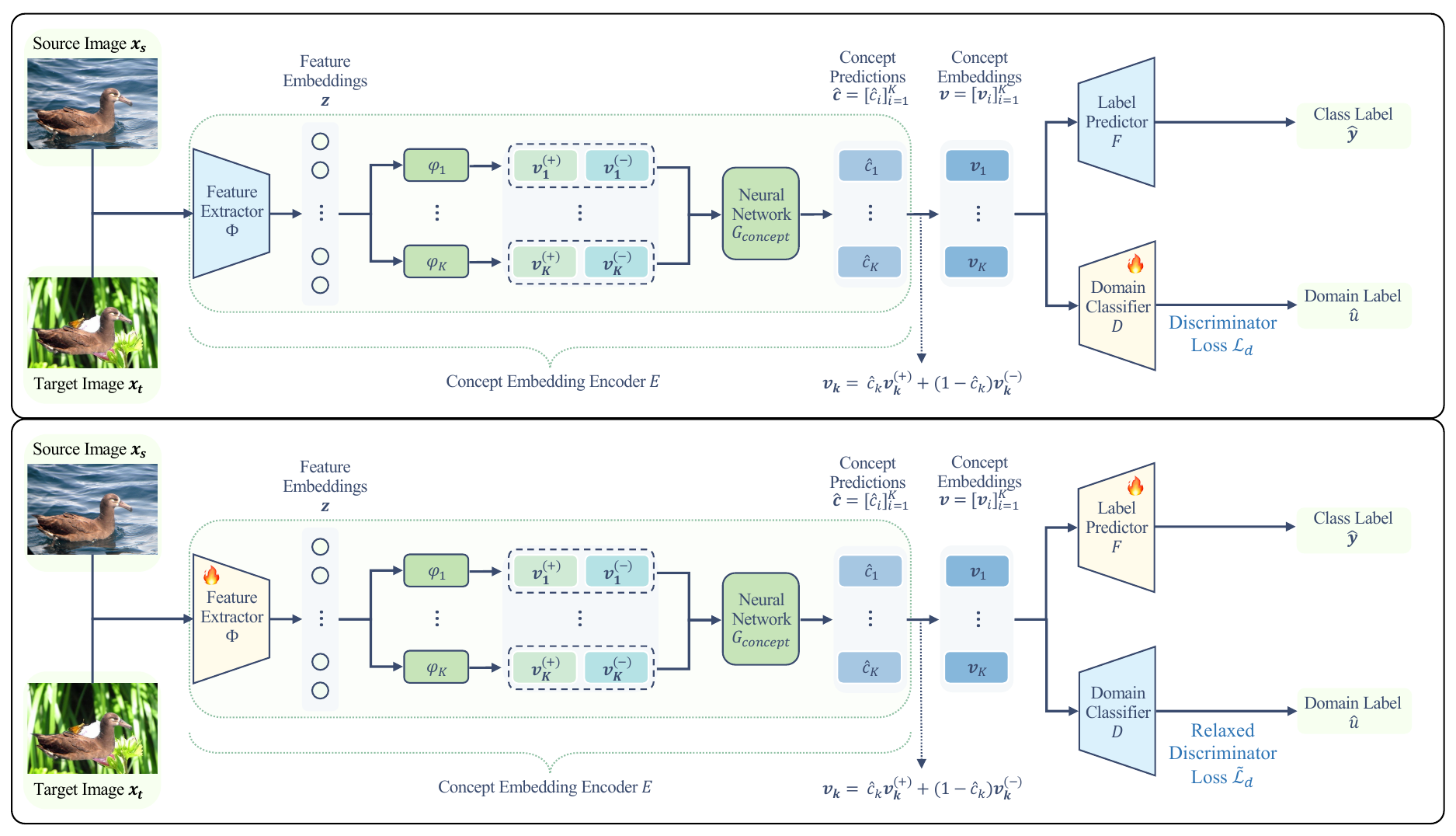} 
    \caption{The full \methodname framework. It processes source and target domain images to learn feature embeddings, from which positive $\boldsymbol{v}_i^{(+)}$ and negative $\boldsymbol{v}_i^{(-)}$ embeddings are derived. These embeddings are passed through $G_{\text{concept}}$ to compute concept predictions $\hat{\boldsymbol{c}}$ and construct final concept embeddings $\boldsymbol{v}$. Adversarial training alternates between optimizing the domain classifier (discriminator) with~\eqnref{eq:min_D} and optimizing the concept embedding encoder and label predictor with~\eqnref{eq:min_EF}, guided by adversarial training using~\eqnref{eq:prediction_loss}$\sim$\ref{eq:relaxed_discriminator_loss}. }
    \label{fig:main-figurev2}
\end{figure*}

\section{Limitations and Future Works}
\label{app:limitations}
Our approach falls short of the state-of-the-art UDA method GH++~\citep{gh++} on the Waterbirds 200 classification task. This may be attributed to GH++'s use of gradient harmonization, which balances the classification and domain alignment tasks, particularly benefiting scenarios with a large number of categories. Exploring how to leverage gradient harmonization to balance concept learning in our framework is an interesting direction for future work. We plan to investigate the related theoretical foundations and explore how it can be effectively integrated into our method in future work.
Additionally, our approach achieves competitive results and stable performance without using the most advanced DA backbone. We believe that plugging our method into a more sophisticated backbone could lead to even more remarkable performance, which we leave as future work.
Lastly, the domain shift studied in this work primarily involves shifts related to background or other label-agnostic factors. In the future, we aim to extend our method to address other types of domain shifts, broadening its applicability to other scenarios.

\begin{algorithm}[htbp]
\caption{Pseudocode of {\methodname} Training}
\label{algo:training}
\textbf{Input:} Source domain data $ \mathcal{S} = \{(\boldsymbol{x}_i^s, \boldsymbol{y}_i^s, \boldsymbol{c}_i^s)\}_{i = 1}^n$, target domain data $\mathcal{T} = \{\boldsymbol{x}_j^t\}_{j = 1}^m$, feature extractor $\Phi$, concept embedding generator $\varphi$, concept probability network $G_{\text{concept}}$, 
label predictor $F$, domain discriminator $D$, learning rates $\alpha_1$, $\alpha_2$, concept loss weight $\lambda_c$, domain discriminator loss weight $\lambda_d$, concept number $K$, relaxation threshold $\tau$. \\
\textbf{Output:} Predicted target labels and concepts $\{\hat{\boldsymbol{y}}_t, \hat{\boldsymbol{c}}_t\}$.
\begin{algorithmic}[1]
\WHILE{not converged}
    \STATE Sample minibatches $\mathcal{X}_s \subset \mathcal{S}$ and $\mathcal{X}_t \subset \mathcal{T}$.
    \FOR{each domain $d \in \{s, t\}$ (\textit{source} or \textit{target})}
        \STATE Extract feature embeddings: $\boldsymbol{z}_d \leftarrow \Phi(\boldsymbol{x}_d)$.
        % \STATE Initialize $\boldsymbol{v}_d = []$.
        \FOR{$k = 1$ to $K$}
            \STATE Generate positive and negative concept embeddings: 
            $[\boldsymbol{v}_{d,k}^{(+)}, \boldsymbol{v}_{d,k}^{(-)}] \leftarrow \varphi_k(\boldsymbol{z}_d)$.
            \STATE Predict concept probabilities: \\ $\hat{c}_{d,k} \leftarrow G_{\text{concept}}([\boldsymbol{v}_{d,k}^{(+)}, \boldsymbol{v}_{d,k}^{(-)}])$.
            \STATE Combine positive and negative embeddings: \\
            $\boldsymbol{v}_{d,k} \leftarrow \hat{c}_{d,k} \cdot \boldsymbol{v}_{d,k}^{(+)} + (1 - \hat{c}_{d,k}) \cdot \boldsymbol{v}_{d,k}^{(-)}$.
            % \STATE Append $\boldsymbol{v}_{d,k}$ to $\boldsymbol{v}_d$.
        \ENDFOR
    \ENDFOR
    \STATE Predict source class labels: $\hat{\boldsymbol{y}}_s \leftarrow F(\boldsymbol{v}_s)$.
    \STATE Predict domain labels: $\hat{u}_s \leftarrow D(\boldsymbol{v}_s)$, $\hat{u}_t \leftarrow D(\boldsymbol{v}_t)$.
    \STATE Compute $\mathcal{L}_p(\hat{\boldsymbol{y}}_s, \boldsymbol{y}_s)$ based on Eqn.~\ref{eq:prediction_loss}.
    \STATE Compute $\mathcal{L}_{c}(\hat{\boldsymbol{c}}_s, \boldsymbol{c}_s)$ based on Eqn.~\ref{eq:concept_prob_loss}.
    \STATE Compute $\mathcal{L}_d(\hat{u}_{\theta}, u_{\theta})$, $\theta \in \{s, t\}$ based on Eqn.~\ref{eq:binary_domain_index_loss}.
    \STATE Relax the domain discriminator loss to get $\tilde{\mathcal{L}}_d$ based on Eqn.~\ref{eq:relaxed_discriminator_loss}.
    \STATE $\mathcal{L}_{total} \leftarrow \mathcal{L}_p + \lambda_c \mathcal{L}_c - \lambda_d \tilde{\mathcal{L}}_d$
    \STATE Update $D$ to minimize $\mathcal{L}_{d}$ with learning rate $\alpha_1$.
    \STATE Update $\Phi$, $\varphi$, $G_{\text{concept}}$, and $F$ to minimize $\mathcal{L}_{total}$ with learning rate $\alpha_2$.
\ENDWHILE
\STATE \textbf{return} $\{\hat{\boldsymbol{y}}_t, \hat{\boldsymbol{c}}_t\}$
\end{algorithmic}
\end{algorithm}

%%%%%%%%%%%%%%%%%%%%%%%%%%%%%%%%%%%%%%%%%%%%%%%%%%%%%%%%%%%%%%%%%%%%%%%%%%%%%%%
%%%%%%%%%%%%%%%%%%%%%%%%%%%%%%%%%%%%%%%%%%%%%%%%%%%%%%%%%%%%%%%%%%%%%%%%%%%%%%%

\end{document}